\documentclass[11pt]{article}

\usepackage[utf8]{inputenc}   
\usepackage[T1]{fontenc}
\usepackage{lmodern}          
\usepackage{geometry}         
\usepackage{microtype}        
\usepackage{aliascnt}

\usepackage{mathtools, amssymb, amsthm, bbm}
\usepackage{enumitem}

\usepackage[ruled,vlined]{algorithm2e}
\usepackage[noend]{algpseudocode}

\usepackage{graphicx}
\usepackage{booktabs}
\usepackage{caption}
\usepackage{subcaption}

\usepackage{threeparttable}

\usepackage{xcolor}

\usepackage{xcolor}
\definecolor{utburnt}{HTML}{BF5700}
\definecolor{charcoal}{HTML}{333333}
\definecolor{goldenrod}{HTML}{D99E00}
\definecolor{tealblue}{HTML}{007C91}

\usepackage[backref=page,bookmarks]{hyperref}
\hypersetup{
  colorlinks=true,
  linkcolor=utburnt,
  citecolor=tealblue,
  urlcolor=tealblue
}

\usepackage[nameinlink,capitalize]{cleveref}
\usepackage[numbers,sort&compress]{natbib}

\theoremstyle{plain}

\newtheorem{theorem}{Theorem}[section]

\newaliascnt{lemma}{theorem}
\newtheorem{lemma}[lemma]{Lemma}
\aliascntresetthe{lemma}
\crefname{lemma}{lemma}{lemmas}
\Crefname{lemma}{Lemma}{Lemmas}

\newaliascnt{corollary}{theorem}

\aliascntresetthe{corollary}
\crefname{corollary}{corollary}{corollaries}
\Crefname{corollary}{Corollary}{Corollaries}

\newaliascnt{proposition}{theorem}
\newtheorem{proposition}[proposition]{Proposition}
\aliascntresetthe{proposition}
\crefname{proposition}{proposition}{propositions}
\Crefname{proposition}{Proposition}{Propositions}

\newaliascnt{fact}{theorem}
\newtheorem{fact}[fact]{Fact}
\aliascntresetthe{fact}
\crefname{fact}{fact}{facts}
\Crefname{fact}{Fact}{Facts}

\newaliascnt{conjecture}{theorem}

\aliascntresetthe{conjecture}
\crefname{conjecture}{conjecture}{conjectures}
\Crefname{conjecture}{Conjecture}{Conjectures}

\newaliascnt{assumption}{theorem}
\newtheorem{assumption}[assumption]{Assumption}
\aliascntresetthe{assumption}
\crefname{assumption}{assumption}{assumptions}
\Crefname{assumption}{Assumption}{Assumptions}

\newaliascnt{definition}{theorem}
\newtheorem{definition}[definition]{Definition}
\aliascntresetthe{definition}
\crefname{definition}{definition}{definitions}
\Crefname{definition}{Definition}{Definitions}

\newaliascnt{remark}{theorem}
\newtheorem{remark}[remark]{Remark}
\aliascntresetthe{remark}
\crefname{remark}{remark}{definitions}
\Crefname{remark}{Remark}{Definitions}

\numberwithin{equation}{section}

\def\A{\mathcal{A}}

\def\F{\mathcal{C}}
\def\D{\mathcal{D}}

\def\F{\mathcal{F}}

\def\H{\mathcal{H}}
\def\I{\mathcal{I}}

\def\L{\mathcal{L}}
\def\X{\mathcal{X}}

\def\S{\mathbb{S}}

\newcommand*{\N}{{\mathbb{N}}}

\newcommand*{\R}{{\mathbb{R}}}

\let\eps\epsilon
\let\phi\varphi

\DeclareMathOperator*{\pr}{\mathbb{P}}

\DeclareMathOperator*{\E}{\mathbb{E}}

\DeclareMathOperator*{\argmax}{arg\,max}

\DeclareMathOperator{\poly}{poly}

\DeclareMathOperator{\sign}{sign}

\DeclareMathOperator{\ind}{\mathbbm{1}}

\newcommand{\opt}{\mathsf{opt}}

\newcommand{\cube}[1]{\{\pm 1\}^{#1}}

\newcommand{\ignore}[1]{} 

\newcommand*{\w}{\mathbf{w}}

\newcommand*{\x}{\mathbf{x}}
\newcommand*{\z}{\mathbf{z}}

\newcommand*{\Dtarget}{{\D^*}}

\newcommand{\concept}{f}

\newcommand{\pup}{p_{\mathrm{up}}}
\newcommand{\pdown}{p_{\mathrm{down}}}

\newcommand{\nats}{\mathbb{N}}

\newcommand{\Gauss}{\mathcal{N}}

\newcommand{\slackamp}{\sigma}

\newcommand{\Dlabeled}{\bar{\D}}

\newcommand{\Sinplabeled}{\bar{S}_{\mathrm{inp}}}

\newcommand{\Sclnlabeled}{\bar{S}_{\mathrm{cln}}}

\newcommand{\opttotal}{\opt_{\mathrm{total}}}

\renewcommand{\P}{\mathcal{P}}

\newcommand{\vt}{\mathbf{t}}

\newcommand{\mI}{\mathbf{I}}
\newcommand{\mW}{\mathbf{W}}

\newcommand{\fup}{f_{\mathrm{up}}}
\newcommand{\fdown}{f_{\mathrm{down}}}

\newcommand{\Sfarin}{S_{\mathrm{far}\text{-}\mathrm{in}}}
\newcommand{\Sfarout}{S_{\mathrm{far}\text{-}\mathrm{out}}}
\newcommand{\Sin}{S_{\mathrm{in}}}
\newcommand{\Sout}{S_{\mathrm{out}}}

\newcommand{\gup}{g_{\mathrm{up}}}
\newcommand{\gdown}{g_{\mathrm{down}}}

\newcommand{\gsa}{\mathsf{GSA}}

\newcommand{\mm}{\mathsf{MM}}

\newcommand{\coef}{\mathsf{coef}}

\DeclareMathOperator{\dist}{\mathsf{dist}}

\definecolor{burntorange}{rgb}{0.8, 0.33, 0.0}

\title{Sandwiching Polynomials for Geometric Concepts\\ with Low Intrinsic Dimension}
\author{%
    \begin{tabular}{ccc}
        \begin{tabular}{c}
            Adam R. Klivans\thanks{Supported by NSF award AF-1909204 and the NSF AI Institute for Foundations of Machine Learning (IFML).}\\ \texttt{klivans@utexas.edu} \\ UT Austin 
        \end{tabular} & 
        \begin{tabular}{c}
             Konstantinos Stavropoulos\thanks{Supported by the NSF AI Institute for Foundations of Machine Learning (IFML), and by the Apple Scholars in AI/ML PhD fellowship.} \\ \texttt{kstavrop@utexas.edu} \\ UT Austin
        \end{tabular}
        & 
         \begin{tabular}{c}
              Arsen Vasilyan\thanks{Supported by the NSF AI Institute for Foundations of Machine Learning (IFML).} \\ \texttt{arsenvasilyan@gmail.com} \\ UT Austin
         \end{tabular}
    \end{tabular}
}
\date{\today}

\begin{document}

\maketitle

\begin{abstract}%
Recent work has shown the surprising power of low-degree {\em sandwiching} polynomial approximators in the context of challenging learning settings such as learning with distribution shift, testable learning, and learning with contamination.  A pair of sandwiching polynomials approximate a target function in expectation while also providing \emph{pointwise} upper and lower bounds on the function's values.  In this paper, we give a new method
for constructing low-degree sandwiching polynomials that yield greatly improved degree bounds for several fundamental function classes and marginal distributions. In particular, we obtain degree $\mathrm{poly}(k)$ sandwiching polynomials for functions of $k$ halfspaces under the Gaussian distribution, improving exponentially over the prior $2^{O(k)}$ bound.  More broadly, our approach applies to function classes that are low-dimensional and have smooth boundary.

In contrast to prior work, our proof is relatively simple and directly uses the smoothness of the target function's boundary to construct sandwiching Lipschitz functions, which are amenable to results from high-dimensional approximation theory.  For low-dimensional polynomial threshold functions (PTFs) with respect to Gaussians, we obtain doubly exponential improvements without applying the FT-mollification method of Kane used in the best previous result. 
\end{abstract}

\section{Introduction}

Polynomial approximation has played a central role in computational learning theory for over three decades \citep{linial1993constant,kalai2008agnostically,klivans2008learning,chandrasekaran2024smoothed,pinto2025learning,koehler2025constructive}.  For example, the work of \cite{kalai2008agnostically} showed that if function class ${\cal C}$ admits low-degree polynomial approximators with respect to the input marginal $\D$, then ${\cal C}$ can be efficiently learned in the agnostic model of learning.  Moreover, subsequent results provide evidence that polynomial approximation is essentially \emph{necessary} for efficient agnostic learning with near-optimal error guarantees \citep{dachman2014approximate,diakonikolas2021optimality}.  In this sense, understanding whether a function class admits a good low-degree polynomial approximator is the key step in developing a provably efficient agnostic learner. 

Sandwiching polynomials impose a stronger, more structured form of approximation.  Fix a distribution ${\cal D}$.  A \emph{sandwiching pair} consists of two polynomials $\pdown$ and $\pup$ such that (i) they approximate $f$ on average over $D$, and (ii) they \emph{pointwise bound} the function for every input $\x$, namely
\[
\pdown(\x)\ \le\ f(\x)\ \le\ \pup(\x)\qquad\text{for all }\x.
\]
Thus, sandwiching polynomials are a special type of approximating polynomials: beyond small average error, the polynomial approximators are required to never ``cross'' the target function.  Recent work has shown that the existence of low-degree sandwiching polynomials is quite powerful and yields efficient algorithms for several notoriously challenging learning tasks including testable learning, learning under distribution shift, and learning with contamination (\cite{rubinfeld2022testing, gollakota2022moment, goel2024tolerant, chandrasekaran2024efficient,klivans2023testable, klivans2025the}).  

Despite their growing importance, the sandwiching degree of many natural function classes is still poorly understood. For example, for the class of functions of $k$ halfspaces with respect to the Gaussian distribution, the best previously known degree bound (prior to this work) was $2^{O(k)}$ \citep{gopalan2010fooling,gollakota2022moment}. 

Here we give a general methodology for constructing sandwiching polynomials, leading to substantially improved degree bounds in several settings.  As a concrete example, our approach yields degree-$\widetilde{O}(k^5)$ sandwiching polynomials for the class of functions of $k$ halfspaces under the Gaussian distribution, improving exponentially over the previous $2^{O(k)}$ bound of \cite{gopalan2010fooling,gollakota2022moment}.  We obtain a {\em doubly} exponential degree improvement for low-dimensional polynomial threshold functions (PTFs).   More generally, our approach applies to any function class that satisfies two broad conditions: (i) \emph{low-dimensionality}, meaning each function depends only on a projection onto a low-dimensional subspace (e.g., the span of $k$ halfspace normals has dimension at most $k$), and (ii) a \emph{$\sigma$-smooth boundary}, meaning that any $\rho$-neighborhood of the decision boundary has probability mass at most $\sigma\rho$.  

In addition, our approach is not limited to Gaussian marginals: we handle a wide range of distributions far beyond the Gaussian case, namely arbitrary strictly subexponential distributions.  See \Cref{table:comparison} for a summary of our sandwiching-degree bounds and the prior state of the art.  Our new degree bounds for the above function classes lead to state-of-the-art running times for corresponding algorithms in testable learning, learning with distribution shift, and learning with heavy contamination.

\subsection{Our Results}

Our main result concerns the sandwiching degree of concepts with low intrinsic dimension and smooth boundary with respect to a strictly subexponential distribution. The sandwiching degree of a concept $f$ relative to a $\D$ is formally defined as follows.
\begin{definition}[Sandwiching Degree]\label{definition:sandwiching}
    For a function $f:\R^d\to \cube{}$, a distribution $\D$ over $\R^d$, $\eps\in(0,1)$ and $s\ge 1$, we say that the $(\eps,s)$-sandwiching degree of $f$ with respect to $\D$ is $\ell$ if there are polynomials $\pup,\pdown$ of degree at most $\ell$ such that the following conditions hold:
    \begin{enumerate}
        \item $\pdown(\x)\le f(\x) \le \pup(\x)$ for all $\x\in\R^d$.
        \item $\|\pup-\pdown\|_{\D,s} := (\E_{\x\sim\D}[|\pup(\x)-\pdown(\x)|^s])^{1/s} \le \eps$.
    \end{enumerate}
    The sandwiching degree of a concept class $\F$ is defined as the supremum of the corresponding sandwiching degrees of its elements.
\end{definition}

\begin{theorem}[Main Theorem]\label{theorem:main-intro}
    The $(\eps,s)$-sandwiching degree of concepts with intrinsic dimesion $k$ and $\sigma$-smooth boundary with respect to a $\gamma$-strictly subexponential distribution $\D$ is:
\[
        \ell(\eps,s)
        \;\le\;
        \widetilde{O}\!\left(
        \frac{\sigma k^{3/2} s}{(\eps/2)^{s+1}}
        \right)^{1+1/\gamma}.
\]
\end{theorem}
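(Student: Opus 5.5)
Since $f$ depends only on $W\x$ for a $k$-dimensional projection $W$, I reduce to $\R^k$. I take the boundary-smoothness assumption in the form $\pr_{\x\sim\D}[\dist(\x,\partial f)\le\rho]\le\sigma\rho$. The plan has three steps: build Lipschitz sandwiching functions, approximate them by polynomials from above and below, and tune the parameters.

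\textbf{Lipschitz sandwich.} Write $g(\z)=f(W^\top\z)$ on $\R^k$. For a scale $\rho>0$ define
\[
\fup(\z)=\min\Bigl(1,\,-1+\tfrac{2}{\rho}\dist(\z,\{g=1\})\Bigr)\text{ reflected},
\]
that is, $\fup=1$ on the closed $\rho$-neighborhood of $\{g=1\}$ and it decays linearly to $-1$ at distance $\rho$ beyond it. Symmetrically, $\fdown=-1$ on the $\rho$-neighborhood of $\{g=-1\}$. Both functions are $(2/\rho)$-Lipschitz, bounded in $[-1,1]$, and satisfy $\fdown\le g\le\fup$. Moreover $\fup\ne\fdown$ only within distance $\rho$ of the boundary, so $\E|\fup-\fdown|^s\le 2^s\sigma\rho$. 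Pulled back via $W$, these are Lipschitz functions on $\R^d$ depending on $k$ coordinates.

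\textbf{Polynomial approximation from one side.} This is the main obstacle. I need a polynomial $p\ge\fup$ everywhere with $\|p-\fup\|_{\D,s}$ small, and similarly $q\le\fdown$. The plan is to use a high-dimensional Jackson-type theorem for Lipschitz functions on $\R^k$. First take $P$ with $|P-\fup|\le\delta$ on a ball of radius $R$, where the degree scales like $\sqrt{k}\,R L/\delta$ up to logs, with $L=2/\rho$. The $k^{3/2}$ in the statement presumably arises from the dimension factor in this approximation combined with the norm conversion; I would accept that from the approximation theory. To get a one-sided approximator, I set $p=P+\delta+Q$, where $Q$ is a nonnegative polynomial, e.g.\ $Q=c(\|\z\|^2/R^2)^{m}$, that dominates $|P|+2$ outside the ball. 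Because $P$ has degree $\ell$, its growth outside the ball is at most $(\|\z\|/R)^{\ell}$ times its maximum on the ball, so $m\approx\ell$ suffices. The error contributed by $Q$ in $L_s(\D)$ is controlled by the strictly subexponential tail $\pr[\|\z\|>t]\le e^{-t^{1+\gamma}}$: choosing $R\approx(\ell\log)^{1/(1+\gamma)}$ makes $\E[Q^s]$ negligible. Plugging in $R$ then gives $\ell\approx (\sqrt k L)\,\ell^{1/(1+\gamma)}/\delta$, which solves to $\ell\approx(\sqrt k L/\delta)^{1+1/\gamma}$ and yields the exponent $1+1/\gamma$.

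\textbf{Parameter choice.} By Minkowski,
\[
\|\pup-\pdown\|_{\D,s}\le \|\fup-\fdown\|_{\D,s}+O(\delta)\le 2(\sigma\rho)^{1/s}+O(\delta).
\]
I set $\rho=(\eps/2)^{s}/(c\,\sigma)$ and $\delta=\eps/c$, so that $L\approx\sigma/(\eps/2)^s$. The degree then becomes $\widetilde O\bigl(\sigma k^{3/2}s/(\eps/2)^{s+1}\bigr)^{1+1/\gamma}$, where the extra $s$ and $\eps$ factors come from converting $L_\infty$ error on the ball and the $L_s$ tail moments. The delicate point I would check most carefully is the tail estimate for $\E[Q^s]$ and $\E[|P|^s\ind\{\|\z\|>R\}]$ with $s$-th moments; I would try the bound $|P(\z)|\le (2\|\z\|/R)^{\ell}\max_{B_R}|P|$ together with subexponential moment bounds $\E\|\z\|^{s\ell}\le (Cs\ell)^{s\ell/(1+\gamma)}$.
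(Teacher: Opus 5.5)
Your plan is the paper's proof. The paper also interpolates between $f$ and $f^{+\rho}$ with a $(2/\rho)$-Lipschitz clipped distance function, with $\rho=(\eps/2)^s/\sigma$ and $\dist(\x,\Sfarout)-\dist(\x,\Sin)$ playing the role of your single distance. It then takes a Jackson approximator $p_1$ on $B_R$ and sets $\pup=p_1+p_2+\eps$ with $p_2=\eps(2\|\x\|_2/R)^{2\ell_2}$, which is your $P+\delta+Q$. The passage from $\R^k$ to $\R^d$ is via $P(\mW\x)$, as you say.

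Your two technical deviations are sound and slightly cleaner. First, the Chebyshev extremal bound along lines through the origin, $|P(\z)|\le\max_{B_R}|P|\cdot(2\|\z\|_2/R)^{\ell}$, replaces the growth bound from \cite{ben2018classical}. That bound carries an extra $(d\ell)^{O(\ell)}$ factor, which is why the paper needs $\ell_2\approx\ell_1\log(d\ell_1)$. Second, since $Q\ge|P|+2$ off $B_R$, you get $0\le p-\fup\le 2\delta+2Q$ everywhere. A single moment bound on $Q$ then replaces the paper's Cauchy--Schwarz step with $\pr[\|\x\|_2>R/2]$.

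What does not hold up is the dimension bookkeeping. That is exactly where $k^{3/2}$ comes from, so it should not be left to ``presumably.''
\begin{itemize}
\item The Jackson theorem you can actually cite (Newman--Shapiro, packaged as \Cref{lemma:uniform-approximators}) gives degree $O(kRL/\delta)$ on a ball in $\R^k$, not $\sqrt{k}RL/\delta$.
\item Your moment bound drops the dimension. The tail assumption is per direction, so via $\|\z\|_2\le\sqrt{k}\|\z\|_\infty$ you only get $\E\|\z\|_2^q\le O(k)\,(C\sqrt{k}\,q^{1/(1+\gamma)})^q$. Hence $R$ must be of order $\sqrt{k}\,(s\ell)^{1/(1+\gamma)}$ up to logarithms, not $(\ell\log)^{1/(1+\gamma)}$.
\end{itemize}
Your numbers as written solve to $(\sqrt{k}L/\delta)^{1+1/\gamma}$. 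That is unjustified, and it is also inconsistent with the bound you then assert.

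With the corrected numbers, $\ell\approx k\cdot\sqrt{k}\,(s\ell)^{1/(1+\gamma)}L/\delta$ solves to $\ell\approx(k^{3/2}L/\delta)^{1+1/\gamma}s^{1/\gamma}$. Combined with $L/\delta=O(\sigma/(\eps/2)^{s+1})$, this yields the stated bound. This is the paper's accounting: its $R=\widetilde{O}((Ls/\eps)^{1/\gamma}d^{1/2+3/(2\gamma)})$, and the $d^{3/2}$ is the Jackson factor $d$ times the $\sqrt{d}$ from the norm conversion in the tail bound.
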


\begin{table*}[ht]
\begin{center}
\setlength{\tabcolsep}{9pt}
\begin{threeparttable}
\begin{tabular}{c c c c} 
 \toprule
 \textbf{Concept Class} & \begin{tabular}{c}\textbf{This Work}\end{tabular} & \begin{tabular}{c}\textbf{Prior Work}\end{tabular}&\textbf{References} \\ \midrule
    \begin{tabular}{c} Intersections of $k$ Halfspaces \end{tabular} & \begin{tabular}{c} ${\widetilde{O}(k^{3})}$ \end{tabular} & ${{{{O}}(k^6)}}$ & \begin{tabular}{c}
      \cite{gopalan2010fooling} 
  \end{tabular}
 \\ \midrule
  \begin{tabular}{c} Functions of $k$ Halfspaces \end{tabular} & \begin{tabular}{c} ${\widetilde{O}(k^{5})}$ \end{tabular} & ${\exp{{{O}}(k)}}$ & \begin{tabular}{c}
      \cite{gopalan2010fooling}\\ 
      \cite{klivans2013moment}\\
      \cite{gollakota2022moment}
  \end{tabular}
 \\ \midrule
 \begin{tabular}{c} Convex Sets in $k$ dimensions  \end{tabular} &  \begin{tabular}{c}${\widetilde{O}(k^{5})}$ \end{tabular}& \begin{tabular}{c}None\tnote{$\ast$} \end{tabular} & \begin{tabular}{c}
     \cite{de2023gaussian} 
 \end{tabular}
 \\ \midrule
 \begin{tabular}{c} Degree-$q$ PTFs in $k$ dimensions \end{tabular} &  ${\widetilde{O}(q^6k^5)}$ & \begin{tabular}{c}${\exp \exp{{{{O}}(q)}}}$ \\ or worse\tnote{$\dagger$}\end{tabular} & \begin{tabular}{c} \cite{kane2011kindependent} \\ \cite{slot2024testably}\end{tabular}
  \\ \midrule
 \begin{tabular}{c} Functions of $t$ concepts each \\ with intrinsic dimension $k$ \\ and $\sigma$-smooth boundary\tnote{$\ddagger$} \end{tabular} &  ${\widetilde{O}(\sigma^2 t^5k^3)}$ & None & --
 \\ \bottomrule
\end{tabular}
\begin{tablenotes}
\footnotesize
\item[$\ast$] The work of \cite{de2023gaussian} together with results by \cite{gopalan2010fooling} imply the existence of upper sandwiching polynomials of degree ${\exp{{{\widetilde{O}}}(k)}}$ for $k$-dimensional convex sets that are truncated into a ball of radius $\poly(k)$. 
\item[$\dagger$] For PTFs, the dependence on the degree $q$ is not made explicit in prior work \citep{slot2024testably} using FT-mollification due to \cite{kane2011kindependent}, but is at least doubly exponential. Their bounds are independent of the intrinsic dimension $k$.
\item[$\ddagger$] These concepts may depend on different $k$-dimensional subspaces. Boundary smoothness is w.r.t. the Gaussian.
\end{tablenotes}
\vspace{-1em}
\end{threeparttable}
\end{center}
\caption{Comparison between upper bounds from this work and the best known bounds in previous work on the $(\eps = 0.1, s = O(1))$-sandwiching degree of various geometric concept classes with respect to the standard Gaussian distribution.}
\label{table:comparison}
\end{table*}

In \Cref{table:comparison}, we summarize the implications of our main result on the sandwiching degree of several geometric concept classes under the Gaussian distribution (see also \Cref{section:sandwiching-bounds-for-classes}). Each of the entries in the table corresponds to the state-of-the-art results before and after our work for each of the problems outlined in \Cref{intro:applications} (see also \Cref{section:applications}). These results are obtained by combining our main sandwiching degree bound with bounds for the boundary smoothness parameter of the corresponding classes (see \Cref{section:sandwiching-bounds-for-classes}).

Our result for functions of $k$ halfspaces also works beyond the Gaussian distribution, as long as the distribution $\D$ is strictly subexponential and anticoncentrated in every direction (see \Cref{theorem:sandwiching-functions-of-halfspaces}).

\subsection{Our Techniques}

\paragraph{Sandwiching Geometric Concepts by Lipschitz Functions}
Our first step (\Cref{lemma:sandwiching-property-smoothed-dilation}) establishes the existence of two Lipschitz functions $\fup,\fdown:\R^d\to[-1,1]$ satisfying
\begin{gather}
    \fdown(\x) \le f(\x) \le \fup(\x), \\
    \E_{\x\sim\D}[\fup(\x)-\fdown(\x)] \le \eps,
\end{gather}
where $f$ is a geometric concept and $\D$ is a strictly subexponential distribution.

We define one-sided relaxations $f^{+\rho}$ and $f^{-\rho}$, where $f^{+\rho}$ (resp.\ $f^{-\rho}$) equals $1$ on points within distance $\rho$ of the interior (resp.\ exterior) of $f$. We then construct $\fup$ as a $(1/\rho)$-Lipschitz interpolation between $f$ and $f^{+\rho}$; such an interpolation exists since their decision boundaries are separated by distance $\rho$. The function $\fdown$ is constructed analogously using $f^{-\rho}$. By construction, this immediately yields the pointwise sandwiching property.

To establish the approximation guarantee, we use the fact that if $f$ has $\sigma$-smooth boundary, then the $\rho$-neighborhood of its decision boundary has measure at most $\sigma\rho$ under $\D$. This bounds the expected gap between $f^{+\rho}$ and $f^{-\rho}$, and hence between $\fup$ and $\fdown$, with $\rho$ chosen appropriately.

\paragraph{Sandwiching Polynomials for Lipschitz Functions}
Next, we construct polynomial sandwiching approximations for $\fup$ and $\fdown$. We focus on $\fup$. By multivariate Jackson’s theorem \citep{Newman1964}, there exists a polynomial $p_1$ that uniformly approximates $\fup$ within a ball of radius $R$, leveraging the Lipschitzness of $\fup$. A result of \cite{ben2018classical} further controls the growth of $p_1$ outside this region, which is essential for bounding expectations under strictly subexponential distributions $\D$.

While these approximation tools are also used in \cite{chandrasekaran2025learning} for learning Lipschitz neural networks under distribution shift, their approach does not yield sandwiching polynomials. To obtain an explicit upper polynomial, we construct a polynomial $p_2$ that is at most $\eps$ within a ball of radius $R/2$ and dominates $p_1$ outside the ball of radius $R$ (see \Cref{fig:regions-sandwiching}). Setting $\pup = p_1 + p_2 + \eps$ then yields a valid upper sandwiching polynomial. A symmetric construction gives the lower polynomial.

\begin{figure}
    \centering
    \includegraphics[width=0.3\linewidth]{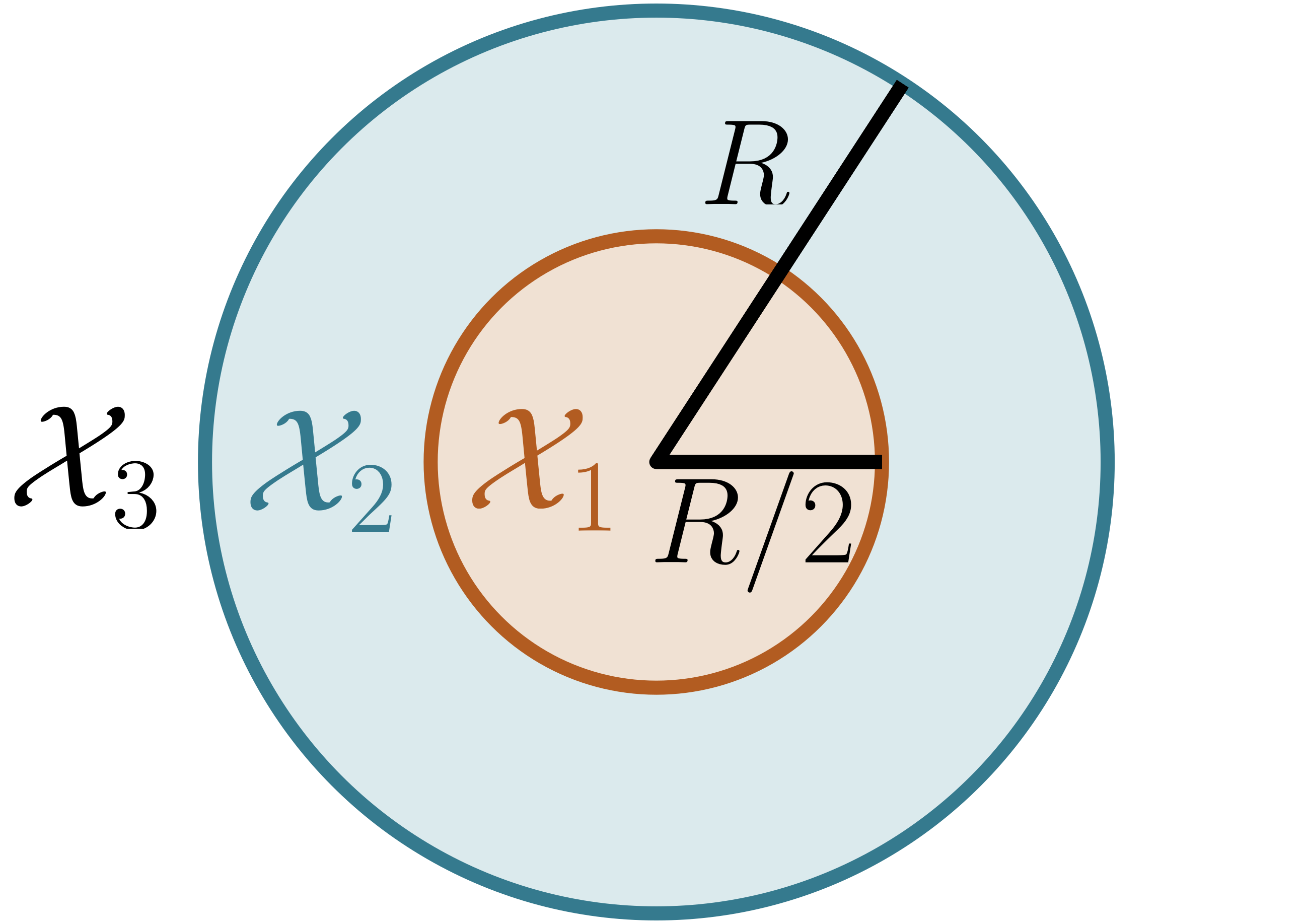}
    \caption{Our upper sandwiching polynomial for $\fup$ is of the form $\pup = p_1(\x)+p_2(\x)+\eps$. In region $\X_1$, $p_1$ is a pointwise $\eps$-approximator for $\fup$ and $0\le p_2(\x)\le \eps$, so $\fup(\x)\le \pup(\x)\le \fup(\x)+ 3\eps$. In region $\X_2$, $p_1$ is still an pointwise $\eps$-approximator for $\fup$ and $p_2(\x) > 0$, so $\fup(\x) \le p_1(\x)+\eps \le \pup(\x)$. In region $\X_3$ we have $p_2(\x) \ge 1+|p_1(\x)| \ge \fup(\x)+|p_1(\x)|$, so $\pup(\x) \ge \fup(\x)$.}
    \label{fig:regions-sandwiching}
\end{figure}

\paragraph{Comparison to \cite{gopalan2010fooling}}
The work of \cite{gopalan2010fooling} constructs sandwiching polynomials for arbitrary functions of $k$ halfspaces
\[
f(\x)=G(\sign(\w_1\cdot\x-\tau_1),\sign(\w_2\cdot\x-\tau_2),\dots,\sign(\w_k\cdot\x-\tau_k))
\]
under the Gaussian distribution. Their approach begins with the one-dimensional sandwiching polynomials of \cite{diakonikolas2010bounded}, which approximate the sign function pointwise within a bounded region around the origin, except near the discontinuity, and always lie above or below the sign function.

They lift these univariate polynomials to $\R^d$ by composing them with the linear forms $\w_i\cdot\x-\tau_i$, and then combine the resulting approximations via addition and multiplication according to the structure of $G$. The resulting sandwiching degree is $\poly(s)$, where $s$ is the size of $G$ when viewed as a decision tree. Since $s$ can be exponential in the number $k$ of inputs to $G$, this yields an $\exp(O(k))$ bound on the sandwiching degree.

In contrast, our approach is inherently high-dimensional and relies on tools from multivariate polynomial approximation theory, rather than composing one-dimensional sandwiching constructions. This allows us to obtain an exponential improvement in the sandwiching degree over \cite{gopalan2010fooling} for functions of halfspaces.

\subsection{Applications}\label{intro:applications}
In this section we describe the relationship between sandwiching polynomials and various learning frameworks.  Our improved bounds on sandwiching immediately imply corresponding improvements in these models. 

\paragraph{Tolerant Testable Learning}
In \emph{testable agnostic learning} (\Cref{definition:testable-learning}) \citep{rubinfeld2022testing}, the learner receives labeled samples from an arbitrary distribution over $\R^d \times \cube{}$ and is allowed to \emph{abstain}: it either accepts and outputs a hypothesis whose error is near-optimal within a target concept class, or rejects upon detecting a violation of a structural assumption on the $\R^d$-marginal. This rejection option allows testable learners to bypass computational hardness that arises in the absence of assumptions on the marginal, even for simple classes \citep{daniely2016complexity,diakonikolas2022near,diakonikolas2022cryptographic}.

A common strategy for obtaining efficient testable learning algorithms is to construct low $\L_1$-sandwiching degree \citep{gollakota2022moment}. However, the original version of testable learning can be fragile: the learner may reject even under small deviations from the target marginal $\D^*$. This motivates \emph{tolerant testable learning} (\Cref{definition:tolerant-testable-learning}) \citep{goel2024tolerant}, where the learner is guaranteed to accept with high probability whenever the marginal is sufficiently close to $\D^*$ in total variation distance, while retaining the same near-optimal guarantee upon acceptance. Prior work shows that $\L_1$-sandwiching degree suffices for efficient tolerant testable learning as well \citep{goel2024tolerant,klivans2025the}.

\paragraph{Learning with Distribution Shift}
In learning with distribution shift \citep{ben2006analysis,mansour2009domadapt,ben2010theory}, the learner receives labeled samples from a training distribution and only \emph{unlabeled} samples from a potentially different test distribution. The goal is to produce a hypothesis with low test error, or to reliably detect when the shift is too severe for accurate prediction.

A prominent formalization is \emph{testable learning with distribution shift} (TDS learning, \Cref{definition:tds-learning}) \citep{klivans2023testable}, which again equips the learner with a rejection option: it either accepts and outputs a hypothesis whose test error is near-optimal (see \Cref{definition:lambda}), or rejects upon detecting harmful shift. By combining our main theorem with the algorithmic framework of \cite{chandrasekaran2024efficient,klivans2023testable}, we obtain efficient TDS learners for the settings captured by \Cref{table:comparison}.

A stronger primitive is \emph{PQ learning} (\Cref{definition:pq-learning}) \citep{goldwasser2020beyond}, which requires \emph{per-point} abstention: the learner outputs a hypothesis together with a selector that can reject individual test points, while guaranteeing a low rejection rate on the training distribution. PQ learning implies (tolerant) TDS-style guarantees \citep{klivans2023testable,goel2024tolerant}, and the first end-to-end efficient PQ algorithms were obtained by extending TDS techniques using sandwiching polynomials \citep{goel2024tolerant,klivans2023testable}. Notably, all existing PQ-learning analyses rely on \emph{$\L_2$-sandwiching} with respect to the reference distribution, and it remains an open question whether analogous guarantees can be obtained assuming only \emph{$\L_1$-sandwiching} (in this work we do obtain $\L_s$-sandwiching for any $s$).

\paragraph{Learning with Heavy Contamination}
Sandwiching polynomials also yield efficient algorithms in the \emph{heavy contamination} model of \cite{klivans2025the} (\Cref{definition:heavy-contamination}), where only a constant fraction of the labeled dataset is drawn from a clean distribution and the remainder may be adversarially corrupted. The objective is to compete with the best classifier in the concept class on the \emph{entire} contaminated sample, namely to achieve error on the clean distribution proportional to the minimum empirical error on the corrupted dataset.  In this setting, \cite{klivans2025the} show that low-degree $\L_1$-sandwiching polynomials imply efficient learning, even though only a small fraction of the input data is structured.

\subsection{Related Work}

\paragraph{Pseudorandomness}
Sandwiching polynomials play a central role in pseudorandomness \citep{hatami2023theory}. The goal is to derandomize randomized algorithms by replacing a high-entropy \emph{base} distribution $\D$ (e.g., the uniform distribution on $\{\pm1\}^n$) with a \emph{pseudorandom} distribution $\D'$ that is generated from a short random seed, i.e., $\D'$ is the output distribution of a map $G:\{0,1\}^r\to \Omega$ for small $r$. We say that $\D'$ \emph{fools} a class $\F$ of test functions if it preserves their expectations, namely
\[
\bigl|\E_{\x\sim \D}[f(\x)]-\E_{\x\sim \D'}[f(\x)]\bigr|\le \eps
\quad\text{for all } f\in \F.
\]
The \emph{sandwiching degree} $\ell$ of $\F$ governs the seed length of a natural family of PRGs that fool $\F$ by ensuring that $\D'$ matches the moments of $\D$ up to degree $\ell$ (see \Cref{theorem:pseudorandomness-via-moment-matching}, due to \cite{bazzi2009polylogarithmic,gollakota2022moment}).

Most prior work bounds sandwiching degree primarily for $\D$ equal to the uniform distribution on the Boolean cube \citep{bazzi2009polylogarithmic,razborov2009simple,gopalan2010fooling,diakonikolas2010bounded,diakonikolas2010ptf,braverman2011poly,tal2017tight}, though there are also results for continuous distributions such as the Gaussian \citep{kane2011kindependent,gopalan2010fooling}. For each class in \Cref{table:comparison}, our bounds yield improved moment-matching PRGs (see \Cref{section:pseudorandomness-application}). While stronger pseudorandomness techniques can achieve even shorter seeds \citep{meka2010pseudorandom,o2022fooling}, they typically do not yield explicit sandwiching-degree guarantees and often define generators via conditions that appear difficult to verify. In contrast, moment matching is efficiently checkable, and sandwiching-degree bounds are exactly what our learning-theoretic applications require.

\paragraph{Learning via Polynomial Approximation}
Low-degree polynomial approximation has been a core tool in computational learning theory for over three decades \citep{linial1993constant,kalai2008agnostically,klivans2008learning,chandrasekaran2024smoothed,pinto2025learning,koehler2025constructive}. A classic result of \cite{linial1993constant} shows that $\L_2$ approximation---the existence of a low-degree polynomial $p$ with small squared error $\E[(p(\x)-f(\x))^2]$ under the input marginal---yields efficient learning.  Subsequent work \cite{klivans2008learning} showed that $\L_1$ approximation is sufficient even for agnostic learning. Moreover, later results provide evidence that $\L_1$ approximation is essentially \emph{necessary} if one wants efficient learning with near-optimal error \citep{dachman2014approximate,diakonikolas2021optimality}.

A notable line of work ties polynomial approximation under the Gaussian distribution to geometric measure theory. In particular, \cite{klivans2008learning} related Gaussian \emph{surface area} bounds for a concept class to the existence of low-degree approximating polynomials, and hence to efficient agnostic learning under the Gaussian. Their argument combines approximation properties of the Ornstein--Uhlenbeck semigroup \citep{pisier1986probabilistic,ledoux1994semigroup} with surface-area bounds for several geometric classes \citep{ball93,nazarov2003maximal}. Surface area can be viewed as a limiting ``boundary smoothness'' parameter: it measures the first-order rate at which a random point falls within distance $\rho$ of the boundary as $\rho\to 0$.

We can view our work as giving an analogue of the ``learning via Gaussian surface area'' paradigm but for \emph{reliable} learning primitives, where the relevant notion is not merely approximation in expectation but the stronger requirement of \emph{sandwiching}. Quantitatively, our bounds incur an explicit polynomial dependence on the intrinsic dimension parameter $k$ of the concept class. This dependence arises from our main approximation tool \citep{Newman1964}. For example, for intersections of $k$ halfspaces, \cite{klivans2008learning} obtains only logarithmic dependence on $k$, whereas our bounds scale as $\poly(k)$, but in exchange we obtain sandwiching guarantees (and, more generally, $\L_s$-sandwiching for arbitrary $s\ge 1$) rather than approximation in expectation.

\paragraph{Learning via Sandwiching}
Bounds on the sandwiching degree yield efficient algorithms for a range of recently defined learning primitives that impose strong reliability requirements \citep{goldwasser2020beyond,rubinfeld2022testing,gollakota2022moment,klivans2023testable,goel2024tolerant,chandrasekaran2024efficient,klivans2025the}. For many of these tasks, $\L_1$-sandwiching is sufficient. In contrast, \emph{current} algorithmic frameworks for PQ learning rely essentially on $\L_2$-sandwiching, and $\L_2$ bounds are also analytically convenient and may be useful beyond PQ learning. This is where the flexibility of our results matters: we obtain sandwiching guarantees in $\L_s$ for \emph{arbitrary} orders $s\ge 1$.

The first appearance of sandwiching polynomial approximation in the context of efficient learning algorithms is due to \cite{klivans2013moment}.  Their approach combines the duality between sandwiching and fooling via moment matching \citep{bazzi2009polylogarithmic} with the method of distances \citep{klebanov1996proximity,zolotarev1984probability,rachev2013methods}. Because their guarantee holds for general log-concave marginals, it does not require strict subexponential tails; however, the resulting degree bound is doubly exponential in $k$. By contrast, our bounds are $\poly(k)$.

More recently, \cite{gollakota2022moment} built on \cite{klivans2013moment} in the context of testable learning, obtaining sandwiching polynomials of degree $\exp(O(k))$ with bounded coefficients for functions of $k$ halfspaces under any strictly subexponential and anticoncentrated distribution. Our bounds improve exponentially over \cite{gollakota2022moment}.

\paragraph{Learning via Boundary Smoothness}
The notion of \emph{boundary smoothness} was introduced by \cite{chandrasekaran2024efficient} in the context of TDS learning. Using this condition, they gave a \emph{realizable} TDS learner for convex sets of intrinsic dimension $k$ under Gaussian training marginals, with running time $\poly(d)\cdot 2^{\poly(k/\eps)}$. Their guarantee, however, assumes that both the training and test labels are generated by the \emph{same} target convex set (i.e., realizability), and it additionally imposes a non-degeneracy condition requiring the positive region to have non-negligible Gaussian mass.

In contrast, our results yield a TDS learner running in time $d^{\poly(k/\eps)}$ that achieves \emph{near-optimal error guarantees} in the fully agnostic setting (see \Cref{definition:lambda}), which is the state of the art for this level of reliability.

At a high level, \cite{chandrasekaran2024efficient} combines a dimension-reduction step from classical learning theory \citep{vempala2010learning} with a distribution-shift tester: after recovering a $k$-dimensional subspace, the tester forms a grid in this subspace and verifies that the test distribution assigns approximately the same mass to each grid cell as the Gaussian training marginal.

\paragraph{Testable Learning under Relaxed Error Guarantees}
As discussed above, the best known upper bounds on the computational complexity of testable agnostic learning (\Cref{definition:testable-learning}) and TDS learning (\Cref{definition:tds-learning}) are obtained via sandwiching approximation. There are, however, faster algorithms for both frameworks under \emph{relaxed error guarantees} \citep{gollakota2023efficient,diakonikolas2023efficient,gollakota2024tester,diakonikolas2024testable,klivans2024learning,chandrasekaran2024efficient}.

For testable learning, these improved runtimes are currently known only for halfspaces. In the TDS setting, some results extend beyond halfspaces, but either incur substantially weaker error guarantees (for example, \cite{chandrasekaran2024efficient} obtain guarantees for intersections of $k$ halfspaces with error that degrades exponentially in $k$), or require realizability assumptions, namely that the labels for both the training and test distributions are generated by the same function in the target concept class.

\section{Preliminaries}

For a function $g:\R^d\to \R$ and a distribution $\D$ over $\R^d$, we define: 
$    \|g\|_{\D,s} := (\E_{\x\sim \D}[|g(\x)|^s])^{\frac{1}{s}} $.
For a set $K\subseteq\R^d$ and $\x\in\R^d$, we define $\dist(\x,K) = \inf_{\x'\in K}\|\x-\x'\|_2$. We denote with $[t]_a^b = \max\{a, \min\{t,b\}\}$ the $(a,b)$-clipping function. A \emph{concept} is a function $f:\R^d\to \cube{}$, while a \emph{concept class} is a set of concepts $\F\subseteq\{\R^d\to \cube{}\}$. We denote with $\Gauss(0,\mI_{d\times d})$ or simly $\Gauss_d$ the standard Gaussian distribution in $d$ dimensions.
For a vector $\x\in\R^d$, we denote with $x_i$ the $i$-th coordinate of $\x$. A polynomial $p$ over $\R^d$ of degree $\ell$ is a function of the form 
$p(\x)=\sum_{\I\in\N^d} c_p(\I)\, \x^\I$ where $x^\I = \prod_{i\in[d]}x_i^{\I_i}$ and $c_p(\I) = 0$ for any $\I$ with $\|\I\|_1 > \ell$. We denote with $\|p\|_\coef$ the quantity $\sum_{\I\in\N^d} |c_p(\I)|$, i.e., the sum of the absolute values of the coefficients of $p$.

We will use dilation and erosion operations on concepts; these correspond, respectively, to taking the Minkowski sum and Minkowski subtraction of the positive regions with a Euclidean ball of fixed radius as defined below.
\begin{definition}[Dilation and Erosion]
    Let $f:\R^d\to \cube{}$. For $\rho \ge 0$, we define the $\rho$-dilation $f^{+\rho}$, as well as the $\rho$-erosion $f^{-\rho}$ of $f$ as follows.
    \begin{align*}
        f^{+\rho}(\x) &= \sup_{\z: \|\z\|_2\le \rho}f(\x+\z)
        ,\quad f^{-\rho}(\x) = \inf_{\z: \|\z\|_2\le \rho}f(\x+\z)
    \end{align*}
\end{definition}

The boundary smoothness parameter relative to a distribution $\D$ is formally defined as follows using the dilation and erosion operations.
\begin{definition}[Smooth Boundary \cite{chandrasekaran2024efficient}]\label{definition:smooth-boundary}
    Let $f:\R^d \to \cube{}$ and $\D$ be a distribution over $\R^d$. For $\slackamp\ge 1$, we say that $f$ has $\slackamp$-smooth boundary with respect to $\D$ if for any $\rho \ge 0$
    \[
        \E_{\x\sim \D}\biggr[\frac{f^{+\rho}(\x) - f^{-\rho}(\x)}{2}\biggr] \le \slackamp\rho
    \]
\end{definition}

\begin{remark}
    Note that the quantity $\frac{1}{2}\E_{\x\sim \D}[f^{+\rho}(\x) - f^{-\rho}(\x)]$ equals the probability that a sample $\x$ from $\D$ is $\rho$-close to the boundary of $f$, i.e., there exists $\z: \|\z\|_2\le \rho$ such that $f(\x+\z) \neq f(\x)$.
\end{remark}

Our results require the following concentration assumption on the distribution $\D$.
\begin{definition}[Strictly Subexponential Distributions]\label{definition:subexponential}
For $\alpha,\beta,\gamma >0$, we say that a distribution $\D$ over $\R^d$ is $\gamma$-strictly subexponential with parameters $\alpha,\beta$ if for any $\w\in\S^{d-1}$ and any $r\ge 0$,
\[
    \pr_{\x\sim\D}[|\w\cdot \x| \ge r] \le \alpha\,\exp(-\beta \, r^{1+\gamma}).
\]
In what follows, we may omit the parameters $\alpha,\beta$ and absorb them under big-O notation.
\end{definition}

\section{Main Result}\label{section:main}

Our main result is the existence of low-degree sandwiching polynomials for a wide range of pairs of concept classes and distributions captured by the following assumption.
\begin{assumption}[Valid Instances]\label{assumption:valid-concepts}
    We will consider concept classes $\F$ and target distribution $\D$ over $\R^d$ that satisfy the following properties with parameters $k,d \in \nats,\sigma\ge 1$, $\gamma>0$, where $k\le d$:
    \begin{enumerate}
        \item (Low Intrinsic Dimension) For any $f\in\F$, we have $f:\R^d\to \cube{}$ and there are $F:\R^k\to \cube{}$ and $\mW\in\R^{k\times d}$ with $\mW\mW^\top = \mI_{k\times k}$ such that $f(\x) = F(\mW \x)$, for all $\x\in \R^d$.
        \item (Smooth Boundary) Any $f\in \F$ has $\sigma$-smooth boundary with respect to $\D$.
        \item (Strictly Subexponential Tails) The distribution $\D$ is $\gamma$-strictly subexponential.
    \end{enumerate}
\end{assumption}

The first condition states that every function in the class depends on a potentially different low-dimensional subspace $\cal W$, meaning that its values remain unchanged when the input $\x$ moves in some direction that is perpendicular to the subspace $\cal W$. The second condition ensures that the probability of a sample from $\D$ lying $\rho$-near the decision boundary of $f$ scales linearly with the distance $\rho$, with rate $\sigma$. Finally, the third condition ensures that the distribution $\D$ is sufficiently concentrated in every direction.

We are now ready to state our main theorem.

\begin{theorem}[Main Theorem, restated]\label{theorem:main}
    Let $\F$ be some concept class that satisfies \Cref{assumption:valid-concepts} 
    with parameters $k,d,\sigma$ with respect to a $\gamma$-strictly subexponential 
    distribution $\D$ over $\R^d$. Then, for any $\eps\in(0,1)$ and $s\ge 1$, the 
    $(\eps,s)$-sandwiching degree of $\F$ satisfies
    \[
        \ell(\eps,s)
        \;\le\;
        \widetilde{O}\!\left(
        \frac{\sigma k^{3/2} s}{(\eps/2)^{s+1}}
        \right)^{1+1/\gamma}.
    \]
    Moreover, the sum of the absolute values of the coefficients of the corresponding sandwiching polynomials is upper bounded by $B$ where $\log B= \widetilde{O}(\ell(\eps,s)) \cdot \poly(\log d)$.
\end{theorem}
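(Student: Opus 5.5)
We follow the two-stage plan from the techniques overview: first sandwich $f$ by Lipschitz functions, then sandwich those by polynomials. We work in the $k$-dimensional subspace. Write $f(\x)=F(\mW\x)$. The projection $\mW\x$ of $\x\sim\D$ is again $\gamma$-strictly subexponential in $\R^k$, since $\mW^\top$ maps unit vectors to unit vectors. Dilation and erosion commute with this projection: $f^{\pm\rho}(\x)=F^{\pm\rho}(\mW\x)$, because $\mW\mW^\top=\mI$ and only the component of $\z$ in the row space matters. So every function and polynomial can be built on $\R^k$ and then composed with $\mW$. Composition preserves degree. The coefficient norm grows by at most a factor of $\poly(d)$ per degree after expansion, which is the source of the $\poly(\log d)$ in $\log B$.

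\textbf{Step 1 (Lipschitz sandwich).} Set
\[
\fup(\x)=-1+2\Big[1-\dist(\x,\{f^{+\rho}=1\}... \Big]
\]
More concretely, let $A=\{f=1\}$ and define $\fup(\x)=2[1-\dist(\x,A)/\rho]_0^1-1$. This function is $(2/\rho)$-Lipschitz. It equals $1$ on $A$, so $\fup\ge f$, and it equals $-1$ wherever $f^{+\rho}=-1$, so $\fup\le f^{+\rho}$. Define $\fdown$ symmetrically, so that $f^{-\rho}\le\fdown\le f$. Since all values lie in $[-1,1]$,
\[
|\fup-\fdown|^s\le 2^{s-1}|\fup-\fdown|\le 2^{s-1}(f^{+\rho}-f^{-\rho}).
\]
Smooth boundary then gives $\E|\fup-\fdown|^s\le 2^s\sigma\rho$. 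We choose $\rho\approx(\eps/2)^s/\sigma$ so that this contributes at most $(\eps/2)^s$. The later polynomial-approximation losses will be $O(\eps^{s+1})$-type terms, which is where the $(\eps/2)^{s+1}$ in the bound comes from.

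\textbf{Step 2 (polynomial sandwich of a Lipschitz function).} We treat $\fup$; $\fdown$ is symmetric. By the multivariate Jackson theorem of \cite{Newman1964}, on the ball of radius $R$ in $\R^k$ there is a polynomial $p_1$ of degree $\ell_1=\widetilde O(kLR/\eta)$ with $|p_1-\fup|\le\eta$ on that ball, where $L=2/\rho$. The $k$ factor is the dimension cost in Newman's bound; together with the norm conversion it should give $k^{3/2}$. Using \cite{ben2018classical}, we bound $p_1$ outside the ball by $|p_1(\x)|\le (C\|\x\|/R)^{\ell_1}$ for $\|\x\|\ge R$.

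Next we build a radial polynomial $p_2(\x)=q(\|\x\|^2)$ with the following properties: $0\le q\le\eta$ on $[0,R^2/4]$, $q\ge0$ everywhere, and $q(t)\ge 1+(C\sqrt t/R)^{\ell_1}$ for $t\ge R^2$. A candidate is $\eta(2\|\x\|^2/R^2)^{2m}$ plus a Chebyshev-type even polynomial with $m\approx\ell_1+\log(1/\eta)$. The gap between radii $R/2$ and $R$ lets a degree-$O(\ell_1+\log(1/\eta))$ polynomial grow from $\eta$ to exponentially large. The region between $R/2$ and $R$ is covered because $p_1$ is already accurate there and $p_2\ge0$.

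Set $\pup=p_1+p_2+\eta$. By the three-region argument of \Cref{fig:regions-sandwiching}, $\pup\ge\fup\ge f$ everywhere, and $\pup-\fup\le3\eta$ inside radius $R/2$.

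\textbf{Step 3 (tail bound and parameters).} We split $\E|\pup-\pdown|^s$ into the part inside radius $R/2$ and the part outside. Inside, we use Minkowski's inequality with Step 1. Outside, $|\pup|^s$ is bounded by $(C\|\x\|/R)^{O(s\ell)}$, and we weight this against the tail $\pr[\|\mW\x\|\ge r]\le k\alpha e^{-\beta(r/\sqrt k)^{1+\gamma}}$, controlling the integral by Cauchy–Schwarz. This tail is negligible once $R^{1+\gamma}\gtrsim s\ell\log(\cdot)$, up to $\sqrt k$ factors.

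Combining $\ell\approx kR/(\rho\eta)$ with $R\approx(s\ell)^{1/(1+\gamma)}$ and solving gives $\ell^{\gamma/(1+\gamma)}\approx k\,s/(\rho\eta)$. This yields the exponent $1+1/\gamma$ and the stated bound with $\eta\approx\eps/2$ and $\rho\approx(\eps/2)^s/\sigma$. For the coefficient bound, Newman's approximant and $p_2$ have coefficients at most $R^{O(\ell)}$ in a bounded basis, so $\log B=\widetilde O(\ell)\poly(\log d)$ after composing with $\mW$.

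\textbf{Main obstacle.} The hardest part is Step 2/3: building $p_2$ to dominate the growth of $p_1$ outside the ball, with the correct degree, while keeping the tail integral controlled in $L_s$. This self-referential choice of $R$ versus $\ell$ is where the exponent $1+1/\gamma$ must come out exactly. I would first try the explicit Chebyshev construction for $q$, and check the tail integral with the crude bound $\E\|\x\|^{2m}\le (O(m)^{1/(1+\gamma)}\sqrt k)^{2m}$.
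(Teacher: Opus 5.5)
Your proposal is correct and follows essentially the same route as the paper: a $(2/\rho)$-Lipschitz sandwich from clipped distance functions with $\rho\approx(\eps/2)^s/\sigma$, then $\pup=p_1+p_2+\eta$ with $p_1$ the Jackson/\cite{ben2018classical} approximant and $p_2$ a radial term that is at most $\eta$ inside radius $R/2$ and dominates $p_1$ outside radius $R$, a Cauchy--Schwarz tail estimate against the strictly subexponential tails to balance $R$ against $\ell$, and the reduction to $\R^k$ by composing with $\mW$. Two small corrections: no Chebyshev component is needed, because $\eta(2\|\x\|_2^2/R^2)^{2m}$ alone works (the paper uses exactly $\eps(2\|\x\|_2/R)^{2\ell_2}$), with $m$ picking up a $\log(k\ell_1)$ factor since the constant from \cite{ben2018classical} is really $(k\ell_1)^{O(1)}$; and the extra factor of $\eps$ in $(\eps/2)^{s+1}$ comes from the Jackson degree $kLR/\eta$, not from approximation losses of order $\eps^{s+1}$, as your own Step 3 computation already reflects.
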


Note that the above theorem works for any choice of $s\ge 1$, meaning that it establishes the existence of low-degree sandwiching polynomials $\pup,\pdown$ with the property $\|\pup-\pdown\|_{\D,s}\le \eps$, for any choice of $s$. Choosing larger values for $s$ gives stronger notions of approximation, which is sometimes useful for downstream applications. 

For example, the first positive results for TDS learning (\Cref{definition:tds-learning}, see \cite{klivans2023testable}) were based on $\L_2$-sandwiching, although subsequent work showed that $\L_1$-sandwiching suffices \cite{chandrasekaran2024efficient}. For PQ learning (\Cref{definition:pq-learning}), \cite{goel2024tolerant} showed that $\L_2$-sandwiching suffices for efficiency, but it remains an open question whether a similar result can be obtained via $\L_1$-sandwiching. Our work demonstrates that, for a wide range of instances, the gap between the sandwiching degree of different orders is much narrower than previously believed.

Concretely, we obtain the first $\L_2$-sandwiching degree bound for polynomial threshold functions. Hence, we provide the first non-trivial result for PQ learning of the class of PTFs with intrinsic dimension $k$ and degree $q$ such that $q^6k^3 = O(d^{1-c})$ for any $c>0$ with respect to the Gaussian distribution (see \Cref{theorem:sandwiching-polynomial-threshold-functions,theorem:pq-application}).\footnote{For $q^6k^3 = \Omega(d)$, the sandwiching degree becomes $\widetilde\Omega(d)$ (see \Cref{theorem:sandwiching-polynomial-threshold-functions}), and a covering-based algorithm improves upon the result provided by \Cref{theorem:pq-application}.} Note that bounds on the $\L_1$-sandwiching of PTFs with respect to both the Gaussian distribution, as well as the uniform distribution on the boolean hypercube have been provided in prior work \cite{diakonikolas2010ptf,kane2011kindependent,slot2024testably}.

In the rest of this section, we provide the proof of \Cref{theorem:main}.

\subsection{Full-Dimensional Case}\label{section:proof-full-dim}

We will first prove \Cref{theorem:main} in the special case where $k=d$. Throughout this section, we consider $\F,\D$ to be as specified in the premise of \Cref{theorem:main}, with $k=d$. The main idea for the proof is to sandwich an arbitrary element $f$ of $\F$ by functions $\fup,\fdown$ that are Lipschitz and are close to $f$ in expectation. It will then be sufficient to provide sandwiching polynomials for $\fup,\fdown$. The existence of such functions $\fup,\fdown$ is ensured by the following lemma. 

\begin{lemma}\label{lemma:sandwiching-property-smoothed-dilation}
    Let $f$ be a function with $\sigma$-smooth boundary w.r.t. $\D$, $\eps\in(0,1)$ and $s\ge 1$. Then, there exist functions $\fup, \fdown: \R^d\to \R$ that are $L$-Lipschitz for $L = 2\,\sigma\,(\frac{2}{\eps})^s$ such that for any $\x\in\R^d$ and any $s\ge 1$, we have:
    \begin{gather}
        \fdown(\x) \le f(\x) \le\fup(\x) \label{equation:Lipschitz-sandwiching-sandwiching}
        \\
        \|\fup - \fdown\|_{\D,s} \le \eps \label{equation:Lipschitz-sandwiching-approximation}
    \end{gather}
\end{lemma}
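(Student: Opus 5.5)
We construct $\fup$ as a Lipschitz interpolation between $f$ and its dilation $f^{+\rho}$, and $\fdown$ analogously between $f^{-\rho}$ and $f$. The radius is $\rho := \frac{1}{\sigma}(\eps/2)^s$, so that $2/\rho = 2\sigma(2/\eps)^s = L$. Let $P=\{\x: f(\x)=1\}$ be the positive region. We define
\[
\fup(\x) = -1 + 2\Bigl[1-\tfrac{\dist(\x,P)}{\rho}\Bigr]_0^1 ,\qquad
\fdown(\x) = 1 - 2\Bigl[1-\tfrac{\dist(\x,\R^d\setminus P)}{\rho}\Bigr]_0^1 .
\]
If $P$ or its complement is empty, the corresponding function is taken to be the constant $-1$ or $1$, and the lemma is trivial in that case.

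\emph{Lipschitzness.} Since $\x\mapsto\dist(\x,K)$ is $1$-Lipschitz for any set $K$ and clipping is $1$-Lipschitz, both $\fup$ and $\fdown$ are $(2/\rho)$-Lipschitz, i.e. $L$-Lipschitz.

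\emph{Sandwiching.} Take $\x\in P$. Then $\dist(\x,P)=0$, so $\fup(\x)=1=f(\x)$. Also $\fdown(\x)\le 1 = f(\x)$, since $\fdown$ always takes values in $[-1,1]$. For $\x\notin P$ the argument is symmetric: $\fdown(\x)=-1=f(\x)$ and $\fup(\x)\ge -1$. This gives \eqref{equation:Lipschitz-sandwiching-sandwiching}.

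\emph{Support of the gap.} If $\dist(\x,P)>\rho$, then $\fup(\x)=-1$, and also $f^{+\rho}(\x)=-1$. Conversely, if $\dist(\x,P)<\rho$ then $f^{+\rho}(\x)=1$. Hence, up to the boundary case $\dist(\x,P)=\rho$, we have $\fup\le f^{+\rho}$, and likewise $\fdown\ge f^{-\rho}$.

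This boundary case is the only delicate point, since the supremum in the definition of $f^{+\rho}$ may not be attained at exactly distance $\rho$. To avoid it, we run the construction with radius $\rho'<\rho$, which yields $\fup\le f^{+\rho}$ and $\fdown\ge f^{-\rho}$ everywhere. The resulting Lipschitz constant $2/\rho'$ is arbitrarily close to $L$.

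To get exactly $L$, we instead use the slack in the estimate below. There $\eps/2$ could be replaced by any $\eps'<\eps$, so we take $\rho$ slightly larger than $\frac{1}{\sigma}(\eps/2)^s$ and apply the construction with $\rho'=\frac{1}{\sigma}(\eps/2)^s$. Concretely, choose $\rho=\rho'(1+\delta)$ with $\delta>0$ small enough that $2^{1-1/s}(1+\delta)^{1/s}\le 2$. In fact $\rho\le 2^{s-1}\rho'$ is allowed, so there is ample room.

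\emph{Approximation.} We have $0\le \fup-\fdown\le f^{+\rho}-f^{-\rho}$. The right-hand side lies in $\{0,2\}$, and it equals $2$ exactly when $\x$ is within distance $\rho$ of the boundary of $f$. Since $|\fup-\fdown|\le 2$ and the gap vanishes off this set,
\[
\E_{\x\sim\D}\bigl[|\fup(\x)-\fdown(\x)|^s\bigr] \le 2^s\,\pr_{\x\sim\D}\bigl[f^{+\rho}(\x)\neq f^{-\rho}(\x)\bigr] = 2^{s}\cdot\tfrac12\E_{\x\sim\D}\bigl[f^{+\rho}(\x)-f^{-\rho}(\x)\bigr] \le 2^s\sigma\rho .
\]
Here the last inequality is \Cref{definition:smooth-boundary}. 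With $\rho=\frac1\sigma(\eps/2)^s$ the right-hand side equals $2^s(\eps/2)^s=\eps^s$, so $\|\fup-\fdown\|_{\D,s}\le\eps$. With the slightly enlarged $\rho$ we instead get $\eps(1+\delta)^{1/s}$.

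This leaves a mismatch: the exact choice of $\rho$ gives the bound $\eps$ but has the boundary issue, while the enlarged $\rho$ fixes the boundary but gives only $\eps(1+\delta)^{1/s}$. The cleanest fix is to tighten the first inequality above. On the set where $\fup-\fdown>0$, which is at most $\rho'$-close to the boundary, we can bound the gap by $2$, and this set is contained in the strict $\rho'$-neighborhood up to a measure-zero subtlety. The simplest remedy is therefore to apply the smoothness bound at radius $\rho'$ with a strict/closed-ball comparison: any point within distance $\le\rho'$ of $P$ satisfies $f^{+\rho'}=1$ whenever the distance is attained. If it is not attained, the point lies in the closure, and we use $f^{+\rho''}$ for every $\rho''>\rho'$ together with continuity from above of the measure as $\rho''\downarrow\rho'$, giving the bound $\sigma\rho'$. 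I expect this limiting argument to be the only mild obstacle; the rest is routine.
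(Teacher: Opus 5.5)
Your proof is correct and follows the paper's strategy: you build a $(2/\rho)$-Lipschitz interpolant with $\rho=\frac{1}{\sigma}(\eps/2)^s$ satisfying $f^{-\rho}\le\fdown\le f\le\fup\le f^{+\rho}$, then apply the smoothness bound $\E_{\x\sim\D}[|\fup-\fdown|^s]\le 2^s\sigma\rho=\eps^s$; your ramp $[1-2\dist(\x,P)/\rho]_{-1}^{+1}$ is simply a simpler interpolant than the paper's $[(\dist(\x,\Sfarout)-\dist(\x,\Sin))/\rho]_{-1}^{+1}$. The boundary case you worry about never arises, because at $\dist(\x,P)=\rho$ your $\fup$ already equals $-1$, so $\fup\le f^{+\rho}$ holds everywhere; moreover, any $\x$ with $\fup(\x)>\fdown(\x)$ lies strictly within $\rho$ of both $P$ and $\R^d\setminus P$ and hence has $f^{+\rho}(\x)\neq f^{-\rho}(\x)$. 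You should therefore delete the entire $\rho'$/$\delta$/limiting detour, including the false claim that there is slack allowing $\rho\le 2^{s-1}\rho'$, since that choice would only give the bound $2^{1-1/s}\eps$.
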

\begin{proof}
    We let $\rho = \frac{1}{\sigma}\cdot(\frac{\eps}{2})^s$ and define the following auxiliary sets.
    \begin{gather*}
        \Sin := \{\x\in\R^d: f(\x) = 1\} ,\;\; \Sfarin := \{\x\in \R^d: f^{-\rho}(\x) = 1\}  \\
        \Sout := \{\x\in \R^d: f(\x) = -1\},\;\; \Sfarout := \{\x\in \R^d: f^{+\rho}(\x) = -1\}
    \end{gather*}
    We also define the functions $\gup,\gdown:\R^d\to\R$ as follows.
    \begin{align*}
        \gup(\x) &= \dist(\x,\Sfarout) - \dist(\x,\Sin)  \\
        \gdown(\x) &= \dist(\x,\Sout) - \dist(\x,\Sfarin)
    \end{align*}

    Observe that the functions $\gup,\gdown$ are both $2$-Lipschitz, regardless of the sets $\Sin$, $\Sout$, $\Sfarin$, $\Sfarout$. Therefore, the following choices for $\fup,\fdown$ are $L$-Lipschitz for $L = 2/\rho= 2\sigma(2/\eps)^s$.
    \begin{align*}
        \fup(\x) = \biggr[\frac{\gup(\x)}{\rho}\biggr]_{-1}^{+1} \qquad
        \fdown(\x) = \biggr[\frac{\gdown(\x)}{\rho}\biggr]_{-1}^{+1}
    \end{align*}

    We will show that the following property is true.
    \begin{equation}
        f^{-\rho}(\x) \le \fdown(\x) \le f(\x) \le \fup(\x) \le f^{+\rho}(\x),\;\; \text{ for all }\x\in\R^d
    \end{equation}
    We will focus on $\fup$ since the argument for $\fdown$ is analogous. 
    
    \begin{itemize}
        \item ($f(\x)\le \fup(\x)$). If $f(\x) = 1$, then $\x\in\Sin$ and $\dist(\x,\Sfarout) \ge \rho$. Therefore, $\gup(\x) \ge \rho$ and $\fup(\x) = f(\x) = 1$ for any $\x\in\Sin$. For $\x\not\in \Sin$, we have $f(\x) = -1 \le \fup(\x)$.
        \item ($\fup(\x) \le f^{+\rho}(\x)$). If $f^{+\rho}(\x) = -1$, then $\x\in \Sfarout$ and $\dist(\x,\Sin) \ge \rho$. Therefore, $\gup(\x) \le -\rho$ and $\fup(\x) = -1 = f^{+\rho}(\x)$. If $f^{+\rho}(\x) = 1$, then $\fup(\x)\le 1\le f^{+\rho}(\x)$.
    \end{itemize}

    It remains to bound the quantity $\|\fup-\fdown\|_{\D,s}$. We have the following:
    \begin{align*}
        \| \fup - \fdown \|_{\D,s} &\le \| f^{+\rho} - f^{-\rho}\|_{\D,s} = \Bigr(\E_{\x\sim \D}\Bigr[\Bigr|f^{+\rho}(\x)-f^{-\rho}(\x)\Bigr|^s\Bigr]\Bigr)^{1/s}
    \end{align*}
    Observe that for any $\x$ we have $|f^{+\rho}(\x)-f^{-\rho}(\x)|^s \in\{0,2^s\}$, and therefore we may use the following simplification:
    \[
        |f^{+\rho}(\x)-f^{-\rho}(\x)|^s = 2^s \cdot \frac{f^{+\rho}(\x)-f^{-\rho}(\x)}{2}
    \]
    Therefore, we overall obtain:
    \begin{align*}
        \| \fup - \fdown \|_{\D,s}
        &\le 2\cdot \biggr(\E_{\x\sim \D}\biggr[\frac{f^{+\rho}(\x)-f^{-\rho}(\x)}{2}\biggr]\biggr)^{1/s} \\
        &\le 2\cdot(\sigma \rho)^{1/s} = \eps\,,
    \end{align*}
    where the second inequality follows from \Cref{assumption:valid-concepts}.
\end{proof}

So far, we have obtained a pair of functions that sandwich $f$ and are Lipschitz continuous, but are not necessarily polynomials. To obtain low-degree sandwiching polynomials, we apply the following lemma, first observed in \cite{chandrasekaran2025learning} in the context of learning Lipschitz neural networks under distribution shift. Its proof combines a classical approximation-theoretic result—multivariate Jackson's theorem \cite{Newman1964}—with a bound from \cite{ben2018classical}. In particular, Jackson's theorem guarantees the existence of low-degree $\L_\infty$-approximators for Lipschitz functions over bounded domains, while \cite{ben2018classical} controls the coefficients of such polynomials, ensuring bounded behavior outside the approximation domain.

\begin{lemma}[\cite{chandrasekaran2025learning,Newman1964,ben2018classical}]\label{lemma:uniform-approximators}
    Let $g:\R^d\to \R$ be $L$-Lipschitz and let $\eps\in(0,1)$, $R\ge 1$. Then, there exists polynomial $p$ of degree $\ell = O(LRd/\eps)$ with coefficients bounded by $(d\ell)^{O(\ell)}$ in absolute value and with the following properties:
    \begin{gather}
        |g(\x) - p(\x)| \le \eps, \text{ for all }\x\in\R^d: \|\x\|_2 \le R \\
        |p(\x)| \le (d\ell)^{O(\ell)} \biggr(\frac{\|\x\|_2}{R}\biggr)^{\ell}, \text{ for all }\x\in\R^d: \|\x\|_2 > R
    \end{gather}
\end{lemma}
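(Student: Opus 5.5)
The proof combines three ingredients: Newman's multivariate Jackson theorem for the uniform approximation on the ball, a Chebyshev/Markov-type coefficient bound in the spirit of \cite{ben2018classical}, and a scaling argument for the growth of $p$ outside the ball.

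\textbf{Step 1: uniform approximation on the ball.} By translating we may assume $g(0)=0$; translating $g$ only shifts the constant term of $p$. Define $h(\mathbf{y}) = g(R\mathbf{y})$ on the unit ball $\{\|\mathbf{y}\|_2\le 1\}$. Then $h$ is $LR$-Lipschitz. The multivariate Jackson theorem \cite{Newman1964} gives a polynomial $q$ of degree $\ell = O(LRd/\eps)$ with $|h(\mathbf{y})-q(\mathbf{y})|\le \eps$ on the unit ball. The factor $d$ comes from the dimension dependence in Newman's estimate of the modulus of continuity. Setting $p(\x)=q(\x/R)$ yields the first displayed property.

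\textbf{Step 2: coefficient bound for $q$.} On the unit ball, $|q|\le |h|+\eps\le LR+1$. A polynomial of degree $\ell$ bounded by $M$ on the unit ball (equivalently, on the cube $[-1/\sqrt d,1/\sqrt d]^d$ it contains) has monomial coefficients at most $M\cdot (d\ell)^{O(\ell)}$. This is the content of the bound from \cite{ben2018classical}. One way to see it is to expand $q$ in a tensor Chebyshev basis rescaled to that cube: each Chebyshev coefficient is an integral of $q$ against a bounded weight, so it is at most $M$, and each rescaled Chebyshev polynomial has monomial coefficients at most $(\sqrt d)^{\ell}2^{O(\ell)}$. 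Since $LR\le \ell$ up to constants, the factor $M$ is absorbed into $(d\ell)^{O(\ell)}$.

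\textbf{Step 3: the coefficient bound for $p$ as stated.} The coefficients of $p(\x)=q(\x/R)$ are those of $q$ divided by powers of $R\ge1$, so they are also at most $(d\ell)^{O(\ell)}$.

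\textbf{Step 4: growth outside the ball.} For $\|\x\|_2>R$, put $t=\|\x\|_2/R>1$ and write $\x/R = t\mathbf{u}$ with $\|\mathbf{u}\|_2=1$. Every monomial of degree $j\le\ell$ evaluated at $t\mathbf{u}$ has absolute value at most $t^j\le t^\ell$. The number of monomials of degree at most $\ell$ is at most $(d+1)^\ell$. Hence
\[
|p(\x)|\le (d+1)^\ell\,(d\ell)^{O(\ell)}\,t^\ell = (d\ell)^{O(\ell)}\left(\frac{\|\x\|_2}{R}\right)^{\ell}.
\]

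The main obstacle is Step 2: making the Chebyshev-coefficient argument rigorous in $d$ dimensions with only $(d\ell)^{O(\ell)}$ loss. I would handle it by restricting to the inscribed cube and applying the one-dimensional coefficient bound coordinatewise, citing \cite{ben2018classical} for the precise statement.
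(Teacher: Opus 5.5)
Your route is the one the paper cites: Newman's Jackson theorem on the rescaled ball, then the coefficient bound of \cite{ben2018classical}, then a monomial-by-monomial estimate outside the ball. Steps 2--4 are sound. Your tensor-Chebyshev argument on the inscribed cube is already a complete substitute for the cited coefficient bound, so Step 2 is not really an obstacle. There is one small slip in it: the weight $(1-y^2)^{-1/2}$ is not bounded, only integrable with mass $\pi$. Each tensor coefficient is therefore at most $2^{\ell}M$ rather than $M$, which is harmless.

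The step that fails as written is the opening reduction, ``translating $g$ only shifts the constant term of $p$.'' Undoing the translation adds $g(0)$ to the constant coefficient and to $p(\x)$ everywhere, and nothing in the hypotheses bounds $|g(0)|$. Take $g\equiv N$, which is $L$-Lipschitz for every $L$. Any valid $p$ has constant coefficient $p(0)$ with $|p(0)|\ge N-\eps$. By continuity, $|p(\x)|\approx N$ just outside the sphere of radius $R$, where $(\|\x\|_2/R)^{\ell}\approx 1$. So both the coefficient bound and the growth bound fail once $N$ exceeds $(d\ell)^{O(\ell)}$. The lemma as stated is false, and your argument actually proves the version with an added hypothesis such as $|g(0)|\le 1$ (or $|g(0)|\le (d\ell)^{O(\ell)}$).

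That version is all the paper needs, because the lemma is only applied to $\fup,\fdown$, which take values in $[-1,1]$. Under this hypothesis you can skip the translation altogether: $|h|\le 1+LR$ on the unit ball directly. Outside the ball, the extra additive constant is absorbed into $(d\ell)^{O(\ell)}(\|\x\|_2/R)^{\ell}$ because $\|\x\|_2/R>1$. State this hypothesis explicitly rather than hiding it in the translation step.
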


The polynomials from the above lemma are used in \cite{chandrasekaran2025learning} to obtain a TDS learning algorithm for Lipschitz neural networks under strictly subexponential distributions. To achieve this, they crucially use the fact that the tails of the target distribution are strictly subexponential, to ensure that the expectation of the polynomial approximators outside the approximation domain vanishes for some choice of the parameters $\ell,R$, while still satisfying the condition $\ell=O(LRd/\eps)$. 

In particular, observe that for $\gamma$-strictly subexponential distributions and $p$ as in \Cref{lemma:uniform-approximators}, we have $\E[(p(\x))^2] \le (d\ell)^{O(\ell)}$ and $\pr[\|\x\|_2>R] \le \exp(-\Omega(R/\sqrt{d})^{1+\gamma})$. Therefore: 
\[ \E[|p(\x)| \ind\{\|\x\|_2>R\}] \stackrel{\ell \to \infty}{\to} 0\]

We build on the approach of \cite{chandrasekaran2025learning} by taking the sum $p_1+p_2+\eps$ of the pointwise approximator $p_1$ provided by \Cref{lemma:uniform-approximators} and a closed-form polynomial $p_2$ that vanishes near the origin, and dominates $p_1$ outside the domain of approximation. This ensures that the polynomial $p_1+p_2+\eps$ is an upper sandwiching polynomial for our Lipschitz function $\fup$ (see \Cref{fig:regions-sandwiching}). A symmetric argument provides a lower sandwiching polynomial.

\begin{proof}[Proof of \Cref{theorem:main} for $k = d$.]
    Let $f\in \F$. We will use the functions $\fup,\fdown$ from \Cref{lemma:sandwiching-property-smoothed-dilation}. In particular, it suffices to show the existence of an upper sandwiching polynomial for $\fup$ and a lower sandwiching polynomial for $\fdown$. We focus on $\fup$, since the proof for $\fdown$ will follow from a symmetric argument, i.e., by applying the argument below to $-f$.

    Let $R$ be some large enough parameter which will be chosen later. Let $L = 2\sigma (2/\eps)^s$, and let $p_1$ be a polynomial of degree $\ell_1 = C_1 L R d /\eps$ with the properties specified in \Cref{lemma:uniform-approximators}. We set $\ell_2 = C_2(\ell_1 \log(d\ell_1) + \log(1/\eps))$ and choose the polynomial $\pup$ as follows:
    \begin{equation}
        \pup(\x) = p_1(\x) + p_2(\x) + \eps,\;\;\text{ where } p_2(\x) = \eps \, \biggr( \frac{2\|\x\|_2}{R} \biggr)^{2\ell_2} \label{equation:pup-definition}
    \end{equation}
    For appropriate choices for the universal constants $C_1,C_2\ge 1$ we have the following:
    \begin{itemize}
        \item\label{item:unif-approx-region} For any $\x$ with $\|\x\|_2 \le R/2$, we have $\pup(\x) \le p_1(\x) + 2\eps \le \fup(\x) + 3\eps$ and $\pup(\x) \ge \fup(\x)$, since $|p_1(\x)-\fup(\x)| \le \eps$ and $p_2(\x)\in[0,\eps]$.
        \item For any $\x$ with $\|\x\|_2 \in (R/2, R]$ we have $\pup(\x) \ge \fup(\x)$, since $|p_1(\x)-\fup(\x)| \le \eps$.
        \item For any $\x$ with $\|\x\|_2 > R$ we have $\pup(\x) \ge \fup(\x)$, since $p_2(\x) \ge 1+|p_1(\x)|$.
    \end{itemize}
    So far, we have established that $\fup(\x)\le \pup(\x)$ for all $\x\in\R^d$. We will now show that the quantity $\|\pup-\fup\|_{\D,s}$ is bounded by $O(\eps)$ for some appropriate choice of $R = \widetilde{O}((Ls / \eps)^{1/\gamma} d^{0.5+ 1.5/{\gamma}})$.\footnote{If we wish to achieve error $\eps$ instead of $O(\eps)$, we may just substitute $\eps' = \eps/C$ for some universal constant $C\ge 1$.}
    
    Due to the properties of $\pup$ in the region $\|\x\|_2\le R/2$, it suffices to account for the region $\|\x\|_2> R/2$. In particular, we have:
    \begin{align}
        \|\pup-\fup\|_{\D,s} &\le 3\eps + \Bigr( \E_{\x\sim \D}\Bigr[ (\pup(\x)-\fup(\x))^s \ind\{\|\x\|_2 > R/2\}\Bigr] \Bigr)^{1/s} \notag\\
        &\le 3\eps + \Bigr( \E_{\x\sim \D}\Bigr[ (\pup(\x)-\fup(\x))^{2s} \Bigr] \Bigr)^{\frac{1}{2s}} \Bigr( \pr_{\x\sim \D}\Bigr[ \|\x\|_2 > R/2\Bigr] \Bigr)^{\frac{1}{2s}} \notag \\
        &= 3\eps + \|\pup-\fup\|_{\D,2s} \,\cdot\, \Bigr( \pr_{\x\sim \D}\Bigr[ \|\x\|_2 > R/2\Bigr] \Bigr)^{\frac{1}{2s}}\label{equation:pup-fup-bound}
    \end{align}
    where the second inequality follows from the Cauchy-Schwarz inequality.

    Since the distribution $\D$ is $\gamma$-strictly subexponential, we have the following for some constants $\alpha,\beta>0$:
    \begin{align}
        \pr_{\x\sim \D}\Bigr[ \|\x\|_2 > R/2\Bigr] &\le \pr_{\x\sim \D}\Bigr[ \|\x\|_\infty > R/{(2\sqrt{d}})\Bigr] \notag \\
        &\le d\cdot \sup_{\w\in\S^{d-1}}\pr_{\x\sim\D}\Bigr[ |\w\cdot \x| > R/{(2\sqrt{d}})\Bigr] \notag \\
        &\le \alpha\, d\cdot \exp\biggr(-\beta \Bigr( \frac{R}{2\sqrt{d}}\Bigr)^{1+\gamma} \biggr) \label{equation:subexp-radius-tail}
    \end{align}
    By combining the above tail bound with the bounds on $|p_1(\x)|$ and $|p_2(\x)|$ (see \Cref{lemma:uniform-approximators} and \eqref{equation:pup-definition}), we also obtain the following bound:
    \begin{equation}
        \|\pup-\fup\|_{\D,2s} \le (d\ell_2)^{O(d\ell_2)} = \exp\biggr( \widetilde{O}\biggr( \frac{LRd}{\eps} \biggr) \biggr) \label{equation:pup-fup-bound-crude}
    \end{equation}
    Therefore we may combine \eqref{equation:pup-fup-bound}, \eqref{equation:subexp-radius-tail}, and \eqref{equation:pup-fup-bound-crude} to obtain that there is some appropriate choice for $R$ with $R = \widetilde{O}((Ls / \eps)^{1/\gamma} d^{0.5+ 1.5/{\gamma}})$ such that $\|\pup-\fup\|_{\D,s} \le O(\eps)$. The overall bound on the degree of $\pup$ is: \[\deg(\pup) = 2\ell_2 = \widetilde{O}\!\left(
        \frac{\sigma d^{3/2} s}{(\eps/2)^{s+1}}\right)^{1+1/\gamma}\,,\]
    {     as desired.}
\end{proof}

\subsection{Concepts with Low Intrinsic Dimension}

To complete the proof of \Cref{theorem:main}, it remains to account for the case where $k\le d$. To this end, we show that for a function $f(\x) = F(\mW\x)$ where $\mW\mW^\top = \mI_{k\times k}$, the boundary smoothness parameter of $f$ is equal to the boundary smoothness parameter associated with $F$. Our result then follows from the observation that for any polynomial $p$, the function $p(\mW\x)$ is a polynomial of the same degree.

\begin{proposition}
    Let $f\in\F$, where $\F$ satisfies \Cref{assumption:valid-concepts} with parameters $k,d,\sigma$ with respect to a distribution $\D$ over $\R^d$. Let $F:\R^k\to \cube{}$ be such that $f(\x) = F(\mW\x)$ where $\mW\in \R^{k\times d}$ and $\mW\mW^\top = \mI_{k\times k}$. Then $F$ has $\sigma$-smooth boundary with respect to the distribution of $\mW\x$ when $\x\sim \D$.
\end{proposition}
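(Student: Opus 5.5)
The plan is to show that dilation and erosion commute with the projection $\mW$: for every $\x\in\R^d$ and $\rho\ge 0$,
\[
f^{+\rho}(\x) = F^{+\rho}(\mW\x), \qquad f^{-\rho}(\x) = F^{-\rho}(\mW\x).
\]
Once these identities hold, the expectation $\E_{\x\sim\D}[(f^{+\rho}(\x)-f^{-\rho}(\x))/2]$, which is at most $\sigma\rho$ by \Cref{assumption:valid-concepts}, equals $\E_{\mathbf{y}}[(F^{+\rho}(\mathbf{y})-F^{-\rho}(\mathbf{y}))/2]$. Here $\mathbf{y}=\mW\x$ with $\x\sim\D$. This is exactly \Cref{definition:smooth-boundary} for $F$ with respect to the pushforward distribution, so the proposition follows. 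Since $F^{-\rho} = -(-F)^{+\rho}$, and likewise for $f$, it suffices to treat the dilation.

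For the dilation I compare the two suprema over their respective index sets. By definition,
\[
f^{+\rho}(\x)=\sup_{\|\z\|_2\le\rho}F(\mW\x+\mW\z).
\]
The key fact is that the image of the $\rho$-ball in $\R^d$ under $\mW$ is exactly the $\rho$-ball in $\R^k$. I would establish it in two directions.

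First, $\mW\mW^\top=\mI_{k\times k}$ means the rows of $\mW$ are orthonormal. Hence $\mW^\top\mW$ is an orthogonal projection, and $\|\mW\z\|_2=\|\mW^\top\mW\z\|_2\le\|\z\|_2$. So the image of the ball is contained in the $\rho$-ball of $\R^k$.

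Second, for any $\mathbf{u}\in\R^k$ with $\|\mathbf{u}\|_2\le\rho$, set $\z=\mW^\top\mathbf{u}$. Then $\|\z\|_2^2=\mathbf{u}^\top\mW\mW^\top\mathbf{u}=\|\mathbf{u}\|_2^2\le\rho^2$, and $\mW\z=\mathbf{u}$, so every point of the $\rho$-ball in $\R^k$ is attained.

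Together these give
\[
\{\mW\z:\|\z\|_2\le\rho\}=\{\mathbf{u}\in\R^k:\|\mathbf{u}\|_2\le\rho\},
\]
and therefore $f^{+\rho}(\x)=\sup_{\|\mathbf{u}\|_2\le\rho}F(\mW\x+\mathbf{u})=F^{+\rho}(\mW\x)$.

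None of these steps is a real obstacle. The only point needing care is the surjectivity of $\mW$ onto the ball, which is where the condition $\mW\mW^\top=\mI$ is used. Measurability of the dilated functions is implicitly assumed already in \Cref{definition:smooth-boundary}, so it needs no separate treatment.
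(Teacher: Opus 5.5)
Your proposal is correct and follows the paper's proof: both reduce the claim to the identities $f^{\pm\rho}(\x)=F^{\pm\rho}(\mW\x)$, using the fact that $\mW$ maps the $\rho$-ball of $\R^d$ onto the $\rho$-ball of $\R^k$. The paper states that fact without proof and handles erosion with ``similarly''; you prove the fact explicitly (contraction via the projection $\mW^\top\mW$, surjectivity via $\z=\mW^\top\mathbf{u}$) and derive the erosion case from $F^{-\rho}=-(-F)^{+\rho}$.
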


\begin{proof}
    It suffices to show that $f^{\pm\rho}(\x) = F^{\pm\rho}(\mW\x)$ for all $\x\in\R^d$. We have the following:
    \begin{align*}
        f^{+\rho}(\x) &= \sup_{\z\in\R^d:\|\z\|_2 \le \rho} f(\x+\z) \\
        &= \sup_{\z\in\R^d:\|\z\|_2 \le \rho} F(\mW\x+\mW\z) \\
        &= \sup_{\vt\in\R^k:\|\vt\|_2 \le \rho} F(\mW\x+\vt) = F^{+\rho}(\mW\x)\,,
    \end{align*}
    where we have used the fact that the image of the unit ball in $d$ dimensions under $\mW$ is the unit ball in $k$ dimensions. Similarly, we can show that $f^{-\rho}(\x) = F^{-\rho}(\mW\x)$.
\end{proof}

\begin{proof}[Proof of \Cref{theorem:main}.] 
    In \Cref{section:proof-full-dim} we proved \Cref{theorem:main} in the special case where $k=d$. 
    Let $f\in\F$ with $f(\x)=F(\mW\x)$. By the $k=d$ case, there exist $(\eps,s)$-sandwiching polynomials 
    $P_{\mathrm{up}},P_{\mathrm{down}}$ for $F$ with respect to the distribution of $\mW\x$ when $\x\sim\D$, 
    with the desired degree bound $\ell$.

    Define $\pup(\x)=P_{\mathrm{up}}(\mW\x)$ and $\pdown(\x)=P_{\mathrm{down}}(\mW\x)$. 
    Then $\pup,\pdown$ are $(\eps,s)$-sandwiching polynomials for $f$ with respect to $\D$. Moreover, we have $\|\pup\|_\coef,\|\pdown\|_\coef \le (d\ell)^{{O}(\ell)}$, because the coefficients of the linear polynomial $p_{\mathrm{lin}}(\x) = \mW\x$ satisfy $\|p_{\mathrm{lin}}\|_\coef \le \poly(d)$, and we also have $\|\pup\|_\coef \le \poly(\|P_{\mathrm{up}}\|_\coef, (\|p_{\mathrm{lin}}\|_\coef)^{O(\deg(P_{\mathrm{up}}))})$, where $\|P_{\mathrm{up}}\|_\coef \le (k\ell)^{O(\ell)}$.
\end{proof}

\section{Sandwiching Degree Bounds for Fundamental Concept Classes}\label{section:sandwiching-bounds-for-classes}

In this section, we provide a suite of new or improved sandwiching degree bounds for several important concept classes. 

One especially appealing property of boundary smoothness is that it behaves additively under composition. In particular, the following proposition shows that composing an arbitrary Boolean function with concepts of smooth boundary preserves boundary smoothness, with a smoothness parameter equal to the sum of the individual parameters.

\begin{proposition}\label{proposition:composition-of-boundary-smoothness}
    Let $g_1,g_2,\dots,g_k:\R^d \to \cube{}$ be such that $g_i$ has $\sigma_i$-smooth boundary with respect to a distribution $\D$ over $\R^d$. Then, for any $F:\R^k \to \cube{}$, the function $f(\x) = F(g_1(\x),g_2(\x),\dots,g_k(\x))$ has $\sum_{i\in[k]}\sigma_i$-smooth boundary with respect to $\D$.
\end{proposition}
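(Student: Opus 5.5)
The plan is to use the remark after \Cref{definition:smooth-boundary}: the quantity $\frac{1}{2}\E_{\x\sim\D}[f^{+\rho}(\x)-f^{-\rho}(\x)]$ is exactly the probability that $\x$ is $\rho$-close to the boundary of $f$. Concretely, $\frac{1}{2}(f^{+\rho}(\x)-f^{-\rho}(\x))$ is the indicator of the event $B_f(\x)$ that there exist $\z,\z'$ with $\|\z\|_2,\|\z'\|_2\le\rho$ and $f(\x+\z)\neq f(\x+\z')$. Equivalently, this event says $f$ is non-constant on the closed ball $\mathbb{B}(\x,\rho)$. So it suffices to prove the pointwise inclusion
\[
\ind\{B_f(\x)\}\le \sum_{i\in[k]}\ind\{B_{g_i}(\x)\}\quad\text{for all }\x\in\R^d.
\]
Once we have this, we take expectations over $\D$ and apply the $\sigma_i$-smoothness of each $g_i$. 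That gives $\E[\frac{1}{2}(f^{+\rho}-f^{-\rho})]\le\sum_i\sigma_i\rho$ for every $\rho\ge 0$, which is the claim. (Note $\sum_i\sigma_i\ge 1$, so the convention $\slackamp\ge1$ is respected.)

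The pointwise inclusion is proved by contraposition. Suppose no $B_{g_i}(\x)$ holds. Then each $g_i$ is constant on $\mathbb{B}(\x,\rho)$, so the vector $(g_1(\y),\dots,g_k(\y))$ is the same for all $\y\in\mathbb{B}(\x,\rho)$. Hence $f(\y)=F(g_1(\y),\dots,g_k(\y))$ is constant on the ball, so $B_f(\x)$ fails. In other words, $B_f(\x)\subseteq\bigcup_i B_{g_i}(\x)$, and the union bound gives the inequality of indicators.

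The only step needing care is the identification of $\frac{1}{2}(f^{+\rho}-f^{-\rho})$ with the indicator of non-constancy on the ball. We check it directly. Since $f$ is $\cube{}$-valued, $f^{+\rho}(\x)=1$ iff some point of the ball has value $1$, and $f^{-\rho}(\x)=-1$ iff some point has value $-1$. So the difference is $2$ exactly when both values occur on the ball, and $0$ otherwise. This holds for sup/inf over closed balls without any measurability issues in the pointwise argument. Measurability for the expectation is implicitly assumed by the definition itself. I do not expect a real obstacle; the argument is a union bound.
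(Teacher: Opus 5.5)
Your proposal is correct and follows essentially the same route as the paper. Both arguments observe pointwise that if $f$ is non-constant on the $\rho$-ball around $\x$ then some $g_i$ must be, and then apply a union bound together with the $\sigma_i$-smoothness of each $g_i$. Your explicit check that $\frac{1}{2}(f^{+\rho}-f^{-\rho})$ is the indicator of non-constancy on the closed ball, and your final bound $\sum_i \sigma_i \rho$, are both right; the paper's last display omits the factor $\rho$, which is a typo.
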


\begin{proof}
    Let $\x\in\R^d$ such that $f^{+\rho}(\x) \neq f^{-\rho}(\x)$. Then, there must be some $\z\in\R^d$ with $\|\z\|_2\le \rho$ and some $i\in[k]$ such that $g_i(\x)\neq g_i(\x+\z)$. Moreover, we have $g_i^{+\rho}(\x)-g_i^{-\rho}(\x) \in \{0,2\}$ for all $\x$. Therefore, we have:
    \begin{align*}
        \E_{\x\sim\D}\biggr[ \frac{f^{+\rho}(\x) - f^{-\rho}(\x)}{2} \biggr] &\le \sum_{i\in[k]} \E_{\x\sim\D}\biggr[ \frac{g_i^{+\rho}(\x) - g_i^{-\rho}(\x)}{2} \biggr] \le \sum_{i\in[k]}\sigma_i\,,
    \end{align*}
    { as desired.}
\end{proof}
As we shall see, several fundamental concept classes—including intersections of halfspaces, low-dimensional polynomial threshold functions, and low-dimensional convex sets—have smooth boundary. Together with \Cref{theorem:main}, the above proposition yields sandwiching degree bounds not only for each of these classes individually, but also for arbitrary Boolean combinations thereof.

\subsection{Functions of Halfspaces}

We begin by focusing on the fundamental class of halfspaces. The decision boundary of a halfspace is a hyperplane. Therefore, the probability of falling near the boundary of a halfspace is equal to the probability of falling within a narrow band. As such, the smooth boundary condition for halfspaces is satisfied precisely for distributions that are anticoncentrated in the following sense.

\begin{definition}[Anticoncentration]\label{definition:anticoncentration}
    For $\alpha>0$, we say that a distribution $\D$ over $\R^d$ is $\alpha$-anticoncentrated in every direction if for any $\w\in\S^{d-1}$, any $t\in\R$, and any $r\ge 0$,
    \[
        \pr_{\x\sim\D}[|\w\cdot \x - t|\le r]\le \alpha\, r\,.
    \]
    In what follows, we may omit the parameter $\alpha$ and absorb it into universal constants under big-O notation.
\end{definition}

The following proposition is immediate from the definition of anticoncentration.

\begin{proposition}\label{proposition:halfspaces-smooth-boundary}
    Let $\D$ be some distribution over $\R^d$ that is anticoncentrated in every direction.
    Then, the class of halfspaces has $O(1)$-smooth boundary with respect to $\D$.
\end{proposition}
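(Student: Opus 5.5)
We prove the statement directly from \Cref{definition:smooth-boundary}, by identifying the set of points that lie within distance $\rho$ of the boundary of a halfspace. Fix a halfspace $f(\x)=\sign(\w\cdot\x-t)$ with $\w\in\S^{d-1}$ and $t\in\R$; normalizing $\w$ loses no generality, since rescaling $(\w,t)$ by a positive scalar leaves $f$ unchanged. Using the convention $\sign(0)=1$ (any convention only affects the boundary hyperplane itself), we compute the dilation and erosion explicitly. Since $\sup_{\|\z\|_2\le\rho}\w\cdot(\x+\z)=\w\cdot\x+\rho$, attained at $\z=\rho\w$, we get $f^{+\rho}(\x)=\sign(\w\cdot\x-t+\rho)$. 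Symmetrically, $f^{-\rho}(\x)$ equals $1$ exactly when $\w\cdot\x-t-\rho\ge 0$.

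From these formulas, $f^{+\rho}(\x)\neq f^{-\rho}(\x)$ can only happen when $-\rho\le \w\cdot\x-t<\rho$, and in particular only when $|\w\cdot\x-t|\le\rho$. Since $\frac{f^{+\rho}(\x)-f^{-\rho}(\x)}{2}\in\{0,1\}$, this gives
\[
\E_{\x\sim\D}\Bigl[\tfrac{f^{+\rho}(\x)-f^{-\rho}(\x)}{2}\Bigr]\le \pr_{\x\sim\D}\bigl[|\w\cdot\x-t|\le\rho\bigr]\le \alpha\rho,
\]
where the last step is \Cref{definition:anticoncentration}.

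Hence every halfspace has $\max\{1,\alpha\}$-smooth boundary; the maximum is taken because the definition requires $\sigma\ge1$. This constant is $O(1)$, which proves the statement.

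The degenerate case $\w=0$ gives a constant function, for which $f^{+\rho}=f^{-\rho}$, so the expectation is $0$ and the bound holds trivially. No step presents a real obstacle. The only care needed is with the sign convention on the hyperplane $\{\w\cdot\x=t\}$, and that is absorbed by the closed band $|\w\cdot\x-t|\le\rho$ appearing in the anticoncentration bound.
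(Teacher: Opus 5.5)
Your proof is correct and follows the same route as the paper: you identify the event $f^{+\rho}(\x)\neq f^{-\rho}(\x)$ with (a subset of) the band $|\w\cdot\x-t|\le\rho$ and then apply anticoncentration. Where the paper asserts equality with $\pr[|\w\cdot\x-t|\le\rho]$, you use an inequality and also handle the $\sign(0)$ convention and the requirement $\sigma\ge 1$ explicitly, which is, if anything, slightly more careful.
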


\begin{proof}
    Let $f(\x) = \sign(\w\cdot \x-t)$ for some $\w\in\S^{d-1}$ and $t\in\R$. Then, we have:
    \begin{align*}
        \E_{\x\sim\D}\biggr[ \frac{f^{+\rho}(\x) - f^{-\rho}(\x)}{2} \biggr] &= \pr_{\x\sim\D}[|\w\cdot \x - t|\le \rho] \le O(1)\cdot \rho\,,
    \end{align*}
    { as desired.}
\end{proof}

\paragraph{General Functions of Halfspaces} The following result provides an exponential improvement over the best known bound for the $(\eps=0.1, s=1)$-sandwiching degree of functions of $k$ halfspaces with respect to any anticoncentrated and strictly subexponential distribution \cite{gopalan2010fooling,gollakota2022moment}.

\begin{theorem}\label{theorem:sandwiching-functions-of-halfspaces}
    Let $\F$ be the class of arbitrary functions of $k$ halfspaces over $\R^d$ and let $\D$ be some distribution over $\R^d$ that is anticoncentrated in every direction and $\gamma$-strictly subexponential. Then, the $(\eps,s)$-sandwiching degree of $\F$ with respect to $\D$ is 
    \[
        \ell(\eps,s) \le \widetilde{O}\biggr(\frac{k^{5/2}s}{(\eps/2)^{s+1}}\biggr)^{1+1/\gamma}\,.
    \]
\end{theorem}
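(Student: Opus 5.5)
We combine \Cref{theorem:main} with the composition rule for boundary smoothness. Let $f(\x) = G(\sign(\w_1\cdot\x-\tau_1),\dots,\sign(\w_k\cdot\x-\tau_k))$ with $\w_i\in\S^{d-1}$ and $G:\cube{k}\to\cube{}$ arbitrary. We verify the three parts of \Cref{assumption:valid-concepts} with intrinsic dimension $k$ and smoothness parameter $\sigma = O(k)$, and then substitute these into the degree bound of \Cref{theorem:main}.

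\textbf{Smoothness.} Each halfspace $g_i(\x)=\sign(\w_i\cdot\x-\tau_i)$ has $O(1)$-smooth boundary with respect to $\D$ by \Cref{proposition:halfspaces-smooth-boundary}, since $\D$ is anticoncentrated in every direction. Then \Cref{proposition:composition-of-boundary-smoothness} shows that $f$ has $\sigma$-smooth boundary with $\sigma=\sum_i O(1)=O(k)$. The sign convention at the boundary ($\sign(0)$) does not matter: the band $\{|\w\cdot\x-t|\le\rho\}$ is closed, so it contains every point $\rho$-close to the boundary, and the bound holds for any convention.

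\textbf{Low intrinsic dimension.} Let $V=\mathrm{span}\{\w_1,\dots,\w_k\}$, which has dimension $k'\le k$. Choose $\mW\in\R^{k\times d}$ with orthonormal rows whose span contains $V$. If $k'<k$, pad with extra orthonormal directions; if $d<k$, the statement is trivial with $k=d$. Each $\w_i$ lies in the row space, so $\w_i=\mW^\top\mathbf{u}_i$ with $\mathbf{u}_i=\mW\w_i$, and hence $\w_i\cdot\x=\mathbf{u}_i\cdot(\mW\x)$. Therefore $f(\x)=F(\mW\x)$ where $F(\mathbf{y})=G(\sign(\mathbf{u}_1\cdot\mathbf{y}-\tau_1),\dots)$, and $\mW\mW^\top=\mI_{k\times k}$.

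\textbf{Tails and conclusion.} $\D$ is $\gamma$-strictly subexponential by hypothesis. So the class of functions of $k$ halfspaces satisfies \Cref{assumption:valid-concepts} with parameters $k,d,\sigma=O(k),\gamma$. Applying \Cref{theorem:main} gives
\[
\ell(\eps,s)\le \widetilde{O}\!\left(\frac{O(k)\,k^{3/2}s}{(\eps/2)^{s+1}}\right)^{1+1/\gamma}=\widetilde{O}\!\left(\frac{k^{5/2}s}{(\eps/2)^{s+1}}\right)^{1+1/\gamma},
\]
as claimed.

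\textbf{Main obstacle.} The only step needing care is the reduction to $k$ dimensions. \Cref{theorem:main} internally applies the full-dimensional case to the pushforward distribution of $\mW\x$, so this pushforward must itself be strictly subexponential in $\R^k$ and $F$ must be $\sigma$-smooth with respect to it. Smoothness holds by the preceding proposition on $f^{\pm\rho}=F^{\pm\rho}\circ\mW$. The tail condition transfers because, for $\mathbf{v}\in\S^{k-1}$, $\mathbf{v}\cdot\mW\x=(\mW^\top\mathbf{v})\cdot\x$ and $\mW^\top\mathbf{v}\in\S^{d-1}$. Both facts are already used in the proof of \Cref{theorem:main}, so no new work is needed beyond checking the constants.
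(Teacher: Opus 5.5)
Your proposal is correct and matches the paper's argument: the paper obtains $\sigma = O(k)$ from \Cref{proposition:composition-of-boundary-smoothness} and \Cref{proposition:halfspaces-smooth-boundary}, packaged as \Cref{lemma:halfspace-functions-boundary-smoothness}, and substitutes this into \Cref{theorem:main}. Your explicit checks fill in details the paper leaves implicit: the intrinsic-dimension condition via the span of the normals (padding $\mW$, or handling $d<k$ by monotonicity of the bound) and the strict subexponential tail bound for the pushforward $\mW\x$.
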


The above theorem is an immediate corollary of \Cref{theorem:main} and the following lemma, which is a combination of \Cref{proposition:composition-of-boundary-smoothness,proposition:halfspaces-smooth-boundary}.

\begin{lemma}[Combination of \Cref{proposition:composition-of-boundary-smoothness,proposition:halfspaces-smooth-boundary}]\label{lemma:halfspace-functions-boundary-smoothness}
    Let $\F$ be the class of arbitrary functions of $k$ halfspaces over $\R^d$ and let $\D$ be some distribution over $\R^d$ that is anticoncentrated in every direction. Then, $\F$ has $O(k)$-smooth boundary with respect to $\D$. 
\end{lemma}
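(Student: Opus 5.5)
The lemma follows by composing the two earlier propositions; the only care needed is in how a function of $k$ halfspaces is written. Fix $f\in\F$, so that
\[
f(\x)=G\bigl(\sign(\w_1\cdot\x-\tau_1),\dots,\sign(\w_k\cdot\x-\tau_k)\bigr)
\]
for some $G:\cube{k}\to\cube{}$. Without loss of generality each $\w_i\in\S^{d-1}$: rescaling $\w_i$ and $\tau_i$ by the same positive factor leaves the sign unchanged. If some $\w_i=0$, the corresponding halfspace is constant and we treat it as in the next paragraph.

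Set $g_i(\x)=\sign(\w_i\cdot\x-\tau_i)$. By \Cref{proposition:halfspaces-smooth-boundary}, each $g_i$ has $\sigma_i$-smooth boundary with respect to $\D$, where $\sigma_i=O(1)$. The constant depends only on the anticoncentration parameter $\alpha$ of $\D$ and not on $\w_i$ or $\tau_i$.

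A constant $g_i$ satisfies $g_i^{+\rho}=g_i^{-\rho}$, so its boundary term is $0\le\sigma_i\rho$. Also, if some $\sigma_i<1$, we may enlarge it to $\max\{1,O(1)\}$, since \Cref{definition:smooth-boundary} requires $\sigma\ge1$.

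Next, extend $G$ arbitrarily to a function $\R^k\to\cube{}$. This is harmless because it is only ever evaluated on $\cube{k}$. Applying \Cref{proposition:composition-of-boundary-smoothness} then gives that $f=G(g_1,\dots,g_k)$ has $\sum_{i\in[k]}\sigma_i=O(k)$-smooth boundary with respect to $\D$.

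The bound is uniform over $f\in\F$, because the constant from \Cref{proposition:halfspaces-smooth-boundary} is uniform over all halfspaces. This uniformity is exactly what is needed to apply \Cref{theorem:main} with $\sigma=O(k)$.

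There is essentially no obstacle. The only points needing care are the normalization of the halfspace normals, the degenerate constant halfspaces, and the fact that $G$ has domain $\cube{k}$ rather than $\R^k$; each is handled by the remarks above.
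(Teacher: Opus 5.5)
Your proof is correct and is the paper's argument: each $g_i$ has $O(1)$-smooth boundary by the halfspace proposition, with a constant depending only on the anticoncentration parameter, and the composition proposition sums these to $O(k)$. Your remarks on normalizing $\w_i$, constant halfspaces, the requirement $\sigma\ge 1$, and extending $G$ from $\cube{k}$ to $\R^k$ are valid details that the paper leaves implicit.
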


Note that \Cref{theorem:sandwiching-functions-of-halfspaces} also provides a polynomial improvement over the previous state-of-the-art upper bound on the sandwiching degree of halfspace intersections with respect to the Gaussian distribution for constant $\eps$ \cite{gopalan2010fooling,klivans2023testable}. In particular, the best known bound before this work was $O(k^6)$, and \Cref{theorem:sandwiching-functions-of-halfspaces} gives a bound of $\widetilde{O}(k^5)$ for the Gaussian distribution ($\gamma = 1$). 

\paragraph{Improvements for Intersections of Halfspaces under the Gaussian} In the following theorem, we show that we can actually improve the sandwiching bound of $k$-halfspace intersections with respect to the Gaussian distribution to $\widetilde{O}(k^3)$.

\begin{theorem}\label{theorem:sandwiching-intersections-of-halfspaces}
    Let $\F$ be the class of intersections of $k$ halfspaces over $\R^d$. Then, the $(\eps,s)$-sandwiching degree of $\F$ with respect to the standard Gaussian distribution is 
    \[
        \ell(\eps,s) \le \widetilde{O}\biggr(\frac{k^{3}s}{(\eps/2)^{2s+2}}\biggr)\,.
    \]
\end{theorem}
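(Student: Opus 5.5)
The plan is to feed a sharper boundary-smoothness estimate into \Cref{theorem:main}. The generic estimate from \Cref{lemma:halfspace-functions-boundary-smoothness} gives $\sigma = O(k)$. For intersections of halfspaces under the Gaussian, the classical surface-area bound of \cite{nazarov2003maximal} says that any intersection of $k$ halfspaces has Gaussian surface area $O(\sqrt{\log k})$. So the key step is to convert this infinitesimal statement into the non-infinitesimal smoothness condition of \Cref{definition:smooth-boundary}, i.e.\ to prove $\sigma = O(\sqrt{\log k})$ for every $\rho \ge 0$.

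\textbf{Step 1: reduce the boundary layer to a slab between parallel polytopes.} Write $f(\x) = 1$ iff $\x \in K = \bigcap_{i\in[k]}\{\w_i\cdot\x \ge t_i\}$ with $\|\w_i\|_2=1$. The erosion is exactly $\{f^{-\rho}=1\} = \bigcap_i\{\w_i\cdot \x \ge t_i+\rho\}$. The dilation is contained in $\bigcap_i\{\w_i\cdot\x\ge t_i-\rho\}$, since moving by a vector of norm at most $\rho$ changes each $\w_i\cdot\x$ by at most $\rho$. For $u\in\R$ define $K_u := \bigcap_i\{\w_i\cdot\x\ge t_i+u\}$. Then
\[
\tfrac12\E_{\x\sim\Gauss_d}\bigl[f^{+\rho}(\x)-f^{-\rho}(\x)\bigr] \le \Gauss_d(K_{-\rho}) - \Gauss_d(K_{\rho}).
\]

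\textbf{Step 2: differentiate in the shift parameter.} The map $u\mapsto \Gauss_d(K_u)$ is monotone and Lipschitz. Its derivative is bounded in absolute value by the Gaussian surface area of $K_u$. The reason is that increasing $u$ moves each facet of $K_u$ inward at unit normal speed, so the lost mass per unit $u$ is the integral of the Gaussian density over the facets, which is at most $\Gamma(K_u)$. Each $K_u$ is again an intersection of $k$ halfspaces, so Nazarov's bound gives $\Gamma(K_u)=O(\sqrt{\log k})$ uniformly in $u$. Integrating over $u\in[-\rho,\rho]$ yields the bound $O(\rho\sqrt{\log k})$. Hence intersections of $k$ halfspaces have $O(\sqrt{\log k})$-smooth boundary with respect to $\Gauss_d$; they also have intrinsic dimension at most $k$ (span of the normals).

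\textbf{Step 3: apply \Cref{theorem:main}.} Use $\gamma=1$ (the Gaussian is $1$-strictly subexponential), intrinsic dimension $k$, and $\sigma=O(\sqrt{\log k})$. The bound becomes $\widetilde{O}\bigl(\sqrt{\log k}\,k^{3/2}s/(\eps/2)^{s+1}\bigr)^{2}$, and the $\log k$ factor is absorbed into the $\widetilde{O}$. This gives $k^3$ and $(\eps/2)^{-(2s+2)}$ as stated.

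I expect two points to need care. The first is Step 2, where one must justify the derivative bound rigorously for polytopes, including lower-dimensional faces, which have measure zero. An alternative is to compute $\Gauss_d(K_{-\rho}\setminus K_\rho)$ directly as an integral over facets via the coarea formula. The second is the exponent of $s$: squaring $s$ gives $s^2$, while the statement has $s$. I would check whether the $\widetilde{O}$ convention in \Cref{theorem:main} (where $R$ scales as $(Ls/\eps)^{1/\gamma}$) in fact yields only a linear factor of $s$ in this Gaussian case, tracking the polynomial-in-$s$ factor explicitly if needed.
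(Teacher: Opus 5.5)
Your proposal is correct and follows the paper's own route. The paper also bounds $f^{+\rho}$ above and $f^{-\rho}$ below by the bias-shifted polytopes. It then applies the coarea formula to the slack function $\Psi(\x)=\max_i\{\w_i\cdot\x-\tau_i\}$ (unit gradient a.e.), so each level set is the boundary of an intersection of $k$ halfspaces and Nazarov's bound yields $O(\sqrt{\log k})$-smoothness; this is exactly the rigorous form of your Step 2. Finally it invokes \Cref{theorem:main} with $\gamma=1$.

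Your worry about $s^2$ versus $s$ is harmless. Tracking $R=\widetilde{O}\bigl((Ls/\eps)^{1/\gamma}d^{0.5+1.5/\gamma}\bigr)$ through the proof of \Cref{theorem:main} gives only a factor $s^{1/\gamma}=s$. In any case $s\le\log_2\bigl((2/\eps)^{2s+2}\bigr)$, so polynomial factors of $s$ are absorbed by the $\widetilde{O}$.
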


The above theorem is a corollary of \Cref{theorem:main} combined with the following result which gives a sharp bound on the boundary smoothness parameter of halfspace intersections.

\begin{lemma}[Boundary Smoothness of Halfspace Intersections]\label{lemma:intersections-smooth-boundary}
    Let $\F$ be the class of intersection of $k$ halfspaces over $\R^d$. Then, $\F$ has $O(\sqrt{\log k})$-smooth boundary with respect to the standard Gaussian distribution.
\end{lemma}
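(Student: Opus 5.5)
Let $K=\{\x: \w_i\cdot\x\le t_i,\ i\in[k]\}$ be the positive region of $f$, with $\w_i\in\S^{d-1}$. Then $f^{+\rho}-f^{-\rho}$ equals $2$ exactly on the set of points within distance $\rho$ of $\partial K$. This set is contained in $K^{+\rho}\setminus K^{-\rho}$, where $K^{+\rho}=K+\rho B$ and $K^{-\rho}=\{\x:\x+\rho B\subseteq K\}$. So it suffices to bound $\Gauss_d(K^{+\rho})-\Gauss_d(K^{-\rho})$ by $O(\sqrt{\log k})\,\rho$.

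\textbf{Erosion side.} Erosion of a polyhedron is explicit: $K^{-\rho}=\{\x:\w_i\cdot\x\le t_i-\rho\ \forall i\}$. For dilation, note $K^{+\rho}\subseteq K_\rho:=\{\x:\w_i\cdot\x\le t_i+\rho\ \forall i\}$, since every point within distance $\rho$ of $K$ satisfies each constraint with slack $\rho$. Hence it suffices to bound $\Gauss_d(K_\rho)-\Gauss_d(K_{-\rho})$, where $K_u=\{\x:\w_i\cdot\x\le t_i+u\ \forall i\}$.

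\textbf{Surface-area step.} The function $\phi(u)=\Gauss_d(K_u)$ is nondecreasing and absolutely continuous, so
\[
\phi(\rho)-\phi(-\rho)=\int_{-\rho}^{\rho}\phi'(u)\,du .
\]
The derivative $\phi'(u)$ is the rate at which Gaussian mass enters when every facet is translated outward at unit normal speed. This equals the Gaussian surface area of the convex polytope $K_u$, namely $\sum_i\int_{F_i}\varphi_d\,dA$ taken over its facets. Nazarov's bound \citep{nazarov2003maximal} (also used in \cite{klivans2008learning}) states that a polytope with $k$ facets has Gaussian surface area $O(\sqrt{\log k})$. Each $K_u$ is such a polytope, so $\phi'(u)\le O(\sqrt{\log k})$ for almost every $u$. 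Integrating gives $\phi(\rho)-\phi(-\rho)\le O(\sqrt{\log k})\cdot 2\rho$, which is the claimed smoothness.

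\textbf{Main obstacle.} The delicate step is rigorously identifying $\phi'(u)$ with the facet-weighted surface integral that Nazarov bounds. I would handle it without differentiating. Write
\[
K_{u+h}\setminus K_u\subseteq \bigcup_i\{\x\in K_{u+h}: t_i+u<\w_i\cdot\x\le t_i+u+h\}.
\]
Each slab piece projects onto facet $i$ of $K_{u+h}$. Bounding the Gaussian mass of the union by $h$ times the surface measure of $\partial K_{u+h}$, plus $o(h)$, follows by Fubini along each normal direction $\w_i$. The Gaussian density varies by a factor $1+O(h\cdot\|\x\|)$ across a slab of width $h$; I would handle this by truncating to a large ball, which costs negligible mass. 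Degenerate cases, such as empty or unbounded $K_u$ or redundant constraints, only reduce the surface area and do not affect the bound.
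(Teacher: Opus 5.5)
Your proof is correct and is essentially the paper's. The paper also sandwiches $f^{\pm\rho}$ between the bias-shifted polytopes $K_{\pm\rho}$ (with equality on the erosion side) and integrates Nazarov's bound $\sqrt{2\ln k}+2$ for the level polytopes $K_u$ over $u\in[-\rho,\rho]$. The one difference is that the paper gets $\pr_{\x\sim\Gauss_d}[|\Psi(\x)|\le\rho]=\int_{-\rho}^{\rho}\int_{\Psi^{-1}(u)}\varphi_d\,d\H^{d-1}\,du$ in one step from the coarea formula for $\Psi(\x)=\max_i(\w_i\cdot\x-t_i)$, whose gradient has unit norm a.e., instead of your slab/Fubini difference quotient. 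If you keep your route, projecting a slab piece along $\w_i$ can exit facet $i$ by an $O(h)$-wide strip when some $\w_j\cdot\w_i>0$; this costs only $O(h^2)$ mass. Also, since $\varphi_d$ factorizes along $\w_i$, the density ratio across the slab depends only on the offset $t_i+u$, so truncating to a ball is unnecessary.
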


At the core of \Cref{lemma:intersections-smooth-boundary} is the following bound follows from a result by Nazarov on the Gaussian surface area of halfspace intersections (see \cite{nazarov2003maximal,klivans2008learning}) as well as the fact that halfspace intersections have $(d-1)$-rectifiable boundaries. See \Cref{section:GSA-rectifiable} and \cite{kane2014average,de2023gaussian} for relevant discussions.

\begin{fact}[Nazarov (see also \cite{nazarov2003maximal,klivans2008learning})]\label{fact:nazarov-gsa-bound-intersections}
    Let $f:\R^d\to \cube{}$ be an intersection of $k$ halfspaces. Let $\partial f $ be the boundary of $f$, i.e., the set of points $\x$ where for any $\rho>0$ there exists $\z\in\R^d$ with $\|\z\|_2\le \rho$ and $f(\x)\neq f(\x+\z)$. Then, we have the following:
    \[
        \int_{\partial f} \phi(\x)\,d\H^{d-1}(\x) \le \sqrt{2\ln k} + 2\,,
    \]
    where $\phi$ is the standard Gaussian density $\phi(\x) = (2\pi)^{-d/2} \exp(-\|\x\|_2^2/2)$ and $\H^{d-1}$ is the $(d-1)$-dimensional Hausdorff measure.
\end{fact}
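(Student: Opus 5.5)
We would prove the bound with Nazarov's argument: split the halfspaces into ``near'' and ``far'' ones according to their distance from the origin. The far faces are controlled by a crude one-dimensional Gaussian bound. The near faces are controlled by a Mills-ratio argument on the outer normal shell of a convex polytope. Write $f(\x)=1$ iff $\x\in K=\bigcap_{i\in[k]}\{\w_i\cdot\x\le t_i\}$ with $\w_i\in\S^{d-1}$. Then $\partial f=\partial K\subseteq\bigcup_i (\partial K\cap H_i)$, where $H_i=\{\w_i\cdot\x=t_i\}$. Since $\partial K$ is a finite union of flat $(d-1)$-dimensional pieces (plus lower-dimensional sets of $\H^{d-1}$-measure zero), the integral $\int_{\partial f}\phi\,d\H^{d-1}$ is a sum of integrals over the faces $F_i=\partial K\cap H_i$. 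Degenerate cases (empty $K$, $K=\R^d$, or repeated hyperplanes) are trivial or only decrease the sum.

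Fix $r=\sqrt{2\ln k}$ and let $N=\{i: t_i\le r\}$ and $M=\{i:t_i>r\}$. For $i\in M$ we use
\[
\int_{F_i}\phi\,d\H^{d-1}\le\int_{H_i}\phi\,d\H^{d-1}=\tfrac{1}{\sqrt{2\pi}}e^{-t_i^2/2}\le \tfrac{1}{\sqrt{2\pi}}e^{-r^2/2}=\tfrac{1}{\sqrt{2\pi}\,k}.
\]
Summing over at most $k$ such indices gives at most $1/\sqrt{2\pi}\le 1$. For $i\in N$, let $K'=\bigcap_{i\in N}\{\w_i\cdot\x\le t_i\}\supseteq K$. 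Any $\x\in F_i$ with $i\in N$ lies in $K\subseteq K'$ and on $H_i$, so it lies in $\partial K'$ on the face of $K'$ with normal $\w_i$. Hence $\sum_{i\in N}\int_{F_i}\phi\le\int_{\partial K'}\phi\,d\H^{d-1}$, and it remains to show that this is at most $r+1$. Together with the far bound this gives $\sqrt{2\ln k}+2$.

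For the convex polytope $K'$, all of whose facet offsets satisfy $t_i\le r$, consider the map $(\mathbf{y},u)\mapsto\mathbf{y}+u\,\w_i$ from $\mathrm{relint}(F'_i)\times(0,\infty)$, where $F'_i$ is the facet of $K'$ on $H_i$. Its images over distinct facets are disjoint, because nearest-point projection onto a closed convex set is unique and the nearest point of $\mathbf{y}+u\w_i$ is $\mathbf{y}$. The image lies in $\R^d\setminus K'$. Because each facet is flat, the map is an isometry in the coordinates $(\mathbf{y},u)$, so the change of variables has Jacobian $1$. Therefore
\[
1\ \ge\ \pr_{\x\sim\Gauss_d}[\x\notin K']\ \ge\ \sum_{i\in N}\int_{F'_i}\int_0^\infty\phi(\mathbf{y}+u\w_i)\,du\,d\H^{d-1}(\mathbf{y}).
\]
Since $\mathbf{y}\cdot\w_i=t_i$, we have $\phi(\mathbf{y}+u\w_i)=\phi(\mathbf{y})e^{-ut_i-u^2/2}$. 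The inner integral equals $\phi(\mathbf{y})\cdot m(t_i)$, where $m(t)=e^{t^2/2}\int_t^\infty e^{-v^2/2}dv$ is the Mills ratio. The function $m$ is decreasing, and the standard bound $m(t)\ge 2/(t+\sqrt{t^2+4})\ge 1/(t+1)$ holds for $t\ge0$. Hence $m(t_i)\ge 1/(r+1)$ for every $t_i\le r$, including negative $t_i$. This yields $\int_{\partial K'}\phi\le r+1$.

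The main obstacle is rigor in the shell computation: justifying that the normal-coordinate parametrization is injective and measure-preserving, and that edges and lower-dimensional faces (where normals are not unique) contribute zero $\H^{d-1}$-measure and can be discarded. For polytopes this is elementary, since everything reduces to finitely many flat facets. The one analytic input to verify is the Mills-ratio lower bound, which follows from the standard Gordon-type inequality. Everything else is bookkeeping with the choice $r=\sqrt{2\ln k}$.
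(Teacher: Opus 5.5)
Your argument is correct and is essentially the standard proof of Nazarov's bound as reproduced in \cite{klivans2008learning}; the paper does not prove this Fact itself and only cites it. The shared structure is: split faces at offset $\sqrt{2\ln k}$, bound the far faces by $k\cdot\frac{1}{\sqrt{2\pi}}e^{-\ln k}$, and bound the near faces using the disjoint outward normal rays together with the Mills-ratio bound $m(t)\ge 1/(t+1)$.

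One small fix is needed. The step $\sum_{i\in N}\int_{F_i}\phi\le\int_{\partial K'}\phi$ fails when $K$ is flat, since two opposite halfspaces sharing a boundary hyperplane count the same face twice. Your shell computation, however, already bounds the facet sum $\sum_{i\in N}\int_{K'\cap H_i}\phi$ with multiplicity, so chain through that sum directly. You can also drop $K'$, because the normal shells over the faces of $K$ itself are already pairwise disjoint.
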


Nazarov's result gives an asymptotic version of \Cref{lemma:intersections-smooth-boundary}, i.e., it shows that for an intersection of $k$ halfspaces $f$ we have:
\[
    \lim_{\rho\to 0^+} \E\biggr[\frac{f^{+\rho}(\x)- f^{-\rho}(\x)}{2\rho} \biggr] \le O(\log k)
\]
To obtain a non-asymptotic version of this result, we use the coarea formula, which is an integration tool from geometric measure theory \cite{federer1969geometric}. To apply the coarea formula properly, we define the following alternative notions of dilation and erosion which are specialized to halfspace intersections. Crucially, these operations generate functions that themselves are intersections of the same number of halfspaces.

\begin{definition}[Bias-shift Dilation and Erosion]\label{definition:bias-shift-dilation-erosion}
        Let $f:\R^d\to \cube{}$ be an intersection of $k$ halfspaces, i.e., $f(\x) = 2\cdot \prod_{i=1}^k \ind\{\w_i\cdot \x\le \tau_i\} - 1 $ for some $\w_i\in\S^{d-1}, \tau_i\in \R$. For $\rho \ge 0$, we define the $\rho$-bias-shift dilation $f^{\boxplus\rho}$, as well as the $\rho$-bias-shift erosion $f^{\boxminus\rho}$ of $f$ as follows.
    \begin{align*}
        f^{\boxplus\rho}(\x) &= 2\cdot \prod_{i=1}^k \ind\{\w_i\cdot \x\le \tau_i+\rho\} - 1
        \\ f^{\boxminus\rho}(\x) &= 2\cdot \prod_{i=1}^k \ind\{\w_i\cdot \x\le \tau_i-\rho\} - 1
    \end{align*}
\end{definition}

Moreover, in the following proposition we show that the expected difference between $f^{+\rho}$ and $f^{-\rho}$ is dominated by the difference $f^{\boxplus\rho}-f^{\boxminus\rho}$. This is important for our analysis, because we will essentially integrate the surface integrals of $f^{\boxplus t}, f^{\boxminus t}$ over $t \in [0,\rho]$, and will use the fact that these functions are halfspace intersections to invoke \Cref{fact:nazarov-gsa-bound-intersections}.

\begin{proposition}\label{proposition:bias-shift-dilation-vs-regular-dilation}
    Let $f:\R^d\to \cube{}$ be an intersection of $k$ halfspaces. We have: 
    \[ f^{\boxminus\rho}(\x) = f^{-\rho}(\x)  \le f(\x) \le f^{+\rho}(\x) \le f^{\boxplus\rho}(\x)\,, \text{ for all } \x\in\R^d\text{ and } \rho\ge 0\]
\end{proposition}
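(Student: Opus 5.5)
The plan is to prove the chain one link at a time, working with positive regions. Write $K=\{\x: f(\x)=1\}=\bigcap_{i}\{\w_i\cdot\x\le\tau_i\}$ and $K_{\pm\rho}=\bigcap_i\{\w_i\cdot\x\le\tau_i\pm\rho\}$, so that $f^{\boxplus\rho}$ and $f^{\boxminus\rho}$ are the $\pm1$ indicators of $K_{+\rho}$ and $K_{-\rho}$. The middle inequalities $f^{-\rho}\le f\le f^{+\rho}$ are immediate from the definitions by taking $\z=0$ in the supremum and the infimum. That leaves two claims: the inequality $f^{+\rho}\le f^{\boxplus\rho}$, and the equality $f^{-\rho}=f^{\boxminus\rho}$.

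For $f^{+\rho}\le f^{\boxplus\rho}$, it suffices to show that $f^{+\rho}(\x)=1$ forces $\x\in K_{+\rho}$. If $f^{+\rho}(\x)=1$, there is $\z$ with $\|\z\|_2\le\rho$ and $\x+\z\in K$. Then for every $i$,
\[
\w_i\cdot\x=\w_i\cdot(\x+\z)-\w_i\cdot\z\le\tau_i+\|\w_i\|_2\|\z\|_2\le\tau_i+\rho,
\]
using Cauchy--Schwarz and $\w_i\in\S^{d-1}$. Hence $f^{\boxplus\rho}(\x)=1$.

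For the equality, I will show the two implications separately.

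First, $f^{\boxminus\rho}(\x)=1$ implies $f^{-\rho}(\x)=1$. If $\w_i\cdot\x\le\tau_i-\rho$ for all $i$, then for every $\|\z\|_2\le\rho$ we have $\w_i\cdot(\x+\z)\le\tau_i-\rho+\rho=\tau_i$. So the entire ball around $\x$ lies in $K$, and the infimum equals $1$.

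Second, $f^{-\rho}(\x)=1$ implies $f^{\boxminus\rho}(\x)=1$. I argue by contrapositive. Suppose some $i$ has $\w_i\cdot\x>\tau_i-\rho$. Take $\z=\rho\,\w_i$, which has norm exactly $\rho$. Then $\w_i\cdot(\x+\z)=\w_i\cdot\x+\rho>\tau_i$, so $\x+\z\notin K$, and therefore $f^{-\rho}(\x)=-1$.

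Since both functions are $\pm1$-valued, these two implications give equality. No step here is a real obstacle. The only point needing care is the erosion equality, which relies on the normals being unit vectors so that the shift $\rho\w_i$ has norm exactly $\rho$. The dilation side is only an inequality, because the union of the shifted slabs is not a Minkowski sum near the corners of $K$. The two sides are asymmetric in this way, and that asymmetry is why only one of them holds with equality.
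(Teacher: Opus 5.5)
Your proof is correct and follows the paper's argument. Cauchy--Schwarz with unit normals gives $f^{+\rho}\le f^{\boxplus\rho}$ and $f^{\boxminus\rho}\le f^{-\rho}$, and a shift by $\rho\,\w_i$ along a violated normal gives the reverse erosion inequality; the only cosmetic difference is that the paper shifts along the maximally violated constraint $i^*(\x)=\argmax_i\{\w_i\cdot\x-\tau_i\}$, whereas your contrapositive with any index satisfying $\w_i\cdot\x>\tau_i-\rho$ is slightly more direct.
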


\begin{proof}
    Observe that for all $i\in[k]$ we have that $|\w_i\cdot \z| \le \|\z\|_2$. Therefore, we have the following:
    \begin{align*}
        f^{+\rho}(\x) = \sup_{\z:\|\z\|_2\le \rho} f(\x+\z) &= \sup_{\z:\|\z\|_2\le \rho} 2\cdot \prod_{i=1}^k \ind\{\w_i\cdot \x\le \tau_i-\w_i\cdot \z\} - 1 \\
        &\le \sup_{\z:\|\z\|_2\le \rho} 2\cdot \prod_{i=1}^k \ind\{\w_i\cdot \x\le \tau_i+\rho\} - 1 \\ &= f^{\boxplus \rho}(\x)
    \end{align*}
    We have shown that $f^{+\rho} \le f^{\boxplus \rho}$. A symmetric argument gives $f^{-\rho}\ge f^{\boxminus\rho}$. Moreover, in the case of erosion, we obtain $f^{-\rho}= f^{\boxminus\rho}$. To see this, let $\z^*=\z^*(\x) = \rho\, \w_{i^*(\x)}$ where $i^*(\x) = \argmax_{i\in[k]} \{\w_i\cdot \x-\tau_i\}$. Suppose that $f(\x) = 1$, since the other case is trivial. We have:
    \begin{align*}
        f^{-\rho}(\x) &= 2\cdot \inf_{\z:\|\z\|_2\le \rho} \ind\{\w_{i^*}\cdot (\x+\z) \le \tau_{i^*}\} - 1 \\
        &= 2\cdot\ind\{\w_{i^*}\cdot (\x+\z^*) \le \tau_{i^*}\}-1 \\
        &= 2\cdot\ind\{\w_{i^*}\cdot \x \le \tau_{i^*} - \rho\}-1 \\
        &= f^{\boxminus\rho}(\x)\,,
    \end{align*}
    since the coordinate $i^*$ corresponds to the smallest gap between $\w_{i}\cdot \x$ and $\tau_{i}$ among all $i\in[k]$.
\end{proof}

We are now ready to prove \Cref{lemma:intersections-smooth-boundary}.

\begin{proof} of \Cref{lemma:intersections-smooth-boundary}. 
    Let $f$ be an intersection of $k$ halfspaces, i.e., $f(\x) = 2\cdot \prod_{i=1}^k \ind\{\w_i\cdot \x\le \tau_i\} - 1 $ for some $\w_i\in\S^{d-1}$ and $\tau_i \in \R$. Due to \Cref{proposition:bias-shift-dilation-vs-regular-dilation}, we have the following for any $\rho\ge 0$:
    \[
        \E_{\x\sim\Gauss_d}\Bigr[\frac{f^{+\rho}(\x)-f^{-\rho}(\x)}{2}\Bigr] \le \E_{\x\sim\Gauss_d}\Bigr[\frac{f^{\boxplus\rho}(\x)-f^{\boxminus\rho}(\x)}{2}\Bigr]
    \]
    It is now convenient to define the following slack function:
    \[
        \Psi(\x) = \max_{i\in[k]} \{\w_i\cdot \x-\tau_i\}
    \]
    Using this notation, we have:
    \begin{align*}
        \E_{\x\sim\Gauss_d}\Bigr[\frac{f^{\boxplus\rho}(\x)-f^{\boxminus\rho}(\x)}{2}\Bigr] &= \pr_{\x\sim\Gauss_d}\Bigr[\Psi(\x)\in[-\rho,\rho]\Bigr] = \int_{\x: |\Psi(\x)|\le \rho} \phi(\x)\,d\x
    \end{align*}
    Observe that the function $\Psi$ is $1$-Lipschitz as a maximum of affine functions. Hence, apart from a measure-zero set $\mathsf{ND}_{\Psi}$, we have $\|\nabla \Psi(\x)\|_2 = 1$, because it equals $\|\w_i\|_2 $ for an active constraint $i$. We may apply the coarea formula (Theorem 3.2.22 in \cite{federer1969geometric}) to obtain:
    \begin{align*}
        \int_{\x:|\Psi(\x)|\le \rho} \phi(\x)\,d\x &= \int_{t=-\rho}^{\rho} \int_{\x\in \Psi^{-1}(t)\setminus\mathsf{ND}_\Psi} \frac{\phi(\x)}{\|\nabla \Psi(\x)\|_2} \,d\H^{d-1} \, dt \\
        &= \int_{t=-\rho}^{\rho} \int_{\x\in \Psi^{-1}(t)} {\phi(\x)} \,d\H^{d-1} \, dt
    \end{align*}
    Observe now that, for $t\in[-\rho,\rho]$ with $r = |t|$, the set $\Psi^{-1}(t) = \{\x:\Psi(\x) = t\}$ is either equal to $\partial f^{\boxplus r}$ or $\partial f^{\boxminus r}$ depending on whether $t\ge 0$ or $t<0$ (see \Cref{definition:boundary}). The function $f^{\boxplus r}$ is always an intersection of halfspaces. The function $f^{\boxminus r}$ is either an intersection of halfspaces, or $\partial f^{\boxminus r} = \emptyset$. Therefore, by applying \Cref{fact:nazarov-gsa-bound-intersections}, we obtain:
    \[
        \int_{\x: |\Psi(\x)|\le \rho} \phi(\x)\,d\x = \int_{t=-\rho}^\rho \bigr(\sqrt{2\ln k} + 2\bigr) \, dt = O\bigr(\sqrt{\log k}\bigr) \, \rho \,,
    \]
    which concludes the proof of \Cref{lemma:intersections-smooth-boundary}.
\end{proof}

\subsection{Convex Sets and Polynomial Threshold Functions}

\paragraph{Low-Dimensional Convex Sets}
Another important geometric concept class is that of low-dimensional convex sets. While the boundary smoothness of convex sets has been studied in prior work \citep{nazarov2003maximal,chandrasekaran2024efficient}, there have been no end-to-end results on the sandwiching degree of convex sets. Previous work \citep{de2023gaussian} showed that convex sets in $k$ dimensions under the Gaussian distribution admit upper sandwiching functions representable as intersections of $2^{O(k)}$ halfspaces. Combined with the sandwiching bounds for intersections of halfspaces from \cite{gopalan2010fooling}, this implies the existence of upper sandwiching polynomials for convex sets, albeit of exponential degree. While this approach could in principle be extended to obtain lower sandwiching polynomials as well, our result achieves an exponential improvement by establishing sandwiching degree $\poly(k)$ for convex sets under the Gaussian distribution.

\begin{theorem}\label{theorem:sandwiching-convex-sets}
    Let $\F$ be the class of convex functions in $\R^d$ with intrinsic dimension $k$. Then, the $(\eps,s)$-sandwiching degree of $\F$ with respect to $\Gauss(0,\mI_{d\times d})$ is 
    \[
        \ell(\eps,s) \le \widetilde{O}\biggr(\frac{k^{5/2}s}{(\eps/2)^{s+1}}\biggr)^{1+1/\gamma}\,.
    \]
\end{theorem}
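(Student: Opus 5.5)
We will derive \Cref{theorem:sandwiching-convex-sets} as a corollary of \Cref{theorem:main}, following the pattern used for halfspaces. What is needed is a boundary smoothness bound of the form $\sigma = O(k)$, or better, for convex sets in $\R^k$ with respect to $\Gauss_k$. Once this is in hand, the main theorem with $\gamma = 1$ (the Gaussian is $1$-strictly subexponential) gives degree $\widetilde O(\sigma k^{3/2} s/(\eps/2)^{s+1})^{2}$. With $\sigma = O(k)$ this yields the stated $k^{5/2}$ inside the power. Intrinsic dimension is handled exactly as in the proof of \Cref{theorem:main}: write $f(\x) = F(\mW\x)$, so that $\mW\x\sim\Gauss_k$, and invoke the proposition relating the smoothness of $f$ to that of $F$. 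It therefore suffices to show that every convex $F:\R^k\to\cube{}$ (whose positive region is a convex set $K$) has $O(k)$-smooth boundary under $\Gauss_k$; the weaker $O(k)$ bound is all we need.

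The key step is to bound $\pr_{\x\sim\Gauss_k}[\dist(\x,\partial K)\le\rho]$ by $O(k)\rho$ (in fact $O(k^{1/4})\rho$ is expected). We split this probability into an outer shell $K^{+\rho}\setminus K$ and an inner shell $K\setminus K^{-\rho}$. For the outer shell we use the convexity of the dilations $K_t = K + tB$ together with the coarea formula applied to $\x\mapsto \dist(\x,K)$. This function is $1$-Lipschitz, has gradient norm $1$ outside $K$, and its level sets are the boundaries of the convex sets $K_t$. This gives
\[\pr[0<\dist(\x,K)\le\rho]=\int_0^\rho \int_{\partial K_t}\phi\,d\H^{k-1}\,dt.\]
Nazarov's bound $O(k^{1/4})$ on the Gaussian surface area of convex sets in $\R^k$ (Ball's bound $O(\sqrt k)$ would also suffice) then bounds each inner integral uniformly. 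For the inner shell we apply the same argument to the signed distance $-\dist(\x,K^c)$ inside $K$. Its level sets are the boundaries of the eroded sets $K^{-t}=\{\x:\dist(\x,K^c)\ge t\}$, which are again convex (as intersections of translated halfspaces, or as erosions of a convex set by a ball), so the same surface area bound applies. We would also handle degenerate cases, such as $K$ empty, all of $\R^k$, or of lower dimension, where $\partial K$ has zero $k$-dimensional measure or the smoothness is trivial. As in the halfspace case, rectifiability of convex boundaries justifies equating the Hausdorff-measure surface integral with Gaussian surface area (\Cref{section:GSA-rectifiable}).

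We expect the main obstacle to be the measure-theoretic justification of the coarea step. This requires checking that the distance function has unit gradient almost everywhere off $\partial K$, that the level sets coincide with $\partial K_t$ (respectively $\partial K^{-t}$) up to $\H^{k-1}$-null sets, and that the cited surface area fact applies to every convex body, including unbounded ones. If the inner shell causes trouble, a fallback is the cruder monotonicity argument: the Gaussian surface area of $K^{-t}$ is bounded by the same universal convex bound, and nothing more is needed. Either way, we obtain $\sigma = O(k^{1/4})\le O(k)$. Substituting into \Cref{theorem:main} then completes the proof.
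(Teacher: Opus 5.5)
Your proposal is correct, but you reach the smoothness bound by a different route than the paper. The paper's proof is a one-line combination of \Cref{theorem:main} with a lemma cited from \cite{chandrasekaran2024efficient}. That lemma asserts that convex sets of intrinsic dimension $k$ have $O(k\log k)$-smooth boundary under the Gaussian, and the $\log k$ is absorbed into the $\widetilde{O}$, producing the $k^{5/2}$.

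You instead prove smoothness directly, by carrying over to general convex sets the coarea argument the paper uses for halfspace intersections (\Cref{lemma:intersections-smooth-boundary}). The outer and inner shells are foliated by the boundaries of the parallel bodies $\bar K+tB$ and $K^{-t}$. Both are convex; the latter because on $K$ the function $\dist(\cdot,K^c)$ is an infimum of affine functions, hence concave. So each slice has Gaussian surface area at most the universal convex bound $4k^{1/4}$. (That upper bound is already Ball's; Nazarov sharpened the constant and proved tightness. Your attributions are swapped, which is harmless.)

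The technical points you flag do go through:
\begin{itemize}
\item $\dist(\cdot,\bar K)$ is $C^1$ with unit gradient off $\bar K$, and $\dist(\cdot,K^c)$ has unit gradient a.e.\ in the interior of $K$.
\item By concavity, every level set with $0<t<\sup\dist(\cdot,K^c)$ is exactly $\partial K^{-t}$; the top level is a single value of $t$.
\item Unbounded sets are handled by truncating with large balls, and lower-dimensional $K$ needs only the outer shell.
\end{itemize}
For the reduction to $\R^k$, you need the converse of the paper's proposition (smoothness of $F$ under $\Gauss_k$ gives smoothness of $f$ under $\Gauss_d$). This follows from the same identity $f^{\pm\rho}(\x)=F^{\pm\rho}(\mW\x)$, and $F$ is convex since $\{F=1\}=\{\mathbf{y}:\mW^\top\mathbf{y}\in K\}$.

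As for what each route buys: the paper's is shorter but rests on an external and weaker bound. Yours is self-contained modulo Ball's theorem and the coarea formula, and it yields $\sigma=O(k^{1/4})$, more than the statement needs. Plugged into \Cref{theorem:main}, it would improve the bound to $\widetilde{O}(k^{7/4}s/(\eps/2)^{s+1})^{2}$, i.e.\ $\widetilde{O}(k^{7/2})$ rather than $\widetilde{O}(k^{5})$ for constant $\eps$ and $s$.
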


The above theorem follows from combining \Cref{theorem:main} with the following result from \cite{chandrasekaran2024efficient}.

\begin{lemma}[\cite{chandrasekaran2024efficient}]
    The class of convex functions in $\R^d$ with intrinsic dimension $k$ has $\sigma$-smooth boundary for $\sigma = O(k \log k)$.
\end{lemma}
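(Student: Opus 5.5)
The plan is to reduce to the full-dimensional case $k=d$ and then bound the measure of a $\rho$-neighborhood of the boundary of a convex set under $\Gauss_k$. The reduction works as in the proof of \Cref{theorem:main}. A convex function with intrinsic dimension $k$ has the form $f(\x)=F(\mW\x)$ with $\mW\mW^\top=\mI_{k\times k}$, where $F$ is the indicator (in $\pm1$ form) of a convex set $K\subseteq\R^k$. Moreover, $\mW\x\sim\Gauss_k$ when $\x\sim\Gauss_d$. By the preceding proposition, $f^{\pm\rho}(\x)=F^{\pm\rho}(\mW\x)$, so it suffices to show that $\pr_{\z\sim\Gauss_k}[\dist(\z,\partial K)\le\rho]\le O(k\log k)\,\rho$ for every convex $K$ and every $\rho\ge0$.

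For the bound itself I would use the shell decomposition $\{\z:\dist(\z,\partial K)\le\rho\}\subseteq (K^{+\rho}\setminus K)\cup(K\setminus K^{-\rho})$, where $K^{+\rho}$ is the Minkowski sum with the $\rho$-ball and $K^{-\rho}$ the erosion. Both $K^{+\rho}$ and $K^{-\rho}$ are convex, and every intermediate set $K^{\pm t}$ for $t\in[0,\rho]$ is convex too. Consider $\Psi(\z)=\dist(\z,K)-\dist(\z,K^c)$ (signed distance), which is $1$-Lipschitz with $\|\nabla\Psi\|_2=1$ almost everywhere. I would apply the coarea formula exactly as in the proof of \Cref{lemma:intersections-smooth-boundary}:
\[
\pr[|\Psi(\z)|\le\rho]=\int_{-\rho}^{\rho}\int_{\Psi^{-1}(t)}\phi\,d\H^{k-1}\,dt .
\]
Each level set $\Psi^{-1}(t)$ is the boundary of the convex set $K^{\pm|t|}$ (or is empty). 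By Ball's/Nazarov's bound, the Gaussian surface area of any convex set in $\R^k$ is $O(k^{1/4})$. This already yields $O(k^{1/4})\rho$, which is stronger than the $O(k\log k)$ claimed. One only needs the boundaries to be $(k-1)$-rectifiable, which holds for convex sets, as discussed in \Cref{section:GSA-rectifiable}.

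The main obstacle is that the surface-area bound must hold uniformly over all the sets $K^{\pm t}$, including degenerate ones (for instance a lower-dimensional $K$, or an empty erosion). Uniformity itself is not an issue, since the Gaussian surface area bound is uniform over all convex sets. For degenerate $K$, whose boundary is $K$ itself, I would first perturb $K$ to a full-dimensional convex body or treat it directly, since the $\rho$-neighborhood of a lower-dimensional convex set is again a convex body. If this route became technical, I would fall back on the cited argument of \cite{chandrasekaran2024efficient} for the weaker $O(k\log k)$ bound. That argument truncates to a ball of radius $O(\sqrt{k\log k})$ and uses the bound $\mathrm{vol}(K^{+\rho}\setminus K)\le\rho\cdot\mathrm{surf}(K^{+\rho})$ together with the monotonicity of surface area under inclusion of convex bodies, so that the neighborhood inside the ball has measure at most $\rho$ times the surface area of the ball times the maximum of the density, and the outside contributes negligibly.

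Finally, I would plug $\sigma=O(k\log k)$ into \Cref{theorem:main} with $\gamma=1$ for the Gaussian.
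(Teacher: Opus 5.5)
Your argument is correct, and it follows a different route from the paper. The paper gives no proof of this lemma; it imports the $O(k\log k)$ bound from \cite{chandrasekaran2024efficient}.

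What you do instead is reuse the template of the paper's own proof of \Cref{lemma:intersections-smooth-boundary}:
\begin{itemize}
\item reduce to $\Gauss_k$ via $f^{\pm\rho}(\x)=F^{\pm\rho}(\mW\x)$, where the positive region of $F$ is convex because it is a linear image of a convex set;
\item contain the event $\{f^{+\rho}\neq f^{-\rho}\}$ in $\{|\Psi|\le\rho\}$ for the signed distance $\Psi$;
\item apply the coarea formula;
\item control each level set by a surface-area bound that is uniform over a family closed under dilation and erosion.
\end{itemize}
Here that family is all convex sets, and Ball's/Nazarov's $O(k^{1/4})$ bound \cite{ball93,nazarov2003maximal} replaces Nazarov's $\sqrt{2\ln k}+2$. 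This buys a self-contained proof and a strictly better parameter, $\sigma=O(k^{1/4})$. Plugged into \Cref{theorem:main}, it would improve the Gaussian bound of \Cref{theorem:sandwiching-convex-sets} from $\widetilde{O}(k^{5})$ to $\widetilde{O}(k^{7/2})$ for constant $\eps,s$. The paper's citation buys only brevity.

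Two small remarks.

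First, the degeneracies you flag as the main obstacle need no perturbation. Since $\partial K$ is Lebesgue-null for convex $K$, you have $\|\nabla\Psi\|_2=1$ almost everywhere. The only levels at which $\Psi^{-1}(t)$ can fail to be the boundary of a full-dimensional convex body are $t=0$ (when $K$ is lower-dimensional) and $t=-r$, with $r$ the inradius of $K$. That is a null set of levels, which the $dt$-integral in the coarea formula ignores.

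Second, your fallback, as described, would not give $O(k\log k)$. Bounding the Gaussian density by its maximum $(2\pi)^{-k/2}$ and multiplying by the surface area of a ball of radius $\sqrt{k}$ already gives a quantity of order $e^{k/2}$. That route would have to weight by the density on the shell where the Gaussian mass actually lives. Fortunately, your main argument does not need it.
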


\paragraph{Low-Dimensional PTFs} The last class we consider is that of polynomial threshold functions. Recent work by \cite{slot2024testably} building on the approach of \cite{kane2011kindependent} gave a bound on the sandwiching degree of degree-$q$ PTFs that does not depend on the intrinsic dimension $k$, but the dependence on $q$ is doubly exponential or worse.\footnote{The dependence on $q$ is not made explicit in prior work, as $q$ is considered to be a constant.} In the following bound, we achieve an exponential improvement over prior work, as long as $q=\Omega(\log k)$, or a doubly-exponential improvement if $q = \poly(k)$.

\begin{theorem}\label{theorem:sandwiching-polynomial-threshold-functions}
    Let $\F$ be the class of degree-$q$ PTFs in $\R^d$ with intrinsic dimension $k$. Then, the $(\eps,s)$-sandwiching degree of $\F$ with respect to $\Gauss(0,\mI_{d\times d})$ is 
    \[
        \ell(\eps,s) \le \widetilde{O}\biggr(\frac{q^3 k^{5/2}s}{(\eps/2)^{s+1}}\biggr)^{1+1/\gamma}\,.
    \]
\end{theorem}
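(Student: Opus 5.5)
The plan is to deduce the bound directly from \Cref{theorem:main}, taking $\D=\Gauss(0,\mI_{d\times d})$ and $\gamma=1$. The Gaussian is $1$-strictly subexponential, and the low intrinsic dimension condition of \Cref{assumption:valid-concepts} holds by the definition of the class. Indeed, $f(\x)=\sign(P(\mW\x))$ with $P$ a degree-$q$ polynomial on $\R^k$ and $\mW\mW^\top=\mI_{k\times k}$. So the only new ingredient is a bound on the boundary smoothness parameter $\sigma$ of degree-$q$ PTFs with intrinsic dimension $k$ under the Gaussian. Plugging $\sigma=\widetilde{O}(q^3k)$, or anything of that order, into $\sigma k^{3/2}$ gives the stated $q^3k^{5/2}$ dependence.

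By the proposition on low intrinsic dimension, smoothness of $f$ w.r.t.\ $\Gauss_d$ equals smoothness of $F=\sign(P)$ w.r.t.\ $\Gauss_k$. It therefore suffices to show that for every degree-$q$ polynomial $P$ on $\R^k$ and every $\rho\ge 0$,
\[
\pr_{\x\sim\Gauss_k}\bigl[\exists \z,\ \|\z\|_2\le\rho,\ \sign P(\x+\z)\ne\sign P(\x)\bigr]\le \widetilde{O}(q^3 k)\,\rho .
\]
For large $\rho$ (say $\rho\ge 1/\sqrt{k}$ or so) the bound is trivial, so only small $\rho$ matters. My route is to reduce to lines. If $\x$ is $\rho$-close to the zero set $\{P=0\}$, then some segment from $\x$ of length $\le\rho$ hits a zero. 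I would control this by a union/averaging over directions. For a random direction $\mathbf{u}$, the restriction $t\mapsto P(\x+t\mathbf{u})$ is a univariate degree-$q$ polynomial with at most $q$ roots. The set of $\x$ near the boundary can then be covered by a bounded family of thin slabs along each coordinate-type direction, together with a cone argument: a point at distance $\le\rho$ from a zero has a zero within distance about $\rho\sqrt{k}$ along a constant fraction of directions, or more directly along some direction in a fixed net. Alternatively, and more cleanly, I would mimic the halfspace-intersection argument. Write the event as $\dist(\x,Z)\le\rho$ with $Z=P^{-1}(0)$, apply the coarea formula to the $1$-Lipschitz function $\x\mapsto\dist(\x,Z)$, and bound each level set's Gaussian surface area. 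Level sets of the distance function are not PTFs, however, so instead I would bound the Gaussian measure of the tube via Crofton-type integral geometry. The $\rho$-tube around $Z$ has Gaussian mass at most $\rho$ times an integral over lines of the number of intersections, and this is bounded by $q\cdot O(\sqrt{k})$ per unit of $\rho$, plus lower-order curvature terms of the tube (Weyl tube formula terms involving $\rho^j$ with coefficients polynomial in $q$).

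The main obstacle is this non-asymptotic tube estimate. The Gaussian surface area of a degree-$q$ PTF is $O(q)$ (Kane), but that is only the $\rho\to0$ limit. Higher-order terms in the tube expansion bring curvature, which is controlled by degree only through algebraic bounds, e.g.\ the number of connected components and the degree of the Gauss map. That is where extra factors like $q^2$ and $k$ could enter. I would first try to bound the mass of the tube by averaging over random lines through $\x$: the $\rho$-ball around $\x$ meets $Z$ only if a random line through a nearby point does so with reasonable probability. The count of roots of a univariate restriction within a short interval, at most $q$ with anticoncentration of one-dimensional Gaussian marginals, then yields a bound like $O(q\cdot q^2 k)\rho$ after paying for the solid-angle loss and a Markov/Carbery–Wright step. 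If sharper constants fail, any $\poly(q)k$ bound still gives a result of the same shape, and I would adjust the exponent accordingly.
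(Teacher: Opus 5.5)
Your outer reduction is exactly the paper's proof. You apply \Cref{theorem:main} with $\gamma=1$ (the Gaussian is $1$-strictly subexponential), note that intrinsic dimension $k$ holds by definition, and plug in $\sigma=O(q^3k)$ so that $\sigma k^{3/2}$ becomes $q^3k^{5/2}$. The difference lies in the one nontrivial ingredient. The paper does not prove the boundary-smoothness bound for PTFs; it imports it as a lemma from \cite{chandrasekaran2024efficient}. You try to derive it yourself, and that derivation has a genuine gap: none of your sketched routes yields a bound linear in $\rho$ with rate $q^3k$. Your fallback (``any $\poly(q)k$ bound \ldots\ adjust the exponent'') would prove a different statement from the one claimed.

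Three steps fail concretely.

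(i) Your pointwise claim is false: it is not true that a point within distance $\rho$ of $Z=P^{-1}(0)$ sees a zero within distance $O(\rho\sqrt{k})$ along a constant fraction of directions. Take $P(\x)=\|\x-\mathbf{c}\|_2^2-r^2$ with $r\le\rho$ and $\|\x-\mathbf{c}\|_2=\rho+r$. A line through $\x$ meets $Z$ only if its direction lies in a cap of angular radius $\arcsin(r/(\rho+r))\le\pi/6$. That is a fraction of order $r/\rho$ in the plane, and exponentially small in $k$ in general. A fixed net fine enough to catch such caps has size exponential in $k$, so a union bound over it destroys the linear dependence on $k$. The tube around such a small sphere has tiny mass, so this is a failure of your method, not of the lemma. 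Line arguments only see the locally flat, first-order part of the tube. Pieces of $Z$ that are small or curved at scale $\rho$ produce the higher-order contributions, and those are what must be controlled.

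(ii) Weyl's tube formula requires a smooth submanifold and $\rho$ below its reach. But $P^{-1}(0)$ can be singular (e.g.\ $x_1x_2=0$, which has reach $0$) and unbounded, so your ``curvature terms polynomial in $q$'' are asserted rather than bounded. What you would actually need is a non-asymptotic tube-volume estimate for real algebraic hypersurfaces, of Wongkew/Lotz type, with terms like $\binom{k}{j}(O(q)\rho)^j$ on a ball. It would then have to be combined carefully with the Gaussian weight and checked to give $O(q^3k)\rho$ in the nontrivial range $\rho\lesssim 1/(q^3k)$.

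(iii) Carbery--Wright controls small values of $P$ at a $t^{1/q}$ rate and says nothing directly about distance to $Z$. It does not produce a linear-in-$\rho$ estimate, and the figure $O(q\cdot q^2k)\rho$ does not follow from any step you state.

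The fix is to drop your second half and invoke the smoothness lemma of \cite{chandrasekaran2024efficient}. With that, your argument coincides with the paper's.
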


Once more, our result follows from \Cref{theorem:main} combined with a corresponding bound on the boundary smoothness parameter of PTFs from \cite{chandrasekaran2024efficient}.

\begin{lemma}[\cite{chandrasekaran2024efficient}]
    The class of degree-$q$ PTFs in $\R^d$ with intrinsic dimension $k$ has $\sigma$-smooth boundary for $\sigma = O(q^3k)$.
\end{lemma}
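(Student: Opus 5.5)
We reduce to the $k$-dimensional case and then bound the Gaussian mass of a tube around a real algebraic hypersurface.

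\textbf{Reduction and trivial range.} Write $f(\x)=F(\mW\x)$ with $F=\sign(P)$ for a degree-$q$ polynomial $P$ on $\R^k$. The proposition in the previous section shows $f^{\pm\rho}(\x)=F^{\pm\rho}(\mW\x)$. Moreover, $\mW\x\sim\Gauss_k$ when $\x\sim\Gauss_d$. So it suffices to show that $F$ has $O(q^3k)$-smooth boundary with respect to $\Gauss_k$. Since the left-hand side of \Cref{definition:smooth-boundary} is a probability, the claim is trivial once $\sigma\rho\ge 1$. Hence we may assume $\rho\le 1/(Cq^3k)$.

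\textbf{Geometric inclusion.} If $F^{+\rho}(\x)\neq F^{-\rho}(\x)$, then the closed ball $B(\x,\rho)$ contains points where $P>0$ and points where $P\le 0$. By continuity of $P$ along the segment joining them, $B(\x,\rho)$ meets the zero set $Z=\{P=0\}$; the degenerate case $P\equiv 0$ gives a constant concept. Thus the event lies in the tube $T_\rho(Z)=\{\x:\dist(\x,Z)\le\rho\}$, and it suffices to show $\Gauss_k(T_\rho(Z))\le O(q^3k)\rho$ for small $\rho$.

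\textbf{Shell decomposition.} Fix a radius $R_0=\Theta(\sqrt{k}+\sqrt{\log(1/\rho)})$. The mass outside $B(0,R_0)$ is at most $\rho$ by Gaussian norm concentration. Inside, cover $B(0,R_0)$ by unit-scale pieces (balls or shells on which the Gaussian density varies by at most a constant factor), and bound the Lebesgue volume of $T_\rho(Z)\cap B(\mathbf{c},r)$ on each piece. For this I would use the Weyl--Wongkew tube bound for real algebraic hypersurfaces:
\[
\mathrm{Vol}\bigl(T_\rho(Z)\cap B(\mathbf{c},r)\bigr)\le \sum_{j=1}^{k} c_j(k)\,q^{j}\,\rho^{j}\,r^{k-j}.
\]
Its proof goes through Crofton-type formulas, using that a generic line meets $Z$ in at most $q$ points, and more generally that generic affine slices of $Z$ have intrinsic volumes controlled via Milnor--Thom bounds. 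Weighting by the Gaussian density and summing over pieces, the $j=1$ term gives (density)$\times q\rho\times$(surface measure of spheres at the typical radius $\sqrt{k}$). This is $O(q\sqrt{k})\rho$ or $O(qk)\rho$ depending on how crude the covering is. The terms with $j\ge 2$ carry factors $(q\rho)^{j-1}$ times dimension-dependent constants. These are absorbed using $\rho\le 1/(Cq^3k)$, which is where polynomial losses like $q^3$ appear.

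\textbf{Main obstacle.} The hard step is controlling the constants $c_j(k)$. Wongkew's bound is stated with constants that may grow quickly in $k$, while the target is only polynomial. If the general bound is too lossy, I would replace it with an averaged-line argument. Take a uniformly random direction $\mathbf{u}$ and decompose $\Gauss_k$ into one-dimensional Gaussians along lines parallel to $\mathbf{u}$. On each line, $Z$ contributes at most $q$ points, so the one-dimensional $\rho'$-neighborhood has mass $O(q\rho')$. It then remains to show that a point $\x\in T_\rho(Z)$ lies, for a constant fraction of directions $\mathbf{u}$, within distance $\rho'=\poly(q,k)\rho$ of $Z$ along the line through $\x$ in direction $\mathbf{u}$. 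I would derive this from a Markov/Bernstein-type inequality for the restriction of $P$ to the 2-plane through $\x$ and its nearest point of $Z$: near the nearest point, $Z$ cannot curve away too fast at scale $\rho$ relative to $q$. This transversality estimate is the step I expect to need the most care, and it is the likely source of the $q^3$ and the linear factor $k$.
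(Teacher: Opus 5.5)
The paper gives no proof of this lemma; it imports it from \cite{chandrasekaran2024efficient}, so your argument has to stand on its own. Your reduction to $\Gauss_k$ and the inclusion of the event in $T_\rho(Z)$ are fine. However, the one substantive step, $\Gauss_k(T_\rho(Z))\le O(q^3k)\rho$, is not established. In the main route you leave the constants $c_j(k)$ open. Absorbing the $j\ge 2$ terms under $\rho\le 1/(Cq^3k)$ needs $c_j(k)/\mathrm{vol}(B_1)\le \poly(k)^j$, which is exactly what you flag as unknown, and nothing in the argument actually produces the factor $q^3$. The passage from ball-wise Lebesgue bounds to Gaussian mass is also wrong as described. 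A unit ball centred at distance $\approx\sqrt{k}$ from the origin sees the density vary by a factor $e^{2\sqrt{k}}$, so unit-scale pieces of nearly constant density do not exist in high dimension. Shrinking to radius $1/\sqrt{k}$ inflates the first-order term, which scales like $q\rho/r$ relative to the ball volume, by $\sqrt{k}$. In addition, any covering of $\R^k$ by equal balls has density $\Omega(k)$, and a naive grid cover has density $e^{\Theta(k)}$.

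The fallback does not close the gap, because its pointwise claim is false. Take $P(\x)=\|\x-\z_0\|_2^2-\eta^2$ with $0<\eta\ll\rho$ and $\|\x-\z_0\|_2=\rho$. Then $B(\x,\rho)$ contains $\z_0$, where $P<0$, and $\x$, where $P>0$, so $\x$ lies in the event. Yet a line through $\x$ meets $Z$ only if its direction is within angle $\arcsin(\eta/\rho)$ of $\pm(\z_0-\x)$. That is a fraction of directions which vanishes as $\eta\to 0$, whatever $\rho'$ is. No Markov/Bernstein inequality rules this out, since degree-$2$ zero sets can have curvature $1/\eta$ for any $\eta$. Line counting only captures the first-order (Crofton) term; mass near small or highly curved pieces of $Z$ is exactly what the higher-order tube terms measure.

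Your main route can be finished in two steps. First, use an explicit Wongkew-type bound (B\"urgisser--Cucker--Lotz, and Lotz's Euclidean version). It bounds the fraction of any radius-$r$ ball occupied by $T_\rho(Z)$ by $4\sum_{j=1}^{k}\binom{k}{j}(2q\rho/r)^j(1+\rho/r)^{k-j}$, which is $O(qk\rho/r)$ for $\rho\le r/(Cqk)$. Second, replace the covering by averaging over centres. Write $\phi$ for the density of $\Gauss_k$. Since $\phi(\x+\mathbf{u})=\phi(\x)e^{-\x\cdot\mathbf{u}-\|\mathbf{u}\|_2^2/2}$ and $\mathbf{u}$ uniform on $B_r$ has mean zero, Jensen gives $\phi(\x)\le e^{r^2/2}\,\mathrm{vol}(B_r)^{-1}\int_{B(\x,r)}\phi$. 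By Fubini, $\Gauss_k(T_\rho(Z))\le e^{r^2/2}\sup_{\mathbf{c}}\mathrm{vol}(T_\rho(Z)\cap B(\mathbf{c},r))/\mathrm{vol}(B_r)$. With $r=1$ this gives $O(qk)\rho$ for $\rho\le 1/(Cqk)$, and the trivial bound handles larger $\rho$. This yields the stated $O(q^3k)$, and in fact improves on it.
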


\section{Applications}\label{section:applications}

Sandwiching polynomials are a strong and versatile tool. While standard approximation theory provides low-degree polynomials that approximate functions in expectation, sandwiching polynomials additionally constrain the approximated function to lie within a narrow envelope. Crucially, this guarantee holds pointwise over the entire domain, rather than merely in expectation with respect to a given distribution. As a result, sandwiching polynomials remain representative of the target function even under distributions other than the one defining the approximation-in-expectation guarantee. This property has recently led to several concrete applications of sandwiching in learning theory, including learning under distribution shift \citep{klivans2023testable,chandrasekaran2024efficient}, learning with heavy contamination \citep{klivans2025the}, and testable learning \citep{rubinfeld2022testing,gollakota2022moment}.

In what follows, we describe a collection of implications of our theorem for learning theory, as well as for pseudorandomness.

\subsection{Learning Theory}\label{section:learning-applications}

A recent line of work in learning theory has focused on frameworks for algorithms that generate certificates of their own performance \citep{goldwasser2020beyond,rubinfeld2022testing,klivans2023testable}. This line of work emerged in response to the strong assumptions that are ubiquitous in learning theory and are often either unverifiable from samples or computationally intractable to check. Across these frameworks, it has been shown that the existence of low-degree sandwiching polynomials suffices for efficient learnability \citep{gollakota2022moment,klivans2023testable,goel2024tolerant,chandrasekaran2024efficient}.

Moreover, the techniques developed in this line of work have been used to obtain near-optimal upper bounds \citep{pmlr-v291-klivans25a,klivans2025the} on the computational complexity of the classical problem of learning under contamination \citep{Valiant85,KearnsL93,BSHOUTY2002255}. In fact, leveraging sandwiching polynomials allows these results to extend even to the setting of heavy contamination \citep{klivans2025the}.

All of our learning-theoretic applications follow from combining \Cref{theorem:main} with existing results showing that sufficiently low sandwiching degree implies efficient learnability under the corresponding learning primitive. These results additionally require uniform convergence for the relevant class of polynomials under the target marginal distribution $\D$. Since the sandwiching polynomials provided by \Cref{theorem:main} have bounded coefficients and bounded degree, and since we restrict attention to (strictly) subexponential distributions, the required uniform convergence guarantees hold in our setting, as discussed in \Cref{section:concentration}.

In the following, we will use the notation $\bar\D$ to denote a labeled distribution over $\R^d\times\cube{}$, where $\D$ is the marginal of $\bar\D$ on $\R^d$. We will also use a similar convention for labeled sets of examples $\bar S$ and their unlabeled counterparts $S$. 

\paragraph{Testable Learning} We first focus on the problem of testable agnostic learning, in which the learner receives labeled samples from an unknown distribution over $\R^d\times\cube{}$ without any guaranteed assumptions on the underlying data-generating process. While there is strong evidence that some structural assumptions on the $\R^d$-marginal are necessary for efficient learnability even for simple concept classes \citep{daniely2016complexity,diakonikolas2022near,diakonikolas2022cryptographic}, testable learning algorithms avoid these hardness barriers by retaining the ability to abstain when their target distributional assumption is violated. In particular, such algorithms either accept and output a hypothesis whose error is near-optimal with respect to a given concept class, or reject by declaring that the underlying assumption does not hold.

\begin{definition}[Testable Learning \citep{rubinfeld2022testing}]\label{definition:testable-learning}
    An algorithm $\A$ is a tester-learner for concept class $\F$ with respect to some target distribution $\D^*$ over ${\R^d}$ if on input $(\epsilon,\delta,\bar S)$, where $\epsilon,\delta\in(0,1)$ and $\bar S$ is a set of i.i.d. examples from some arbitrary distribution $\Dlabeled$, the algorithm $\A$ either outputs $\mathrm{Reject}$ or outputs $(\mathrm{Accept},h)$, where $h:{\R^d}\to \cube{}$ such that with probability at least $1-\delta$ over $\bar S$, and the randomness of $\A$, the following conditions hold. 
    \begin{enumerate}
        \item (\textit{Soundness}) Upon acceptance, the error of $h$ is bounded as follows: 
        \[
            \pr_{(\x,y)\sim \Dlabeled}[y \neq h(\x)] \le \min_{\concept\in\F}\pr_{(\x,y)\sim \Dlabeled}[y\neq \concept(\x)]+\eps
        \]
        \item (\textit{Completeness}) If $\D=\Dtarget$, where $\D$ is the marginal of $\Dlabeled$ on ${\R^d}$, then $\A$ accepts.
    \end{enumerate}
\end{definition}

The work of \cite{gollakota2022moment} showed that low sandwiching degree implies efficient algorithms for testable learning. Therefore, \Cref{theorem:main} implies the following result.

\begin{theorem}[Combination of \cite{gollakota2022moment} with \Cref{theorem:main}]\label{theorem:testable-application}
    Let $\F$ be some concept class that satisfies \Cref{assumption:valid-concepts} 
    with parameters $k,d,\sigma$ with respect to a $\gamma$-strictly subexponential 
    distribution $\D^*$ over $\R^d$. There is a tester-learner for $\F$ with respect to $\D^*$ that has runtime and sample complexity $d^{\widetilde{O}(\ell)}O(\log 1/\delta)$ where we have:
    \[
        \ell
        \;\le\;
        \widetilde{O}\!\left(
        \frac{\sigma k^{3/2}}{\eps^{2}}
        \right)^{1+1/\gamma}.
    \]
\end{theorem}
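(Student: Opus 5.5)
The plan is to invoke the algorithmic reduction of \cite{gollakota2022moment}: if a concept class has $(\eps,1)$-sandwiching degree $\ell$ with respect to $\D^*$, with sandwiching polynomials of bounded coefficient norm, and the class of degree-$\ell$ polynomials satisfies uniform convergence under $\D^*$, then there is a tester-learner with runtime and sample complexity $d^{O(\ell)}$ times polylogarithmic factors in the coefficient bound and $\log(1/\delta)$. The algorithm has two parts. It first tests that the empirical moments of the marginal up to degree $\ell$ match those of $\D^*$ to within a small tolerance, and rejects otherwise. It then runs $\L_1$ polynomial regression of degree $\ell$ and outputs the sign of the fitted polynomial, with a suitable threshold. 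Soundness comes from moment matching together with the sandwiching pair: for the optimal $f$, the lower sandwich $\pdown$ satisfies $\E_{\D}[\pup - \pdown] \approx \E_{\D^*}[\pup - \pdown] \le \eps$, so $f$ is $\L_1$-close to a degree-$\ell$ polynomial under the unknown marginal $\D$. Completeness follows from concentration of the empirical moments under $\D^*$.

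The concrete steps are as follows.
\begin{enumerate}
\item Instantiate \Cref{theorem:main} with $s=1$. This gives $\ell \le \widetilde{O}(\sigma k^{3/2}/(\eps/2)^{2})^{1+1/\gamma}$, which is exactly the claimed bound after absorbing constants. It also gives a coefficient bound $B$ with $\log B = \widetilde{O}(\ell)\poly(\log d)$.
\item Rescale $\eps$ by a constant so that the final additive error of the tester-learner is $\eps$.
\item Check the quantitative hypotheses of the reduction. The number of monomials is $d^{O(\ell)}$. The moment-matching tolerance must be $\eps/B$. Estimating the degree-$\ell$ moments of $\D^*$ to that accuracy requires bounding their variance, which is at most the degree-$2\ell$ moments. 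For a $\gamma$-strictly subexponential distribution these are $\ell^{O(\ell)}$.
\end{enumerate}
Putting these together, the sample size is $\poly(B, d^{\ell}, \ell^{\ell}, 1/\eps)\log(1/\delta) = d^{\widetilde{O}(\ell)}\log(1/\delta)$, since $\log B = \widetilde{O}(\ell)\poly\log d$. The runtime is dominated by solving an LP over $d^{O(\ell)}$ variables, so it has the same form.

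The main obstacle is that the error must transfer from $\D^*$ to $\D$ after moment matching. We need $|\E_{\D}[p] - \E_{\D^*}[p]|$ to be small for the specific polynomial $p = \pup - \pdown$. Approximate matching of moments to tolerance $\tau$ yields an error of $\tau\,\|p\|_\coef \le \tau B$, so $\tau$ must be set to $\eps/B$. I would rely on the coefficient bound stated in \Cref{theorem:main} (via the argument in \Cref{section:concentration}) to keep $1/\tau$ at $d^{\widetilde{O}(\ell)}$.

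A second, related concern is that the regression step requires uniform convergence of the empirical $\L_1$ loss over polynomials with coefficient norm at most $B$ under the unknown $\D$. Since $\D$ is only known to match moments, I would follow \cite{gollakota2022moment} and restrict the regression to polynomials of bounded coefficient norm. Bounded coefficients, together with the matched $2\ell$-moments, give the needed second-moment control, and Chebyshev's inequality then yields generalization.
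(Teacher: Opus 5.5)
Your proposal is correct and follows the paper's argument: take $s=1$ in \Cref{theorem:main} to get $\ell\le\widetilde{O}(\sigma k^{3/2}/\eps^{2})^{1+1/\gamma}$ with coefficient bound $B\le (d\ell)^{O(\ell)}$ (from the proof of \Cref{theorem:main}), then feed these sandwiching polynomials into the moment-matching tester-learner of \cite{gollakota2022moment}, with completeness coming from concentration of empirical moments under the strictly subexponential $\D^*$. One caution on how you unpack that black box: your Chebyshev generalization step under the unknown $\D$ is both invalid and unnecessary, because matching the sample's empirical moments says nothing about the true moments of an arbitrary $\D$, and the reduction instead runs $\L_1$ regression on the same sample that passed the test and generalizes the final degree-$\ell$ threshold hypothesis through its distribution-free VC bound; likewise, getting an $O(\log 1/\delta)$ rather than $1/\delta$ dependence needs amplification or stronger concentration, not just a variance bound.
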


One limitation of \Cref{definition:testable-learning} for testable agnostic learning is that it permits the learner to reject whenever there is even a small violation of the target assumption $\D^*$. As a result, the corresponding tester-learners may be fragile and reject too often, which can limit their practical usefulness. Fortunately, it is possible to design algorithms that are guaranteed to accept with high probability even when the distributional assumption is violated, provided that the marginal distribution is sufficiently close to the target $\D^*$ in total variation distance. Such algorithms are known as tolerant tester-learners and are defined as follows.

\begin{definition}[Tolerant Testable Learning \citep{goel2024tolerant}]\label{definition:tolerant-testable-learning}
    The definition is the same as for testable learning (\Cref{definition:testable-learning}), with the only difference being that the algorithm receives an additional parameter $\tau\in(0,1)$ as input and the soundness and completeness criteria are modified as follows:
    \begin{enumerate}
        \item (\textit{Soundness}) Upon acceptance, the error of $h$ is bounded as follows: 
        \[
            \pr_{(\x,y)\sim \Dlabeled}[y \neq h(\x)] \le \min_{\concept\in\F}\pr_{(\x,y)\sim \Dlabeled}[y\neq \concept(\x)]+\tau+\eps
        \]
        \item (\textit{Completeness}) If $\mathrm{d}_{\mathrm{TV}}(\D,\Dtarget)\le \tau$, where $\D$ is the marginal of $\Dlabeled$ on ${\R^d}$, then $\A$ accepts.
    \end{enumerate}
\end{definition}

To obtain upper bounds on the computational complexity of tolerant testable learning, an additional distributional assumption on the target marginal $\D^*$ is required. In particular, \cite{klivans2025the} assume that $\D^*$ is hypercontractive with respect to arbitrary polynomials of any degree. This property is defined below and is satisfied by a broad class of distributions, including all log-concave distributions \citep{bobkov2001some,saumard2014log}, as well as the uniform distribution on the Boolean hypercube.

\begin{definition}[Hypercontractivity]\label{definition:hypercontractivity}
    A distribution $\D^*$ over $\R^d$ is polynomially hypercontractive if there is a constant $C\ge 1$ such that for any polynomial $p$ over $\R^d$ and any $q \ge 2$ we have
    \begin{enumerate}
        \item $\E_{\x\sim \D^*}[|p(\x)|^q] \le ({C}q)^{\ell q} \bigr(\E_{\x\sim \D^*}[|p(\x)|]\bigr)^q\,, \text{ where }\ell \text{ is the degree of }p$.
        \item The absolute expectations of degree-$1$ monomials are finite under $\D^*$.
    \end{enumerate}
\end{definition}

We obtain the following result by combining a theorem from \cite{klivans2025the}---which shows that $\L_1$ sandwiching suffices for tolerant testable learning---with \Cref{theorem:main}.

\begin{theorem}[Combination of \cite{klivans2025the} with \Cref{theorem:main}]\label{theorem:tolerant-testable-application}
    Let $\F$ be some concept class that satisfies \Cref{assumption:valid-concepts} 
    with parameters $k,d,\sigma$ with respect to a $\gamma$-strictly subexponential and polynomially hypercontractive 
    distribution $\D^*$ over $\R^d$. There is a tester-learner for $\F$ with respect to $\D^*$ that has runtime and sample complexity $d^{\widetilde{O}(\ell)}O(\log 1/\delta)$ where we have:
    \[
        \ell
        \;\le\;
        \widetilde{O}\!\left(
        \frac{\sigma k^{3/2}}{\eps^{2}}
        \right)^{1+1/\gamma}.
    \]
\end{theorem}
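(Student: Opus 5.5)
This follows by plugging \Cref{theorem:main} into the $\L_1$-sandwiching framework of \cite{klivans2025the} for tolerant testable learning. That result says roughly the following. Suppose every $f\in\F$ has $(\eps',1)$-sandwiching polynomials of degree $\ell$ w.r.t. $\D^*$ with coefficient norm at most $B$, and $\D^*$ is polynomially hypercontractive. Then there is a tolerant tester-learner with runtime and sample complexity $\poly(d^{\ell},B,1/\eps)\cdot O(\log 1/\delta)$. It runs an $\L_1$ polynomial regression over degree-$\ell$ polynomials, and a tester that checks that empirical expectations of nonnegative low-degree (sandwiching-type) polynomials under the input marginal are not much larger than under $\D^*$, allowing slack $\tau$. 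So the plan is: (i) verify that the instance meets the hypotheses of that theorem, (ii) instantiate \Cref{theorem:main} with the right $(\eps',s)$, and (iii) read off the degree and runtime.

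For step (ii), take $s=1$ and $\eps'=c\,\eps$ for a small universal constant $c$, chosen so that the additive losses in the reduction of \cite{klivans2025the} sum to at most $\eps$. \Cref{theorem:main} with $s=1$ gives sandwiching degree
\[
\ell\le \widetilde{O}\!\left(\frac{\sigma k^{3/2}}{(c\eps/2)^{2}}\right)^{1+1/\gamma}=\widetilde{O}\!\left(\frac{\sigma k^{3/2}}{\eps^{2}}\right)^{1+1/\gamma},
\]
which is exactly the claimed bound. The same theorem gives coefficient norm $B$ with $\log B=\widetilde O(\ell)\poly(\log d)$, so $B\le d^{\widetilde O(\ell)}$ and the runtime is $d^{\widetilde O(\ell)}O(\log 1/\delta)$.

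For step (i), the main obstacle is the uniform convergence needed by the reduction. Empirical moments and empirical $\L_1$ losses of all degree-$\ell$ polynomials with coefficient norm at most $B$ must concentrate under $\D^*$ using $d^{\widetilde O(\ell)}$ samples. My first attempt would bound each monomial's moments via strict subexponential tails, giving $\E|\x^\I|^2\le (O(\ell))^{O(\ell)}$. Chebyshev plus a union bound over the $d^{O(\ell)}$ monomials would then give accuracy $\eps/B$ per coefficient, with $\poly(B,d^\ell)/\eps^2$ samples; this is what \Cref{section:concentration} provides. Hypercontractivity of $\D^*$ is used only inside the cited theorem, to control the test distribution's contribution on the region where sandwiching polynomials are large. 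Since it is assumed, I would just cite it.

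Finally, I would check that soundness and completeness transfer verbatim with error $\opt+\tau+\eps$. This holds because the cited theorem is black-box in the sandwiching degree, once the degree and coefficient bounds are in place.
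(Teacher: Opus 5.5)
Your proposal is correct and matches the paper's argument. Instantiate \Cref{theorem:main} with $s=1$ and accuracy $\Theta(\eps)$, which gives degree $\widetilde{O}(\sigma k^{3/2}/\eps^{2})^{1+1/\gamma}$ and coefficient norm $d^{\widetilde{O}(\ell)}$, then feed this, together with the uniform convergence of \Cref{section:concentration}, into the $\L_1$-sandwiching result of \cite{klivans2025the} as a black box. One minor caveat: Chebyshev plus a union bound only buys constant success probability at that sample size, and \Cref{section:concentration} itself yields a $(\log 1/\delta)^{\widetilde{O}(\ell)}$ factor, so the $O(\log 1/\delta)$ in the statement should be attributed to repetition-based amplification. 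That amplification is available here because labeled samples allow validation, not to the concentration step.
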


\paragraph{Learning with Distribution Shift} Another learning problem in which sandwiching polynomials play a central role is learning under distribution shift. In this setting, the learner has access to labeled samples from a training distribution, as well as unlabeled samples from a potentially different test distribution. The goal is to output a hypothesis with low error on the test distribution. Since labels from the test distribution are not observed, this task is only feasible when the training and test labeling functions are suitably related. The following quantity---originally introduced in the domain adaptation literature \citep{ben2006analysis,mansour2009domadapt,ben2010theory}---quantifies this relationship in terms of a hypothesis that simultaneously achieves low error on both the training and test distributions.

\begin{definition}[Error Benchmark in the Distribution Shift Setting]\label{definition:lambda}
    Let $\bar\D, \bar\D'$ be two distributions over ${\R^d}\times\cube{}$ and $\F$ some concept class over $\R^d$. We consider the following error benchmark:
    \[
        \lambda(\bar\D,\bar\D';\F) = \min_{\concept\in\F}\Bigr(\pr_{(\x,y)\sim \Dlabeled}[y\neq \concept(\x)] + \pr_{(\x,y)\sim \Dlabeled'}[y\neq \concept(\x)] \Bigr)
    \]
\end{definition}

Note that, in the absence of test labels, the dependence of the test error of the output hypothesis on $\lambda$ is unavoidable (see \cite{klivans2023testable}). We focus now on testable learning with distribution shift (TDS learning), where the learner is asked to either provide a hypothesis with near-optimal error on the test distribution, or detect the presence of harmful distribution shift.

\begin{definition}[TDS Learning \citep{klivans2023testable}]\label{definition:tds-learning}
    An algorithm $\A$ is a TDS-learner for concept class $\F$ with respect to some target distribution $\D^*$ over ${\R^d}$ if on input $(\epsilon,\delta,\bar S, S')$, where $\epsilon,\delta\in(0,1)$, $\bar S$ is a set of labeled i.i.d. examples from some training distribution $\Dlabeled$ where $\D = \D^*$ and $S'$ is a set of unlabeled i.i.d. examples from the marginal $\D'$ of some arbitrary test distribution $\bar \D'$, the algorithm $\A$ either outputs $\mathrm{Reject}$ or outputs $(\mathrm{Accept},h)$, where $h:{\R^d}\to \cube{}$ such that with probability at least $1-\delta$ over $\bar S, S$, and the randomness of $\A$, the following conditions hold. 
    \begin{enumerate}
        \item (\textit{Soundness}) Upon acceptance, the test error of $h$ is bounded as follows: 
        \[
            \pr_{(\x,y)\sim \Dlabeled'}[y \neq h(\x)] \le \lambda(\bar\D,\bar\D';\F)+ \min_{f\in\F} \pr_{(\x,y)\sim \Dlabeled}[y\neq \concept(\x)] +\eps
        \]
        \item (\textit{Completeness}) If $\D'=\Dtarget$, where $\D'$ is the marginal of the test distribution $\Dlabeled'$ on $\R^d$, then $\A$ accepts.
    \end{enumerate}
\end{definition}

Once more, we may combine \Cref{theorem:main} with a result from \cite{chandrasekaran2024efficient} to obtain the following theorem.

\begin{theorem}[Combination of \cite{klivans2023testable} with \Cref{theorem:main}]\label{theorem:tds-application}
    Let $\F$ be some concept class that satisfies \Cref{assumption:valid-concepts} 
    with parameters $k,d,\sigma$ with respect to a $\gamma$-strictly subexponential 
    distribution $\D^*$ over $\R^d$. There is a TDS-learner for $\F$ with respect to $\D^*$ that has runtime and sample complexity $(d\log 1/\delta)^{\widetilde{O} (\ell)}$ where we have:\footnote{The factor $\log(1/\delta)^{\widetilde{O}(\ell)}$ is inherited from \Cref{fact:polynomial-concentration}. Repetition-based amplification in \cite{klivans2023testable} incurs a constant error loss factor, so we instead obtain failure probability $\delta$ directly via \Cref{fact:polynomial-concentration}.}
    \[
        \ell
        \;\le\;
        \widetilde{O}\!\left(
        \frac{\sigma k^{3/2}}{\eps^{2}}
        \right)^{1+1/\gamma}.
    \]
\end{theorem}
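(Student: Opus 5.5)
The plan is to plug \Cref{theorem:main} into the $\L_1$- or $\L_2$-sandwiching-based TDS framework of \cite{klivans2023testable,chandrasekaran2024efficient}, and then check that the resulting runtime and error match the statement. That framework works as follows for a class $\F$ whose elements admit degree-$\ell$ sandwiching polynomials with bounded coefficients with respect to $\D^*$.
\begin{enumerate}
\item Run $\L_1$ polynomial regression of degree $\ell$ on the labeled training set and output the sign of the fitted polynomial, with a suitable threshold.
\item On the unlabeled test set, check that every monomial of degree at most $\ell$ has empirical moment close to its moment under $\D^*$. Reject if any check fails.
\end{enumerate}
Soundness comes from sandwiching. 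Let $\pup,\pdown$ sandwich the best joint concept $f$. Because $\pdown\le f\le\pup$ pointwise, the test expectation of $\pup-\pdown$ is bounded by its $\D^*$-expectation, which is at most $\eps$, plus the moment-mismatch error. This holds whenever the moments match. The standard accounting then gives test error at most $\lambda+\opt_{\mathrm{train}}+O(\eps)$. Completeness holds because empirical moments concentrate when $\D'=\D^*$.

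\textbf{Step 1: parameters.} For the $\L_1$ version I take $s=1$ and target accuracy $\eps$. \Cref{theorem:main} then gives degree $\ell\le\widetilde O(\sigma k^{3/2}/\eps^2)^{1+1/\gamma}$, since $(\eps/2)^{s+1}=(\eps/2)^2$. The resulting polynomials have $\log\|\cdot\|_{\coef}\le\widetilde O(\ell)\poly(\log d)$.

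\textbf{Step 2: moment tolerance.} The moment-matching slack has to be small enough that the coefficient bound $B$ times the maximal moment deviation is at most $\eps$. I therefore set the per-moment tolerance to $\eps/(B\,d^{\ell})$, accounting for the at most $d^{O(\ell)}$ monomials.

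\textbf{Step 3: sample complexity and failure probability.} I would use the uniform convergence facts in \Cref{section:concentration} (\Cref{fact:polynomial-concentration}). Under a strictly subexponential $\D^*$, degree-$\ell$ monomial moments concentrate: with $m=(d\log(1/\delta))^{\widetilde O(\ell)}$ samples, all of them are within tolerance simultaneously with probability $1-\delta$. The same bound handles generalization of the $\L_1$ regression over polynomials with coefficients bounded by $B$. Regression is a linear program in $d^{O(\ell)}$ variables over $m$ samples, so the runtime is $\poly(m)=(d\log 1/\delta)^{\widetilde O(\ell)}$. Getting failure probability $\delta$ directly from concentration, rather than by repetition, avoids the constant-factor loss in error, as the footnote indicates.

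\textbf{Main obstacle.} The delicate step is the coefficient and tail bookkeeping. The soundness argument transfers $\E_{\D'}[\pup-\pdown]$ to $\E_{\D^*}$ only through moment closeness. This needs the sandwiching polynomials to have $\log B=\widetilde O(\ell)\poly(\log d)$, so that $B\cdot(\text{tolerance})$ stays small without blowing the sample size past $d^{\widetilde O(\ell)}$. It also needs concentration of high-degree moments under a merely strictly subexponential $\D^*$, where moments grow like $\ell^{O(\ell)}$, and this is what produces the $\log(1/\delta)^{\widetilde O(\ell)}$ factor. Both ingredients are provided: the coefficient bound by \Cref{theorem:main} and the concentration by \Cref{fact:polynomial-concentration}. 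So I expect the proof to reduce to verifying that these two plug into the framework with the stated exponents.
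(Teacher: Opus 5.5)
Your proposal takes the same route as the paper, which proves this theorem as a black-box combination. \Cref{theorem:main} with $s=1$ gives exactly the degree $\widetilde{O}(\sigma k^{3/2}/\eps^2)^{1+1/\gamma}$ and the coefficient bound $\log B=\widetilde{O}(\ell)\poly(\log d)$; these are fed into the $\L_1$-sandwiching TDS theorem of \cite{chandrasekaran2024efficient} (building on \cite{klivans2023testable}); and \Cref{fact:polynomial-concentration} supplies the $(\log 1/\delta)^{\widetilde{O}(\ell)}$ dependence without repetition.

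One caution: rely on that cited theorem rather than your sketch of how it works internally. Global degree-$\ell$ moment matching only transfers $\E[\pup-\pdown]$, and does not by itself bound $\pr_{\D'}[\sign(p-t)\neq f]$ for the regression output $p$, since that involves non-polynomial quantities such as $|p-\pdown|$. Fixing this takes a different mechanism: squared loss with degree-$2\ell$ moments and $\L_2$-sandwiching as in \cite{klivans2023testable}, or, as in \cite{chandrasekaran2024efficient}, a test localized to the output hypothesis. Also, soundness has to be argued on the empirical test sample, because the arbitrary $\D'$ need not have concentrating moments.
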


In TDS learning, the algorithm accepts or rejects on a population basis. One natural question is whether it is possible for a learner to abstain on a per-point basis, meaning that it produces a hypothesis and a selector with the following guarantees.

\begin{definition}[PQ Learning \citep{goldwasser2020beyond}]\label{definition:pq-learning}
    An algorithm $\A$ is a PQ-learner for concept class $\F$ with respect to some target distribution $\D^*$ over ${\R^d}$ if on input $(\epsilon,\delta,\bar S, S')$, where $\epsilon,\eta,\delta\in(0,1)$, $\bar S$ is a set of labeled i.i.d. examples from some training distribution $\Dlabeled$ where $\D = \D^*$ and $S'$ is a set of unlabeled i.i.d. examples from the marginal $\D'$ of some arbitrary test distribution $\bar \D'$, the algorithm $\A$ either outputs a hypothesis $h:{\R^d}\to \cube{}$ and a selector $g:\R^d\to \{0,1\}$ such that with probability at least $1-\delta$ over $\bar S, S$, and the randomness of $\A$, the following conditions hold. 
    \begin{enumerate}
        \item (\textit{Error Rate}) The error of $h$ on the selected part of the test distribution is bounded as follows: 
        \[
            \pr_{(\x,y)\sim \Dlabeled'}[y \neq h(\x) \text{ and }g(\x) = 1] \le O\Biggr(\frac{\lambda(\bar\D,\bar\D';\F)}{\eta}\Biggr)+\eps
        \]
        \item (\textit{Rejection Rate}) The rejection rate of $g$ on the training distribution is bounded as follows:
        \[
            \pr_{\x\sim \D}[g(\x)=0] \le \eta
        \]
    \end{enumerate}
\end{definition}

The definition of PQ learning \citep{goldwasser2020beyond} predates that of TDS learning \citep{klivans2023testable}, although PQ learning constitutes a strictly more demanding learning primitive. Indeed, PQ learning implies a tolerant version of TDS learning \citep{klivans2023testable,goel2024tolerant}. However, the first end-to-end efficient algorithms for PQ learning \citep{goel2024tolerant} were obtained by building on techniques developed for TDS learning \citep{klivans2023testable}, through the use of sandwiching polynomials. Prior to these results, \cite{goldwasser2020beyond,kalai2021efficient} established reductions between PQ learning and more classical learning primitives, such as distribution-free reliable agnostic learning. Such primitives, however, are likely computationally intractable even for very simple concept classes, including conjunctions (see \cite{kalai2021efficient}).

We provide the following positive result for PQ learning, which uses---once more---polynomial hypercontractivity, and the result of \cite{goel2024tolerant}. Note that the result of \cite{goel2024tolerant} requires the existance of $\L_2$-sandwiching polynomials, instead of standard $\L_1$-sandwiching. However, our techniques yield results for sandwiching of any order $s\ge 1$.

\begin{theorem}[Combination of \cite{goel2024tolerant} with \Cref{theorem:main}]\label{theorem:pq-application}
    Let $\F$ be some concept class that satisfies \Cref{assumption:valid-concepts} 
    with parameters $k,d,\sigma$ with respect to a $\gamma$-strictly subexponential and polynomially hypercontractive 
    distribution $\D^*$ over $\R^d$. There is a PQ-learner for $\F$ with respect to $\D^*$ that has runtime and sample complexity $(d\log 1/\delta)^{\widetilde{O}(\ell)}$ where we have:
    \[
        \ell
        \;\le\;
        \widetilde{O}\!\left(
        \frac{\sigma k^{3/2}}{\eps^{3/2}}
        \right)^{1+1/\gamma}.
    \]
\end{theorem}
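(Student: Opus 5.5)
This theorem follows by plugging the $\L_2$ version of \Cref{theorem:main} into the PQ-learning framework of \cite{goel2024tolerant}. That framework needs three ingredients: $\L_2$-sandwiching polynomials of degree $\ell$ with coefficient norm $B$, uniform convergence of degree-$\ell$ polynomials under $\D^*$, and polynomial hypercontractivity of $\D^*$. Given these, it yields a PQ learner running in time $(d\log 1/\delta)^{\widetilde{O}(\ell)}$ with error rate $O(\lambda/\eta)+\eps$ and rejection rate $\eta$. The proof therefore has three steps: fix the right accuracy parameter, invoke \Cref{theorem:main} with $s=2$, and verify the auxiliary conditions.

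\textbf{Step 1: accuracy parameter.} In \cite{goel2024tolerant} the excess error contributed by the sandwiching pair scales like the $\L_2$ gap $\eps'=\|\pup-\pdown\|_{\D^*,2}$, up to constant factors depending on how the selector threshold is set. I will choose $\eps'=\Theta(\eps)$, or more generally whatever polynomial in $\eps$ their reduction demands.

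\textbf{Step 2: degree and coefficient bounds.} I will apply \Cref{theorem:main} with $s=2$ and accuracy $\eps'$. This gives
\[
\ell \le \widetilde{O}\!\left(\frac{\sigma k^{3/2}}{\eps'^{3}}\right)^{1+1/\gamma}.
\]
To match the stated $\eps^{-3/2}$ exponent, I would check whether their analysis uses the squared gap $\|\pup-\pdown\|_{\D^*,2}^2\le\eps$. In that case the relevant accuracy is $\eps'=\sqrt{\eps}$, so $\eps'^3=\eps^{3/2}$, which gives the stated bound. The same theorem gives $\log B=\widetilde{O}(\ell)\poly(\log d)$.

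\textbf{Step 3: auxiliary conditions.} Uniform convergence for polynomials of degree $\ell$ with coefficients at most $B$ holds under strictly subexponential distributions, as discussed in \Cref{section:concentration} (\Cref{fact:polynomial-concentration}). This contributes the $(\log 1/\delta)^{\widetilde{O}(\ell)}$ factor. Hypercontractivity is assumed directly in the statement.

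The main obstacle is Step 1: pinning down exactly how the error guarantee of \cite{goel2024tolerant} depends on the $\L_2$ sandwiching gap (squared or not, and whether it scales with $\eta$), since this determines the $\eps$-exponent. I would first try the reading in which the squared gap $\le\eps$ suffices; it yields exactly $\eps^{3/2}$, matching the statement.
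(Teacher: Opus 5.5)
Your proposal is correct and follows the paper's route. The paper likewise plugs \Cref{theorem:main} with $s=2$ into the framework of \cite{goel2024tolerant}, using $(\sqrt{\eps},2)$-sandwiching polynomials because that analysis bounds the test error by a quantity proportional to $\|\pup-\pdown\|_{\D,2}^2$, with uniform convergence from \Cref{fact:polynomial-concentration} and hypercontractivity assumed; this quadratic dependence is also what forces the constant-factor $O(\lambda/\eta)$ guarantee. So the squared-gap reading you settle on is the intended one, and you should drop the option $\eps'=\Theta(\eps)$ from your Step 1, since it would give only an $\eps^{-3}$ dependence.
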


Note that the dependence on $1/\eps$ here improves upon that in \Cref{theorem:tds-application}. This improvement arises because, in the PQ learning setting, we allow the learner to return a constant-factor approximate solution, namely an error of $O(\lambda/\eta)$ (which reduces to $O(\lambda)$ when $\eta \approx 1$), whereas our definition of TDS learning requires a strictly tighter error guarantee. In particular, our analysis yields an upper bound on the test error that is proportional to $\|\pup-\pdown\|_{\D,2}^2$, rather than $\|\pup-\pdown\|_{\D,2}$. This quadratic dependence is one of the reasons a constant multiplicative error factor appears, but it allows us to work with $(\sqrt{\eps},2)$-sandwiching polynomials.

It remains unclear whether tighter error guarantees are achievable for PQ learning. In particular, it is not known whether $\mathcal L_1$-sandwiching alone suffices, while $\mathcal L_2$ approximation is known to incur a constant-factor loss in agnostic learning \citep{kalai2008agnostically}.

\paragraph{Learning with Heavy Contamination} Our final learning-theoretic application is learning with heavy contamination. This primitive was recently defined by \cite{klivans2025the}. In the heavy contamination, the input consists mostly of examples that are adversarial, with a constant fraction of clean points.

\begin{definition}[Heavily Contaminated (HC) Datasets \cite{klivans2025the}]\label{definition:heavy-contamination}
    \textit{Let $\Dlabeled$ be some distribution over $\R^d\times \cube{}$ (we think of $\Dlabeled$ as the clean or uncorrupted distribution). We say that a set of samples $\Sinplabeled$ is generated by $\Dlabeled$ with $Q$-heavy contamination if it is generated as follows for some $m\le M$ with $M/m \le Q$. 
    \begin{enumerate}
        \item First, a set $\Sclnlabeled$ of $m$ i.i.d. labeled examples from $\Dlabeled$ is drawn.
        \item Then, an adversary receives $\Sclnlabeled$ and adds $M-m$ arbitrary labeled examples to form $\Sinplabeled$.
    \end{enumerate}}
\end{definition}

The error benchmark in the HC setting is defined as follows, and in particular requires that there is a classifier in the considered concept class that classifies almost all of the input examples correctly.

\begin{definition}[Error Benchmark in the Heavy Contamination Setting]\label{definition:opttotal}
    Let $\Sinplabeled$ be a set of labeled examples in $\R^d\times\cube{}$ and $\F$ some concept class over $\R^d$. We define the following error benchmark:
    \[
    \opttotal(\Sinplabeled;\F) = \min_{f\in\F}\frac{1}{|\Sinplabeled|} \sum_{(\x,y)\in \Sinplabeled}\ind\{y\neq f(\x)\}
    \]
\end{definition}

Note that empirical risk minimization over VC classes achieves an error guarantee of $Q\cdot \opttotal+\eps$ in the HC setting. The more interesting question is whether there are any efficient algorithms that can achieve the same guarantee, given that the input samples are not structured, apart from a small fraction. As we have said, structure is important for efficient learnability, but the question is whether HC datasets are structured enough. We now provide the definition of HC-learning.

\begin{definition}[HC-Learning \citep{klivans2025the}]\label{definition:hc-learning}    
    An algorithm $\A$ is an HC-learner for concept class $\F$ with respect to some target distribution $\D^*$ over $\R^d$ if on input $(\epsilon,\delta,Q,\Sinplabeled)$, where $\epsilon,\delta\in(0,1)$, $Q\ge 1$ and $\Sinplabeled$ is a $Q$-heavily contaminated set of labeled examples generated by distribution $\Dlabeled$ whose marginal on $\R^d$ is $\D = \Dtarget$ (as described in Definition \ref{definition:heavy-contamination}), the algorithm $\A$ outputs some hypothesis $h:\R^d\to \cube{}$ such that with probability at least $1-\delta$ over the clean examples in $\Sinplabeled$, and the randomness of $\A$: 
    \[
        \pr_{(\x,y)\sim \Dlabeled}[y\neq h(\x)] \le Q\cdot \opttotal(\Sinplabeled;\F) + \epsilon 
    \]
\end{definition}

\cite{klivans2025the} showed that the existence of low-degree $\L_1$-sandwiching polynomials suffices for efficient HC-learning when the clean distribution is hypercontractive. We obtain the following theorem by combining their result with \Cref{theorem:main}.

\begin{theorem}[Combination of \cite{klivans2025the} with \Cref{theorem:main}]\label{theorem:hc-application}
    Let $\F$ be some concept class that satisfies \Cref{assumption:valid-concepts} 
    with parameters $k,d,\sigma$ with respect to a $\gamma$-strictly subexponential and polynomially hypercontractive 
    distribution $\D^*$ over $\R^d$. There is an HC-learner for $\F$ with respect to $\D^*$ that has runtime and sample complexity $(d\log 1/\delta)^{\widetilde{O}(\ell)}$ where we have:
    \[
        \ell
        \;\le\;
        \widetilde{O}\!\left(
        \frac{\sigma k^{3/2}}{\eps^{2}}
        \right)^{1+1/\gamma}.
    \]
\end{theorem}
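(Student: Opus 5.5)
This result should follow by combining \Cref{theorem:main} with the HC-learning guarantee of \cite{klivans2025the}, which states roughly the following. Suppose the clean marginal $\D^*$ is polynomially hypercontractive. Suppose also that every $f\in\F$ admits $(\eps',1)$-sandwiching polynomials of degree $\ell$ whose coefficients have bounded total magnitude $B$. Then there is an HC-learner whose error is at most $Q\cdot\opttotal+O(\eps')$ and whose runtime and sample complexity are $\poly(d^{\ell},B,\log(1/\delta))$. It also uses uniform convergence of degree-$\ell$ polynomials under $\D^*$. So the plan is to check that the hypotheses of that reduction hold for our class and then read off the degree.

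\emph{Step 1: sandwiching polynomials.} Apply \Cref{theorem:main} with $s=1$ and accuracy $\eps'=\Theta(\eps)$, where the constant is chosen so that the HC-learner's additive loss is at most $\eps$. This gives degree
\[
\ell\le\widetilde{O}\bigl(\sigma k^{3/2}/(\eps'/2)^{2}\bigr)^{1+1/\gamma}=\widetilde{O}\bigl(\sigma k^{3/2}/\eps^{2}\bigr)^{1+1/\gamma}.
\]
It also gives coefficient bound $B$ with $\log B=\widetilde{O}(\ell)\poly(\log d)$, so $B\le d^{\widetilde{O}(\ell)}$.

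\emph{Step 2: uniform convergence and concentration.} The reduction needs empirical moments and empirical $\L_1$ errors of degree-$\ell$ polynomials, computed on the $m$ clean points, to be close to their population values uniformly. Because $\D^*$ is $\gamma$-strictly subexponential, monomials of degree at most $\ell$ have moments bounded by $\ell^{O(\ell)}$. With the coefficient bound $B$ and the concentration fact referenced in \Cref{section:concentration} (\Cref{fact:polynomial-concentration}), $m=(d\log(1/\delta))^{\widetilde{O}(\ell)}$ clean samples suffice, with failure probability $\delta$. The total input size is $M\le Qm$; $Q$ enters only polynomially and is absorbed, in the same way it is in \cite{klivans2025the}. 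Hypercontractivity of $\D^*$ is exactly the extra assumption that reduction needs, namely to control how mass can be placed on the clean points versus the adversarial points in its $\L_1$-regression or outlier-removal step.

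\emph{Step 3: conclusion.} Substituting these ingredients into \cite{klivans2025the} yields an HC-learner with error at most $Q\cdot\opttotal+\eps$. Its runtime is polynomial in the number of degree-$\ell$ monomials, $d^{O(\ell)}$, and in the sample size, which is $(d\log 1/\delta)^{\widetilde{O}(\ell)}$.

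The main obstacle is Step 2. The parameter translation must match precisely: the reduction's notion of coefficient-bounded sandwiching and its uniform-convergence requirement have to be met by our $B$ and our tail assumption. Getting the dependence on $\delta$ as $(\log 1/\delta)^{\widetilde O(\ell)}$, rather than through repetition, requires using the polynomial concentration fact directly. Steps 1 and 3 are bookkeeping.
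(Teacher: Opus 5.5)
Your proposal is correct and follows the paper's route: the paper proves this theorem simply by instantiating \Cref{theorem:main} with $s=1$ and accuracy $\Theta(\eps)$, which gives degree $\widetilde{O}(\sigma k^{3/2}/\eps^{2})^{1+1/\gamma}$ and coefficient sum $(d\ell)^{O(\ell)}$. It then plugs these $\L_1$-sandwiching polynomials into the HC-learning reduction of \cite{klivans2025the}, with uniform convergence supplied by \Cref{fact:polynomial-concentration}, and your bookkeeping of $B$ and of the $(\log 1/\delta)^{\widetilde{O}(\ell)}$ dependence matches this.
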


\subsection{Pseudorandomness}\label{section:pseudorandomness-application}

Our work also has implications for pseudorandomness, where the goal is to replace a distribution such as the Gaussian with another distribution that can be generated using a small number $r$ of random bits while preserving certain test statistics. This is central to derandomization, since when $r$ is small, one can efficiently enumerate over all $2^r$ possible seeds and thereby simulate randomized algorithms deterministically with only a modest overhead (see \cite{hatami2023theory} and references therein).

A central paradigm in pseudorandomness is to fool a class of test functions using distributions with limited randomness, such as bounded independence or, more generally, moment matching. While more sophisticated techniques usually lead to pseudorandom generators with shorter seed lengths, moment matching leads to simple, explicit pseudorandom generators and has been thoroughly studied in the relevant literature \citep{bazzi2009polylogarithmic,razborov2009simple,gopalan2010fooling,diakonikolas2010bounded,braverman2011poly,kane2011kindependent,diakonikolas2010ptf,tal2017tight,hatami2023theory}.

\begin{definition}[Moment Matching Distributions]\label{definition:moment-matching}
    Let $\D$ be a distribution over $\R^d$. For $\ell\in\nats, \Delta>0$, we define $\mm_{\D}(\ell,\Delta)$ to be the set of distributions $\D'$ such that for any $\alpha\in\nats^d$ with $\|\alpha\|_0 \le \ell$ we have:
    \[
        \Bigr|\E_{\x\sim \D}[\x^\alpha] - \E_{\x\sim \D'}[\x^\alpha]\Bigr| \le \Delta
    \]
\end{definition}

Having the definition of moment matching at hand, we are ready to give a definition for fooling via moment matching.

\begin{definition}[Fooling via Moment Matching]\label{definition:fooling}
    Let $\F$ be a concept class over $\R^d$ and $\D$ a distribution over $\R^d$. We say that $(\ell,\Delta)$-moment matching $\eps$-fools $\F$ with respect to $\D$ if the following is true for any distribution $\D'\in\mm_{\D}(\ell,\Delta)$:
    \[
        \Bigr| \E_{\x\sim \D}[f(\x)] - \E_{\x\sim \D'}[f(\x)] \Bigr| \le \eps\,,\quad \text{ for all } f\in\F
    \]
\end{definition}

Our result is the following fooling result for geometric concepts with low instrinsic dimension.

\begin{theorem}[Fooling Geometric Concepts with Low Intrinsic Dimension]\label{theorem:fooling-result}
        Let $\F$ be some concept class that satisfies \Cref{assumption:valid-concepts} 
        with parameters $k,d,\sigma$ with respect to a $\gamma$-strictly subexponential 
        distribution $\D$ over $\R^d$. Then, for any $\eps\in(0,1)$ and $\Delta\ge 0$, we have that $(\ell,\Delta)$-moment matching $\eps'$-fools $\F$ with respect to $\D$, where $\eps' = \eps + \Delta B$ and
    \[
        \ell
        \;\le\;
        \widetilde{O}\!\left(
        \frac{\sigma k^{3/2}}{\eps^2}
        \right)^{1+1/\gamma}\,, \quad B \le \exp\Biggr(\widetilde{O}\left(
        \frac{\sigma k^{3/2}}{\eps^2}
        \right)^{1+1/\gamma}\Biggr)
    \]
\end{theorem}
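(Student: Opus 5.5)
We apply \Cref{theorem:main} with $s=1$ and accuracy $\eps$. This gives, for each $f\in\F$, polynomials $\pup,\pdown$ of degree at most $\ell=\widetilde{O}(\sigma k^{3/2}/(\eps/2)^{2})^{1+1/\gamma}$ with three properties: $\pdown\le f\le \pup$ pointwise, $\E_{\D}[\pup-\pdown]\le\eps$, and coefficient sums $\|\pup\|_\coef,\|\pdown\|_\coef\le B$, where $\log B=\widetilde{O}(\ell)\cdot\poly(\log d)$. Since $(\eps/2)^2=\Theta(\eps^2)$, this degree matches the claimed bound on $\ell$. The bound on $B$ as stated absorbs the $\poly(\log d)$ factor under the $\widetilde{O}$ notation, or alternatively is understood up to that factor. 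This is the standard sandwiching-implies-fooling argument of \cite{bazzi2009polylogarithmic}, and the proof only has to check that it goes through with approximate moment matching.

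Fix $\D'\in\mm_\D(\ell,\Delta)$, reading the condition as matching all monomials $\x^\alpha$ of total degree at most $\ell$. For any polynomial $p$ of degree at most $\ell$, linearity of expectation gives
\[
\Bigl|\E_{\D}[p]-\E_{\D'}[p]\Bigr|\le \sum_{\alpha}|c_p(\alpha)|\,\Bigl|\E_{\D}[\x^\alpha]-\E_{\D'}[\x^\alpha]\Bigr|\le \Delta\,\|p\|_\coef .
\]
This requires the low-order moments of $\D'$ to be finite, which follows from approximate matching against the finite moments of the strictly subexponential $\D$.

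For the upper bound, pointwise sandwiching gives
\[
\E_{\D'}[f]\le \E_{\D'}[\pup]\le \E_{\D}[\pup]+\Delta B\le \E_{\D}[f]+\E_{\D}[\pup-\pdown]+\Delta B\le \E_\D[f]+\eps+\Delta B.
\]
The third inequality uses $\E_\D[\pup]-\E_\D[f]\le \E_\D[\pup-\pdown]$, which holds because $f\ge\pdown$. The lower bound is symmetric:
\[
\E_{\D'}[f]\ge \E_{\D'}[\pdown]\ge \E_\D[\pdown]-\Delta B\ge \E_\D[f]-\eps-\Delta B.
\]
Together these give $|\E_\D[f]-\E_{\D'}[f]|\le\eps'$ with $\eps'=\eps+\Delta B$.

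The only step needing care is the bookkeeping. First, the coefficient bound from \Cref{theorem:main} is stated for the composed polynomials $P(\mW\x)$; that is exactly the quantity entering the moment-difference sum, so it can be used directly. Second, the theorem's $O(\eps)$ error is rescaled to $\eps$ via the substitution noted in its proof.
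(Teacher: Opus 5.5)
Your proposal is correct and takes essentially the paper's route. You apply \Cref{theorem:main} with $s=1$ and then use the sandwiching-implies-fooling direction of \Cref{theorem:pseudorandomness-via-moment-matching}, which you prove inline via linearity of expectation and the coefficient bound instead of citing it. Your bookkeeping points (reading $\mm_\D(\ell,\Delta)$ by total degree, and absorbing the $\poly(\log d)$ factor of $\log B$ into the $\widetilde{O}$) are consistent with how the paper treats them.
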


The result of the above theorem is a corollary of \Cref{theorem:main}, combined with the following theorem which was first proven by \cite{bazzi2009polylogarithmic} for the case $\Delta=0$, and then for $\Delta>0$ by \cite{gollakota2022moment}. The result states that the sandwiching degree characterizes the degree of fooling via moment matching, and the proof is based on linear programming duality.

\begin{theorem}[\cite{bazzi2009polylogarithmic,gollakota2022moment}]\label{theorem:pseudorandomness-via-moment-matching}
    The following hold for any $\eps,\ell,\Delta,B > 0$ and $f:\R^d\to\cube{}$:
    \begin{itemize}
        \item If there exist $(\eps,1)$-sandwiching polynomials for $f$ with respect to $\D$ whose degree is at most $\ell$ and and the absolute values of their coefficients sum to $B$, then $(\ell,\Delta)$-moment matching $\eps'$-fools $f$ with respect to $\D$, where $\eps' = \eps + \Delta B$.
        \item Conversely, if $(\ell,\Delta)$-moment matching $\eps$-fools $f$ with respect to $\D$, then there are $(2\eps,1)$-sandwiching polynomials for $f$ with respect to $\D$ whose degree is at most $\ell$ and the sum of the absolute values of their coefficients is at most $B' = 2\eps/\Delta$.\footnote{In the case $\Delta=0$, there is no bound for the coefficients of the sandwiching polynomials in general.}
    \end{itemize}
\end{theorem}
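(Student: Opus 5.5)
The plan is to prove both directions through linear programming duality. The first direction is a direct computation; the second, which is the substantive one, comes from a separating hyperplane argument applied to the set of moment vectors.

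\emph{First direction.} Let $\pup,\pdown$ be $(\eps,1)$-sandwiching polynomials of degree at most $\ell$ with $\|\pup\|_\coef,\|\pdown\|_\coef\le B$, and fix $\D'\in\mm_\D(\ell,\Delta)$. Pointwise sandwiching gives $\E_{\D'}[f]\le \E_{\D'}[\pup]$. Expanding $\pup$ in monomials and using moment matching of each monomial of degree at most $\ell$ yields $|\E_{\D'}[\pup]-\E_{\D}[\pup]|\le \Delta\|\pup\|_\coef\le \Delta B$. Then $\E_\D[\pup]\le \E_\D[\pdown]+\eps\le \E_\D[f]+\eps$, where the first step uses the $\L_1$ bound on $\pup-\pdown$ and the second uses $\pdown\le f$. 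Combining these gives $\E_{\D'}[f]\le \E_\D[f]+\eps+\Delta B$. The lower bound follows symmetrically with $\pdown$.

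\emph{Converse, for $\pup$.} Consider the LP
\[
\min\ \E_\D[p]\quad\text{subject to}\quad \deg p\le \ell,\quad p(\x)\ge f(\x)\ \text{for all } \x.
\]
I would show its value is at most $\E_\D[f]+\eps'$. Its dual maximizes $\E_{\mu}[f]$ over nonnegative measures $\mu$ whose degree-$\le\ell$ moments equal those of $\D$; with slack $\Delta$, this becomes a measure whose moments are $\Delta$-close, which is exactly $\D'\in\mm_\D(\ell,\Delta)$. The hypothesis bounds the dual value by $\E_\D[f]+\eps$. The slack also has a primal cost: the dual variables attached to the moment constraints become $\ell_1$-penalized by $\Delta$. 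As a result, the primal optimum is attained by some $p$ satisfying $\E_\D[p]+\Delta\|p\|_\coef\le \E_\D[f]+\eps$ (up to the duality gap). This gives $\E_\D[p-f]\le\eps$ and $\|p\|_\coef\le 2\eps/\Delta$, using $\E_\D[p-f]\ge 0$ and absorbing $\E_\D[f]$-type terms. Doing the same for $\pdown$ and noting $\E_\D[\pup-\pdown]\le 2\eps$ yields $(2\eps,1)$-sandwiching. The $\Delta=0$ case drops the coefficient bound, as noted in the footnote.

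\emph{Main obstacle.} The hard step is justifying strong duality for this infinite-dimensional LP: the constraints range over all $\x\in\R^d$, the dual ranges over measures on a noncompact domain, and $f$ is discontinuous. I would first handle it via a finite-dimensional separating hyperplane argument. Take the convex set of pairs $\bigl((\E_{\mu}[\x^\alpha])_{\|\alpha\|\le \ell},\ \E_\mu[f]\bigr)$ over probability measures $\mu$, and separate the point $\bigl((\E_\D[\x^\alpha])_\alpha,\ \E_\D[f]+\eps\bigr)$ from it, thickened by the $\Delta$-box. The normal vector of the separating hyperplane gives the polynomial's coefficients. Point masses show the inequality holds pointwise, which gives $p\ge f$ everywhere. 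If closure issues arise, I would accept an arbitrarily small loss in $\eps$, which the factor $2$ absorbs.
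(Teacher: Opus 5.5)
Your first direction is correct. LP duality is also the route the paper points to; it gives no proof of its own beyond the citation.

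The converse has a genuine gap at the coefficient bound. Every $\D'\in\mm_\D(\ell,\Delta)$ is a probability distribution, so the degree-$0$ moment carries no slack. In your separating-hyperplane argument, the $\Delta$-box contributes a penalty $\Delta\|\lambda\|_1$ on the normal vector $\lambda$. But the degree-$0$ coordinate equals $1$ for every probability measure, so the constant coefficient of $p$ is fixed by the offset of the hyperplane and is never penalized. What you actually obtain is $p\ge f$ with
\[
\E_\D[p]+\Delta\sum_{\I\neq\mathbf 0}|c_p(\I)|\;\le\;\E_\D[f]+\eps ,
\]
not the same inequality with $\Delta\|p\|_\coef$. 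This gives $\E_\D[p-f]\le\eps$ and $\sum_{\I\ne\mathbf 0}|c_p(\I)|\le\eps/\Delta$, but nothing bounds $c_p(\mathbf 0)$ by $O(\eps/\Delta)$. ``Absorbing $\E_\D[f]$-type terms'' cannot repair this, because the bound $B'=2\eps/\Delta$ on the full coefficient sum is false as literally stated. For $f\equiv 1$, every $\D'$ is fooled with error $0$, yet any $\pup\ge f$ has $c_{\pup}(\mathbf 0)=\pup(\mathbf 0)\ge 1$. Hence $\|\pup\|_\coef\ge 1>2\eps/\Delta$ whenever $\Delta>2\eps$. More generally, for every $f$ one of $\pup(\mathbf 0)\ge f(\mathbf 0)$ or $\pdown(\mathbf 0)\le f(\mathbf 0)$ forces a constant coefficient of absolute value at least $1$.

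What your argument does prove, for $p=\pup$ (and symmetrically for $\pdown$), is the non-constant bound together with
\[
-1\le c_p(\mathbf 0)=\E_\D[p]-\sum_{\I\ne\mathbf 0}c_p(\I)\,\E_\D[\x^\I]\le 1+\eps+\frac{\eps}{\Delta}\max_{0<\|\I\|_1\le\ell}\bigl|\E_\D[\x^\I]\bigr| .
\]
This is the form of the converse that duality actually delivers. It also matches your first direction, where the constant coefficient contributes nothing to $|\E_{\D'}[\pup]-\E_\D[\pup]|$.

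A smaller issue: the factor $2$ cannot also absorb a closure loss, since it is used up when you add $\E_\D[\pup-f]\le\eps$ and $\E_\D[f-\pdown]\le\eps$. Instead, separate the point whose last coordinate is $\E_\D[f]+\eps+\eta$. The hyperplane is non-vertical: $\D$ itself lies in your convex set, so a vertical normal $\lambda$ would satisfy $\lambda\cdot b\le 0$ on the whole $\Delta$-box, forcing $\lambda=0$. This gives $p_\eta$ with the bounds above at level $\eps+\eta$. Then let $\eta\to 0$:
\begin{itemize}
\item the coefficient vectors stay in a compact set;
\item the constraints $p(\x)\ge f(\x)$ are closed under coefficientwise convergence;
\item $\E_\D[p]$ is continuous in the coefficients.
\end{itemize}
So a limit point gives $\E_\D[p-f]\le\eps$ exactly.
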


\bibliographystyle{alpha}
\bibliography{refs}

\newpage
\appendix

\section{Gaussian Surface Area and Rectifiable Boundaries}\label{section:GSA-rectifiable}

In \Cref{section:sandwiching-bounds-for-classes}, we provide sandwiching degree bounds for several fundamental concept classes. To this end, it is important to obtain sharp bounds on the boundary smoothness parameter. Prior work in geometric measure theory as well as learning theory (see, e.g., \cite{nazarov2003maximal,klivans2008learning}) has focused on the notion of Gaussian surface area, which is essentially an (one-sided) asymptotic version of the boundary smoothness parameter defined as follows.

\begin{definition}[Gaussian Surface Area]
    Let $f:\R^d\to \cube{}$. The Gaussian Surface Area of $f$ is defined as follows:
    \[
        \gsa(f) := \lim_{\rho \to 0^+} \frac{\E_{\x\sim\Gauss_d}[f^{+\rho}(\x)-f(\x)]}{2\rho}
    \]
\end{definition}

In particular, the following result by Nazarov gives a sharp bound on the Gaussian surface area of halfspace intersections.

\begin{fact}[Nazarov, see \cite{klivans2008learning}]
    Let $f:\R^d\to \cube{}$ be an intersection of $k$ halfspaces. Then: 
    \[\gsa(f)\le \sqrt{2\ln k} + 2\]
\end{fact}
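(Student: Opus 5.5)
The statement to prove is the Fact (Nazarov): $\gsa(f)\le \sqrt{2\ln k}+2$ for an intersection of $k$ halfspaces. The plan is to reduce to a surface integral over the facets of the polyhedron $K=\{\x:\w_i\cdot\x\le\tau_i\ \forall i\}$. Then split the facets into "near" and "far" ones according to their distance from the origin, and bound each group separately.

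\textbf{Step 1: reduction to facets.} Since $K$ is convex with piecewise-flat boundary, the one-sided limit $\lim_{\rho\to0^+}\pr[\x\in K^{+\rho}\setminus K]/\rho$ equals $\int_{\partial K}\phi\,d\H^{d-1}$. Here $K^{+\rho}$ is the Minkowski sum of $K$ with the ball of radius $\rho$, so $K^{+\rho}\setminus K$ is a shell of width $\rho$ whose volume is, to first order, $\rho\,\H^{d-1}(\partial K)$ in the Gaussian-weighted sense (Steiner-type formula for convex bodies). I will take this identity as standard. The boundary is covered by the facets $F_i=\partial K\cap\{\w_i\cdot\x=\tau_i\}$, so
\[
\gsa(f)\le \sum_{i}\int_{F_i}\phi\,d\H^{d-1}.
\]

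\textbf{Step 2: bound for a single facet.} For one facet, $\int_{F_i}\phi\,d\H^{d-1}\le \phi_1(\tau_i)\,\pr_{\x\sim\Gauss_d}[\x\in\pi_i(F_i)]$. Here $\phi_1$ is the one-dimensional density and $\pi_i$ is projection onto $\w_i^\perp$; this follows by factoring the Gaussian along $\w_i$. Translating inward, the hyperplane at offset $\tau_i$ is the image of the facet. Facets with $\tau_i\le r$ for a threshold $r$ will be handled by a volume/divergence argument; facets with $\tau_i>r$ are far.

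\textbf{Step 3: near facets via the divergence theorem.} Take $r=\sqrt{2\ln k}$. For the facets with $\tau_i>r$, use the crude bound $\int_{F_i}\phi\le\phi_1(\tau_i)\le \phi_1(r)=1/(k\sqrt{2\pi})$. Summed over at most $k$ facets, this gives $O(1)$.

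For the facets with $\tau_i\le r$ (including negative $\tau_i$), apply the Gaussian divergence theorem to the vector field $\x\mapsto\phi(\x)\x$ over $K$ (truncated to a large ball). This gives
\[
\sum_i\tau_i\int_{F_i}\phi=\int_K(d-\|\x\|^2)\phi.
\]
That identity controls the signed sum, and is the delicate point. Nazarov's actual argument instead uses the field $\x\mapsto \phi(\x)\,\x/\|\x\|$-like choices, or splits by $\tau_i\le 1$ versus $1<\tau_i\le r$. For the facets with $1<\tau_i\le r$, bound $\sum\int_{F_i}\phi\le \sum\tau_i\int_{F_i}\phi/1\le \E[\|\x\|\,\ind\{\x\notin\text{inner region}\}]$-type quantities, using that the sets $\{\x+t\w_i:\x\in F_i,t>0\}$ are disjoint outside $K$. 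This disjointness (outward normal cones of facets of a convex set are disjoint) is the key geometric fact: it gives $\sum_i\int_{F_i}\phi(\x)\,\Phi^c$-type tail masses $\le 1$. Applying the one-dimensional inequality $\phi_1(t)\le (t+1)\,\pr[Z>t]\cdot O(1)$, i.e.\ a Mills ratio bound, then yields $\sum_{i:\tau_i\le r}\int_{F_i}\phi\le r+O(1)$.

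\textbf{Main obstacle.} The main obstacle is this last step: turning the disjointness of the outer normal regions into the bound $\sum_{\tau_i\le r}\int_{F_i}\phi\le r+2$ with exact constants. This needs the Mills-ratio estimate $\phi_1(t)/\pr[Z\ge t]\le t+1$ applied fiberwise along the normal rays, with care for negative $\tau_i$ where the ratio is at most $1$. I would first try to prove the bound fiberwise along each facet's normal ray and then sum using disjointness.
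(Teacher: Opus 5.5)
The paper does not prove this fact; it quotes it from Nazarov via Klivans--O'Donnell--Servedio. Measured against that standard argument, the plan you finally settle on is exactly Nazarov's proof and is correct. That plan bounds the far facets by $k\,\phi_1(\sqrt{2\ln k})=1/\sqrt{2\pi}$, and bounds the near facets using disjointness of the outer normal cylinders $C_i=\{\x+t\w_i:\x\in F_i,\ t>0\}$ together with a fiberwise Mills-ratio bound. What you call the ``main obstacle'' is a short computation, and several parts of your write-up should be removed or corrected.

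\textbf{Closing the near-facet step.} For $\x$ on the hyperplane $\w_i\cdot\x=\tau_i$ we have $\phi(\x+t\w_i)=\phi(\x)\,\phi_1(\tau_i+t)/\phi_1(\tau_i)$. Integrating over $t>0$ and $\x\in F_i$ gives the exact identity
\[
\pr_{\x\sim\Gauss_d}[\x\in C_i]=\frac{\bar\Phi(\tau_i)}{\phi_1(\tau_i)}\int_{F_i}\phi\,d\H^{d-1},\qquad \bar\Phi(t):=\pr[Z>t].
\]
The inverse Mills ratio $m(t)=\phi_1(t)/\bar\Phi(t)$ has three properties:
\begin{itemize}
\item it is increasing;
\item $m(t)<\tfrac12\bigl(t+\sqrt{t^2+4}\bigr)\le t+1$ for $t\ge 0$ (Birnbaum's inequality);
\item $m(t)\le m(0)=\sqrt{2/\pi}<1$ for $t\le 0$.
\end{itemize}
Hence every facet with $\tau_i\le r$ satisfies $\int_{F_i}\phi\le m(r)\,\pr[\x\in C_i]\le (r+1)\,\pr[\x\in C_i]$. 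The $C_i$ lie outside $K$ and are pairwise disjoint, because the nearest-point projection onto $K$ sends $\x+t\w_i$ to $\x$ (after discarding duplicate constraints). So the near facets contribute at most $r+1$. Adding $1/\sqrt{2\pi}$ from the far facets gives $r+1+1/\sqrt{2\pi}\le\sqrt{2\ln k}+2$.

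\textbf{What to drop or fix.}
\begin{itemize}
\item No split at $\tau_i=1$ and no $\E[\|\x\|\cdots]$-type quantity is needed.
\item The divergence identity $\sum_i\tau_i\int_{F_i}\phi=\int_K(d-\|\x\|^2)\phi$ is a dead end. It controls only a signed, dimension-dependent sum and cannot give a dimension-free bound.
\item The Mills bound must be used with no $O(1)$ factor. A bound $m(t)\le C(t+1)$ would only give $C\sqrt{2\ln k}+O(1)$, not the stated constant.
\item In Step 2, the factor should be the $(d-1)$-dimensional Gaussian measure of $\pi_i(F_i)$ on $\w_i^{\perp}$; under $\Gauss_d$ that set is null. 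The relation is an equality, and only its consequence $\int_{F_i}\phi\le\phi_1(\tau_i)$ is used.
\end{itemize}
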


In order to be able to transform bounds on the surface area to bounds on boundary smoothness, we require the following regularity condition from geometric measure theory.

\begin{definition}[Rectifiable Set \citep{federer1969geometric}]\label{definition:rectifiable}
    Let $K\subseteq \R^d$. We say that $K$ is $(d-1)$-rectifiable if there exists a countable collection of Lipschitz continuous maps $\{G_i:\R^{d-1}\to \R^d\}_{i}$ such that:
    \[
        \H^{d-1}\Bigr(K \setminus \bigcup_{i=1}^{\infty}G_i(\R^{d-1}) \Bigr) = 0\,,
    \]
    where $\H^{d-1}$ is the $(d-1)$-dimensional Hausdorff measure.
\end{definition}

The above condition defines rectifiable sets as the sets in $\R^d$ that effectively behave like countable unions of surfaces defined by Lipschitz continuous transformations from $\R^{d-1}$ to $\R^d$.

We will use a result from geometric measure theory that equates the Gaussian surface area of a concept to the Gaussian-weighted surface integral of its decision boundary, whenever the decision boundary is rectifiable. We provide the formal definition of the decision boundary, as well as the aforementioned result below.

\begin{definition}[Boundary of a Concept]\label{definition:boundary}
    Let $f:\R^d\to \cube{}$. We define the boundary $\partial f $ of $f$ to be the set of points $\x$ where for any $\rho>0$ there exists $\z\in\R^d$ with $\|\z\|_2\le \rho$ and $f(\x)\neq f(\x+\z)$.
\end{definition}

\begin{fact}[See \cite{federer1969geometric}]
    Let $f:\R^d\to \cube{}$ such that $\partial f$ is $(d-1)$-rectifiable. Then we have:
    \[
    \gsa(f) = \int_{\partial f} \phi(\x)\,d\H^{d-1}(\x)\,,\]
    where $\phi$ is the standard Gaussian density $\phi(\x) = (2\pi)^{-d/2} \exp(-\|\x\|_2^2/2)$ and $\H^{d-1}$ is the $(d-1)$-dimensional Hausdorff measure. Moreover, if $f$ indicates a convex set then $\partial f$ is $(d-1)$-rectifiable.
\end{fact}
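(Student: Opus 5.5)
The statement is a fact from geometric measure theory: for $f$ with $(d-1)$-rectifiable boundary, $\gsa(f)$ equals the Gaussian-weighted Hausdorff measure of $\partial f$, and convex sets have rectifiable boundary. I would treat the two claims separately and do the convexity claim first, since it is elementary.

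\textbf{Rectifiability for convex sets.} Let $K=\{f=1\}$ be convex. If $K$ has empty interior, then $K$ lies in an affine hyperplane, so $\partial f$ is contained in that hyperplane. A hyperplane is the image of a single linear (hence Lipschitz) map $\R^{d-1}\to\R^d$, so $\partial f$ is rectifiable. Otherwise, translate so that $B(0,r)\subseteq K$. Intersecting with balls $B(0,n)$ and taking a countable union reduces the problem to bounded convex bodies. For a bounded convex body containing $B(0,r)$, the radial map $u\mapsto \rho_K(u)u$ on $\S^{d-1}$ parametrizes $\partial K$ and is Lipschitz, because the radial function is Lipschitz when $K$ contains a ball around the origin. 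Composing with finitely many Lipschitz charts of the sphere from $\R^{d-1}$, which exist for example via stereographic projections, gives the required countable family $\{G_i\}$.

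\textbf{The surface-area identity.} I would invoke the Minkowski-content theorem in Federer (3.2.39 and its weighted variants). For a closed $(d-1)$-rectifiable set $E$, the $(d-1)$-dimensional Minkowski content $\lim_{\rho\to0}\frac{1}{2\rho}\mathcal L^d(\{\x:\dist(\x,E)\le\rho\})$ equals $\H^{d-1}(E)$. To get the weighted version with the continuous density $\phi$, I would localize. First cover $\R^d$ by small cubes on which $\phi$ is nearly constant. Then apply the unweighted theorem to $E\cap$ cube, controlling the overlaps at cube faces because they carry vanishing content. Finally use the Gaussian tails to discard everything outside a large ball. This yields
\[
\lim_{\rho\to0^+}\frac{1}{2\rho}\int_{\{\dist(\x,\partial f)\le\rho\}}\phi(\x)\,d\x=\int_{\partial f}\phi\,d\H^{d-1}.
\]

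\textbf{Matching this to $\gsa(f)$.} Note that $\frac12(f^{+\rho}-f)$ is the indicator of the outer half of the $\rho$-tube around $\partial f$, while $\gsa(f)$ divides this one-sided tube's mass by $2\rho$. So the two-sided limit above must be related to the one-sided one. At $\H^{d-1}$-a.e.\ point of a rectifiable boundary there is an approximate tangent plane, and blow-up shows that the outer and inner tubes each contribute asymptotically half of the two-sided tube. Hence the one-sided quantity equals half of the two-sided one.

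\textbf{Main obstacle.} I expect the main difficulty to be exactly this one-sided versus two-sided comparison, together with the normalization constant that comes with it. For general rectifiable boundaries it can fail: if $\partial f$ contains pieces with the same value of $f$ on both sides, the outer tube sees them on both sides. So I would first try to prove the identity for convex sets, or more generally for sets of finite perimeter with $\H^{d-1}(\partial f\setminus\partial^*f)=0$. For these I would use the one-sided Minkowski content results of Ambrosio, Colesanti and Villa, adapted to the Gaussian weight by the same localization as above. This is likely also where the paper's constant convention in the definition of $\gsa$ needs care.
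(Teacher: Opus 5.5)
The paper gives no argument for this Fact beyond citing Federer. Your suspicion is correct: the identity is false as stated, so it cannot be proved. Your diagnosis and both of your fallbacks still need correcting.

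\textbf{The constant is not the issue.} Since $f^{+\rho}-f\in\{0,2\}$, $\gsa(f)$ is the Gaussian mass of the outer $\rho$-tube divided by $\rho$, not by $2\rho$. For a halfspace $\{\w\cdot\x\le t\}$ both sides equal $\phi_1(t)$, the one-dimensional Gaussian density. So the paper's normalization is fine.

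\textbf{The real failure} is boundary on which $f$ takes the same value on both sides, and it can go either way. Let $H=\{\w\cdot\x=t\}$. If $f=1$ exactly on $H$, then $\partial f=H$ and $\int_{\partial f}\phi\,d\mathcal{H}^{d-1}=\phi_1(t)$. But the outer tube is the whole slab $\{0<|\w\cdot\x-t|\le\rho\}$, so $\gsa(f)=2\phi_1(t)$. If instead $f=-1$ exactly on $H$, then $f^{+\rho}\equiv 1$ and $\gsa(f)=0$. The first $f$ indicates a convex set, so your fallback ``prove it for convex sets'' is still false. You need nonempty interior; convex sets of dimension at most $d-2$ are harmless, since both sides vanish.

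\textbf{Your Minkowski-content step misquotes Federer 3.2.39.} There, ``rectifiable'' means a Lipschitz image of a \emph{bounded} subset of $\R^{d-1}$. The paper's definition only asks for a countable cover up to an $\mathcal{H}^{d-1}$-null set, and for such closed sets Minkowski content and Hausdorff measure can disagree arbitrarily. For example, in $\R^2$ let $S=\{0\}\cup\bigcup_n S_n$, where $S_n$ is a grid of spacing $s_n^3$ filling a square $Q_n\subset B(0,1)$ of side $s_n\to0$, with the squares disjoint and accumulating only at $0$. Then $S$ is closed and countable, hence rectifiable in the paper's sense with $\mathcal{H}^1(S)=0$. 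Yet for $f=1$ exactly on $S$ we have $\partial f=S$ and $\E[f^{+\rho}-f]/(2\rho)\gtrsim s_n^2/s_n^3\to\infty$ along $\rho=s_n^3$. This $f$ has perimeter $0$ and $\mathcal{H}^1(\partial f\setminus\partial^*f)=0$. So your finite-perimeter fallback also needs a regularity hypothesis on $\partial f$, such as the uniform lower density bound assumed by Ambrosio--Colesanti--Villa.

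\textbf{What does work.} Your rectifiability argument for convex sets is correct. For convex $K=\{f=1\}$ with nonempty interior your plan is workable: bounded pieces of $\partial K$ are bounded Lipschitz images, and $\mathcal{H}^{d-1}$-a.e.\ boundary point has $K$ on exactly one side.

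A shorter route avoids Minkowski content and the one-sided versus two-sided blow-up. Let $u=\dist(\cdot,\bar K)$, which is $1$-Lipschitz with $\|\nabla u\|_2=1$ off $\bar K$. Since $\Gauss_d(\partial K)=0$, the coarea formula gives $\frac{1}{2\rho}\E[f^{+\rho}-f]=\frac{1}{\rho}\int_0^\rho\int_{\partial K_t}\phi\,d\mathcal{H}^{d-1}\,dt$ with $K_t=\{u\le t\}$ convex. It then remains to show that $t\mapsto\int_{\partial K_t}\phi\,d\mathcal{H}^{d-1}$ is continuous at $t=0$. Use the local Steiner formula on bounded regions, and control the tails by $\mathcal{H}^{d-1}(\partial K_t\cap B(0,r))\le\mathcal{H}^{d-1}(\partial B(0,r))$. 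This mirrors the coarea argument the paper itself uses for halfspace intersections. It also covers everything the paper needs: a halfspace intersection with empty interior lies in a hyperplane, so it trivially has $\int_{\partial f}\phi\,d\mathcal{H}^{d-1}\le 1/\sqrt{2\pi}$.
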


\section{Polynomial Concentration under Subexponential Distributions}\label{section:concentration}

Throughout this work, we have focused on distributions that are strictly subexponential. For our applications to learning theory, we use the following fact regarding the uniform convergence for polynomials with bounded coefficients, with respect to subexponential distributions.

\begin{fact}\label{fact:polynomial-concentration}
    Let $\P(\ell,B)$ be the family of polynomials $p$ of degree at most $\ell$ and $\|p\|_\coef \le B$. Let $\D$ be some distribution over $\R^d$ that is subexponential and let $\bar\D$ be any labeled distribution over $\R^d\times\cube{}$ whose $\R^d$-marginal is $\D$. Then, the following holds with probability at least $1-\delta$ over a set $\bar S$ of $m \ge (B/\eps)^{O(1)} (d\cdot \log 1/\delta)^{\widetilde{O}(\ell)}$ samples drawn independently from $\bar \D$:
    \[
        \biggr| \E_{(\x,y)\sim\bar\D}\Bigr[ (y-p(\x))^2 \Bigr] - \E_{(\x,y)\sim \bar S} \Bigr[ (y-p(\x))^2 \Bigr] \biggr| \le \eps \,,\quad \text{for all }p\in\P
    \]
\end{fact}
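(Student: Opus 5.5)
The plan is to prove the uniform convergence statement of \Cref{fact:polynomial-concentration} by a truncation-plus-moment argument. We reduce it to concentration of empirical moments, using that every $p\in\P(\ell,B)$ is a combination of monomials of degree at most $\ell$ with coefficients of total absolute value at most $B$.

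\emph{Step 1: reduction to monomials.} Expand
\[
(y-p(\x))^2 = 1 - 2y\,p(\x) + p(\x)^2 .
\]
Here $p(\x)^2$ has degree at most $2\ell$ and $\|p^2\|_\coef\le B^2$, and $y\,p(\x)$ is a signed combination of the functions $y\,\x^\I$ with total weight at most $B$. The deviation between the population and empirical losses is therefore at most
\[
3B^2\cdot\max_{\I,\,\|\I\|_1\le 2\ell}\Big(\big|\E_{\D}[\x^\I]-\E_{S}[\x^\I]\big|,\ \big|\E_{\bar\D}[y\x^\I]-\E_{\bar S}[y\x^\I]\big|\Big).
\]
This is a finite family of at most $2(d+1)^{2\ell}$ test functions. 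It suffices to make each deviation at most $\eps/(3B^2)$ with failure probability $\delta/(2(d+1)^{2\ell})$ and take a union bound. Since $|y|=1$, the label does not increase moments, so both families are handled the same way.

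\emph{Step 2: single-monomial concentration.} For a fixed monomial $g(\x)=\x^\I$ (possibly times $y$) with $\|\I\|_1\le 2\ell$, apply AM--GM: $|\x^\I|\le \frac{1}{2\ell}\sum_i |x_i|^{2\ell}$, or simply $|\x^\I|\le\|\x\|_\infty^{2\ell}$. Subexponential tails of each coordinate give
\[
\E\big[|g|^q\big]\le (Cq\ell)^{2q\ell}\quad\text{for every } q\ge1,
\]
so $g$ has Orlivcz-type tails of order $1/(2\ell)$: $\pr\big[|g|>t\big]\le d\,\alpha\exp\!\big(-\beta t^{1/(2\ell)}\big)$.

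There are two ways to finish. The first is to truncate at $T=(C\log(dm/\delta))^{2\ell}$: with probability $1-\delta/4$ no sample exceeds $T$, the bias $\E\big[|g|\ind\{|g|>T\}\big]$ is tiny by Cauchy--Schwarz with the moment bound, and Hoeffding on the truncated variable needs $m\gtrsim T^2B^4\eps^{-2}\log(\cdot)$ samples. The second is to use a Bernstein / moment-based inequality for heavy-tailed sums. Either way the sample size is
\[
m=(B/\eps)^{O(1)}\,(\log(d/\delta))^{O(\ell)}\,\ell^{O(\ell)},
\]
which is absorbed into $(B/\eps)^{O(1)}(d\log 1/\delta)^{\widetilde O(\ell)}$, the $\log d$ factor from the union bound being dominated by the $d^{\widetilde O(\ell)}$ term.

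\emph{Main obstacle.} The delicate part is Step 2. The losses are unbounded, so plain Hoeffding fails, and we must control the truncation bias and the failure probability simultaneously while the union bound over $d^{O(\ell)}$ monomials is in force. I would first attempt the truncation route, since it keeps everything explicit, and choose $T$ so that the tail probability per sample is $\delta/(4m\,d^{O(\ell)})$. The circular dependence of $T$ on $m$ only through $\log m$ is resolved by a standard fixed-point choice of $m$.
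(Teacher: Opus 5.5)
Your proposal is correct, and it takes a different and more elementary route than the one the paper sketches.

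\textbf{The paper's route.} The paper combines three ingredients: a thresholding argument for the unbounded squared loss (as in Lemma 59 of \cite{klivans2023testable}), a generalization bound for kernelized regression over the monomial feature space \cite{mohri2018foundations}, and \Cref{fact:monomial-concentration} (Marcinkiewicz--Zygmund plus Markov) to control the feature moments.

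\textbf{Your route.} You use that, because $y^2=1$, the loss $(y-p(\x))^2=1-2y\,p(\x)+p(\x)^2$ is itself a combination of the functions $\x^\I$ and $y\,\x^\I$ with coefficient norm at most $B^2+2B$. The $\ell_1$--$\ell_\infty$ duality between coefficients and monomial deviations then reduces uniform convergence over all of $\P(\ell,B)$ to a union bound over $O(d)^{2\ell}$ fixed test functions. This reduction holds for every sample, and needs no loss thresholding and no Rademacher or kernel machinery. Your second option for Step 2 is essentially the paper's \Cref{fact:monomial-concentration}. Taking $q\approx\log((d+1)^{2\ell}/\delta)$ in Marcinkiewicz--Zygmund removes the fixed-point issue of the truncation route.

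\textbf{What each buys.} Your route gives simplicity and a better dependence on $d$: $(\log(d/\delta))^{O(\ell)}\ell^{O(\ell)}$ instead of the stated $d^{\widetilde{O}(\ell)}$. This is because $d$ enters only through the logarithm of the number of monomials and the truncation level. The paper's route gives generality. A thresholding-plus-kernel argument also covers losses such as the absolute or clipped loss. Those are not bounded-coefficient polynomials in $p$, so your expansion trick does not reach them.

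\textbf{A slip to fix.} The unweighted bound $|\x^\I|\le\frac{1}{2\ell}\sum_i|x_i|^{2\ell}$ is false; take $\x^\I=x_1^{2\ell}$ with $\x=(t,0,\dots,0)$. Either of these replaces it:
\begin{itemize}
\item the weighted AM--GM inequality $|\x^\I|\le\sum_i\frac{\I_i}{\|\I\|_1}|x_i|^{\|\I\|_1}$ combined with Jensen, which gives $\E|\x^\I|^q\le(Cq\ell)^{2q\ell}$;
\item the bound $\max(1,\|\x\|_\infty)^{2\ell}$, which costs a harmless extra $\log d$ in the moment bound.
\end{itemize}
Also, for $B<1$ your prefactor $3B^2$ should read $3\max(B,B^2)$.
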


The proof of \Cref{fact:polynomial-concentration} is based on standard thresholding arguments (e.g., see the proof of Lemma 59 in \cite{klivans2023testable}) combined with classical generalization results for kernelized regression (see, e.g., \cite{mohri2018foundations}), as well as the following result on the concentration of monomials under subexponential distributions.

\begin{fact}\label{fact:monomial-concentration}
    Let $\D$ be some distribution over $\R^d$ that is subexponential. Then, the following holds with probability at least $1-\delta$ over a set $S$ of $m \ge (1/\eps)^{2} (\log 1/\delta)^{\widetilde{O}(\ell)}$ independent samples from $\D$:
    \[ \biggr| \E_{(\x,y)\sim\D}\Bigr[ \x^\I \Bigr] - \E_{(\x,y)\sim  S} \Bigr[ \x^\I\Bigr] \biggr| \le \eps \,,\quad \text{for all }\I\in\N^d: \|\I\|_1 \le \ell \]
\end{fact}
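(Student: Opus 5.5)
The plan is to prove the bound for a single monomial first, and then take a union bound over all monomials of degree at most $\ell$. Fix $\I\in\N^d$ with $\|\I\|_1=j\le \ell$ and let $X=\x^\I$ with $\x\sim\D$. The only structure I will use is the one-directional tail bound of \Cref{definition:subexponential}. Applied to the standard basis vectors, it gives for each coordinate and every $q\ge 1$ that $\|x_i\|_{\D,q}\le Cq$, with $C$ depending only on the subexponential parameters.

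\emph{Step 1: moment bounds for a monomial.} Write $X=\prod_{t=1}^{j}x_{i_t}$, listing indices with multiplicity, and apply the generalized Hölder inequality with $j$ equal exponents. This gives
\[
\|X\|_{\D,q}\le \prod_{t=1}^{j}\|x_{i_t}\|_{\D,jq}\le (Cjq)^{j}\le (C\ell q)^{\ell}
\]
for every $q\ge 1$, after normalizing so that $C\ell q\ge 1$. This is the key quantitative input. $X$ has only stretched-exponential tails, roughly $\exp(-t^{1/\ell})$, so Hoeffding or Bernstein bounds do not apply directly. We do, however, control all of its moments.

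\emph{Step 2: concentration of the empirical mean.} Let $Y_r=X^{(r)}-\E X$ be the centered i.i.d. copies, for $r=1,\dots,m$. By Rosenthal's inequality, or equivalently symmetrization followed by Khintchine's inequality, for $q\ge 2$
\[
\Bigl\|\tfrac1m\textstyle\sum_r Y_r\Bigr\|_q\le C'\Bigl(\sqrt{q/m}\,\|Y\|_2+\tfrac{q}{m^{1-1/q}}\|Y\|_q\Bigr)\le C''\sqrt{q/m}\,(C\ell q)^{\ell},
\]
where the last inequality uses $m\ge q$. By Markov's inequality applied to the $q$-th moment with $q=\lceil\log(1/\delta')\rceil$, the deviation exceeds $e\,C''\sqrt{q/m}(C\ell q)^\ell$ with probability at most $\delta'$. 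It is therefore at most $\eps$ once
\[
m\ge \eps^{-2}\,(C\ell\log(1/\delta'))^{O(\ell)}=\eps^{-2}(\log 1/\delta')^{\widetilde O(\ell)}.
\]

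\emph{Step 3: union bound.} There are at most $(d+1)^\ell$ multi-indices with $\|\I\|_1\le\ell$. Setting $\delta'=\delta/(d+1)^\ell$ makes the statement hold simultaneously for all of them with probability at least $1-\delta$. The resulting requirement is
\[
m\ge\eps^{-2}\bigl(\ell\log(d+1)+\log(1/\delta)\bigr)^{\widetilde O(\ell)}.
\]
This matches the stated bound up to a $(\log d)^{\widetilde O(\ell)}$ factor. The statement as written does not show such a factor, and I would not expect to remove it with this union-bound argument; I read it as harmless, since it is dominated by the $d^{\widetilde O(\ell)}$ factors in \Cref{fact:polynomial-concentration} and the applications.

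The main obstacle is Step 2. The per-monomial variable is heavy-tailed, so the moment parameter $q$ in Step 2 must be tuned to $\log(1/\delta')$. Doing so produces the $(\log 1/\delta)^{O(\ell)}$ factor rather than a clean $\log(1/\delta)$ factor. If Rosenthal's inequality is inconvenient, I would instead truncate $X$ at level $\tau=(C\ell\log(m/\delta'))^\ell$ and apply Hoeffding's inequality to the truncated variable. By Step 1, the bias from truncation is at most $\|X\|_2\Pr[|X|>\tau]^{1/2}$, which is negligible, and the probability that any sample exceeds $\tau$ is at most $\delta'$. This yields the same sample complexity.
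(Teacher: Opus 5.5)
Your proposal is correct and takes essentially the paper's route: the paper bounds the $q$-th moment of the empirical mean with the Marcinkiewicz--Zygmund inequality, which directly gives the single-term bound $2^{1/q}\sqrt{q/m}\,\|Y\|_q$ and so makes your two-term Rosenthal estimate and the $m\ge q$ step unnecessary, and then applies Markov's inequality with $q\approx\log(1/\delta)$. Your extra details fill in what the paper's sketch omits: generalized H\"older for mixed monomials (the paper only mentions $x_i^\ell$) and the explicit union bound. The $(\log d)^{\widetilde{O}(\ell)}$ factor you flag is genuine rather than an artifact, since already for $\D=\Gauss_d$ and $\ell=1$ the $d$ empirical coordinate means are independent $\Gauss(0,1/m)$, which forces $m\gtrsim\eps^{-2}\log d$; the statement should therefore be read with such a factor, as you do.
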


The proof of \Cref{fact:monomial-concentration} is an application of the following result from probability theory.

\begin{theorem}[Marcinkiewicz–Zygmund, see \cite{FERGER201496marcinkiewicz}]\label{theorem:marcinkiewicz}
    Let $\D$ be a distribution over the reals, let $q\ge 2$, and let $S$ be a set of $m$ i.i.d. examples from $\D$. We have the following:
    \[
        \E_{S\sim \D^{\otimes m}}\biggr[ \biggr| \frac{1}{m}\sum_{x\in S}\Bigr(x - \E_{x'\sim\D}[x']\Bigr) \biggr|^q \biggr] \le \frac{2\, q^{q/2}}{m^{q/2}} \E_{x\sim \D}\biggr[ \biggr| x - \E_{x'\sim\D}[x']\biggr|^{q} \biggr]
    \]
\end{theorem}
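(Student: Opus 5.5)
We will prove in fact the slightly stronger bound with constant $(q-1)^{q/2}\le q^{q/2}\le 2q^{q/2}$. The approach is a second-order Taylor argument for the map $t\mapsto \|X+tY\|_q^2$, which expresses the $2$-uniform smoothness of $L_q$ for $q\ge 2$. This avoids symmetrization, which is what loses the factor $2^q$. Write $Y_i = x_i - \E_{x'\sim\D}[x']$ for the centered samples and $S_m=\sum_{i\le m}Y_i$. The claim is equivalent to
\[
\E|S_m|^q \le 2q^{q/2} m^{q/2}\,\E|Y_1|^q .
\]
If $\E|Y_1|^q=\infty$ there is nothing to prove, so we assume all $q$-th moments are finite.

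\textbf{Key lemma.} Let $X,Y$ be independent real random variables with finite $q$-th moments and $\E[Y]=0$, and let $q\ge2$. Then
\[
\|X+Y\|_q^2\le \|X\|_q^2+(q-1)\|Y\|_q^2 .
\]
To prove it, set $Z_t=X+tY$, $N(t)=\E|Z_t|^q$ and $f(t)=N(t)^{2/q}$. Since $u\mapsto|u|^q$ is $C^2$ for $q\ge2$ and all moments are finite, dominated convergence allows differentiation under the expectation. This gives
\[
N'(t)=q\,\E\bigl[|Z_t|^{q-2}Z_tY\bigr],\qquad N''(t)=q(q-1)\,\E\bigl[|Z_t|^{q-2}Y^2\bigr].
\]
At $t=0$, independence and $\E[Y]=0$ give $N'(0)=q\,\E[|X|^{q-2}X]\,\E[Y]=0$, hence $f'(0)=0$. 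Next,
\[
f''=\tfrac2q\bigl(\tfrac2q-1\bigr)N^{2/q-2}(N')^2+\tfrac2q N^{2/q-1}N''.
\]
The first term is $\le0$ because $q\ge2$. Hölder's inequality with exponents $q/(q-2)$ and $q/2$ gives $N''\le q(q-1)N^{(q-2)/q}\|Y\|_q^2$, and therefore $f''(t)\le 2(q-1)\|Y\|_q^2$ for all $t$. If $N(t)=0$ somewhere, the bound is handled by continuity or by perturbing $X$ slightly. Taylor's theorem on $[0,1]$ then yields $f(1)\le f(0)+(q-1)\|Y\|_q^2$, which is the lemma.

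\textbf{Induction.} Apply the lemma with $X=S_{j-1}$ and $Y=Y_j$. These are independent, and $Y_j$ is centered. This gives $\|S_j\|_q^2\le\|S_{j-1}\|_q^2+(q-1)\|Y_1\|_q^2$, so $\|S_m\|_q^2\le m(q-1)\|Y_1\|_q^2$. Raising to the power $q/2$ and dividing by $m^q$ gives
\[
\E\Bigl|\tfrac1m S_m\Bigr|^q\le \frac{(q-1)^{q/2}}{m^{q/2}}\E|Y_1|^q\le\frac{2q^{q/2}}{m^{q/2}}\E_{x\sim\D}\bigl|x-\E x'\bigr|^q,
\]
as stated.

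\textbf{Main obstacle.} The delicate step is the second-derivative bound, together with justifying the differentiation. The sign of the first term of $f''$ uses $q\ge2$ essentially. The Hölder step must be applied with exactly the exponents $q/(q-2)$ and $q/2$ (degenerating gracefully when $q=2$, where $N''=2\E Y^2$ directly). Differentiability of $N$ follows from $\bigl|\partial_t|Z_t|^q\bigr|\le q(|X|+|Y|)^{q-1}|Y|$, which is integrable since the $q$-th moments are finite. The potential singularity of $N^{2/q-1}$ at $N=0$ only arises in the trivial case where $Z_t=0$ almost surely.
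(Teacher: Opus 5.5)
Your proof is correct. It necessarily differs from the paper's treatment, because the paper gives no proof of this statement. It quotes the inequality from the literature (the citation to Ferger) and uses it only, together with Markov's inequality, for the monomial concentration in \Cref{fact:monomial-concentration}, where any constant of the form $q^{O(q)}$ would do.

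Your argument is the $2$-uniform-smoothness (Pinelis/Rio-type) proof, and all of its steps check out. The lemma $\|X+Y\|_q^2\le\|X\|_q^2+(q-1)\|Y\|_q^2$ holds: the first term of $f''$ is nonpositive for $q\ge 2$, the H\"older step with exponents $q/(q-2)$ and $q/2$ is right, and so is the domination by $(|X|+|Y|)^{q-1}|Y|$ and $(|X|+|Y|)^{q-2}Y^2$. Telescoping then gives $\E|S_m|^q\le((q-1)m)^{q/2}\E|Y_1|^q$. The resulting constant $(q-1)^{q/2}$ is below the stated $2q^{q/2}$ and is exact at $q=2$.

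Compared with the classical symmetrization-plus-Khintchine route, which pays a factor $2^q$ for symmetrization, your approach has three advantages. It is self-contained, it gives a sharper constant, and it is more general. Independence enters only through $N'(0)=q\,\E[|X|^{q-2}X\,Y]=0$. So the same induction works verbatim for non-identically distributed summands, and even for martingale differences with $\E[Y_j\mid S_{j-1}]=0$. The price is the analytic bookkeeping you already identified.

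One small tightening there: the degenerate case needs no perturbation. If $N(t)=0$ for some $t\in(0,1]$, then $X=-tY$ almost surely with $X$ and $Y$ independent. This forces $Y\equiv\E[Y]=0$ and hence $X=0$. If $X=0$ almost surely, the lemma reads $\|Y\|_q^2\le(q-1)\|Y\|_q^2$, which is trivial. Otherwise $N>0$ on $[0,1]$, so $f=N^{2/q}$ is $C^2$ there and Taylor's theorem applies directly.
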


In particular, to obtain \Cref{fact:monomial-concentration}, we may use Markov's inequality on the random variable $|x_i^{\ell}-\E[x_i^\ell]|^q$, where $\x$ follows some subexponential distribution, and then apply \Cref{theorem:marcinkiewicz} on the random variable $x = x_i^{\ell}$.

\end{document}